\newcommand{\COMMENTCOLOR}{black}
\newcommand{\COMMENTEPQD}{black}
\newcommand{\COMMENTBXSEVENE}{black}
\newcommand{\CAMERAREADY}{black}
\renewcommand{\comment}[1]{}
\definecolor{colorY}{rgb}{0.7 , 0.7 , 0.2}
\title{Algorithm Configuration for Structured Pfaffian Settings}
\author{
 Maria-Florina Balcan \\
  Carnegie Mellon University \\
  \texttt{ninamf@cs.cmu.edu} \\
  \and
  Anh Tuan Nguyen \\
  Carnegie Mellon University \\
  \texttt{atnguyen@cs.cmu.edu} \\
  \and
  Dravyansh Sharma \\
  Toyota Technological Institute at Chicago\footnote{Most of the work was done when DS was at CMU.}\\
  \texttt{dravy@ttic.edu} \\
}
\date{}
\begin{document}
\setlength{\parskip}{0.5em}    
\setlength{\parindent}{0pt}  

\maketitle

\begin{abstract}
    \setlength{\parskip}{0.5em}    
    \setlength{\parindent}{0pt}

    \noindent {\color{\COMMENTCOLOR} Data-driven algorithm design uses historical problem instances to automatically adjust and optimize algorithms to their application domain, typically by selecting algorithms from parameterized families}. {\color{\COMMENTCOLOR} While 
    the approach has been highly successful in practice}, providing theoretical guarantees for several 
    algorithmic families remains challenging. {\color{\COMMENTCOLOR}This is due to the intricate dependence of the algorithmic performance on the parameters, often exhibiting a piecewise discontinuous structure.} 
    In this work, we present {\color{\CAMERAREADY} new} frameworks for providing learning guarantees for parameterized data-driven algorithm design problems in both statistical and online learning settings.

    For the {\color{\CAMERAREADY} statistical} learning setting, we introduce the \textit{Pfaffian {\color{\COMMENTCOLOR}  GJ} framework}, an extension of the classical \textit{Goldberg-Jerrum (GJ) framework} \citep{bartlett2022generalization, goldberg1993bounding}, that is capable of providing learning guarantees for function classes for which the computation involves Pfaffian functions. Unlike the GJ framework, which is limited to function classes with computation characterized by rational functions ({quotients of two polynomials}), our proposed framework can deal with function classes involving Pfaffian functions, which are much more general and widely applicable. We then show that for many parameterized algorithms of interest, their utility function possesses a \textit{refined piecewise structure}, which automatically translates to learning guarantees using our proposed framework. 
    
    For the online learning setting, we provide a new tool for verifying the \textit{dispersion} property of a sequence of loss functions, a sufficient condition that allows no-regret learning for sequences of piecewise structured loss functions where the piecewise structure involves Pfaffian transition boundaries. We  use our framework to provide novel learning guarantees for many challenging data-driven design problems of interest, including data-driven linkage-based clustering,  graph-based semi-supervised learning, and  regularized logistic regression. 
\end{abstract}

\newpage

\tableofcontents

\newpage 
\section{Introduction}
{\it Data-driven algorithm design} \citep{ailon2011self, gupta2016pac, balcan2020data} is a modern approach that develops and analyzes algorithms based on the assumption that problem instances come from an underlying application domain. {\color{\COMMENTCOLOR} For example, a news website might have its own patterns in how { bots and humans} interact with their content, and this distinctive information can be leveraged to tune clustering algorithms that distinguish automated crawls from human access patterns}. Unlike traditional worst-case or average-case analyses, this approach leverages observed problem instances to design algorithms that achieve high performance for specific problem domains. {For many applications, say combinatorial partitioning problems \citep{balcan2017learning} or mixed integer linear programming \citep{balcan2018learning}, algorithms are often parameterized, meaning they are equipped with tunable hyperparameters which significantly influence their performance. 
 We develop general techniques for establishing learning guarantees for data-driven algorithm selection by learning the parameters from historical problem instances. Prior approaches for obtaining concrete learning guarantees based on the classical {Goldberg-Jerrum (GJ) \citep{goldberg1993bounding,bartlett2022generalization} framework are limited} to families where, roughly speaking, the algorithmic performance as {a function of the hyperparameters} has a piecewise polynomial structure. Our techniques apply to a larger variety of parameterized algorithm families characterized by piecewise Pfaffian functions, { which is a broader class of functions that also includes exponential and logarithmic functions.}

Applications of data-driven algorithm design span various fundamental areas. These include low-rank approximation \citep{indyk2019learning, indyk2021few, li2023learning}, sparse linear systems solvers \citep{luz2020learning}, dimensionality reduction \citep{ailon2021sparse}, and many more fundamental problems{}. This empirical success underscores the necessity for a theoretical understanding of this approach. Intensive efforts have been made towards theoretical understanding for data-driven algorithm design, including learning guarantees for numerical linear algebra methods \citep{bartlett2022generalization}, tuning regularization hyperparameters for regression problems \citep{balcan2022provably, balcan2024new}, unsupervised and semi-supervised learning \citep{balcan2018data,balcan2020learning, balcan2021data}, { applications to integer and mixed-integer programming} \citep{balcan2018learning, balcan2021sample}.

Prior theoretical work on data-driven algorithm design focuses on two main settings: {\color{\CAMERAREADY} statistical} learning \citep{balcan2021much, bartlett2022generalization} and online learning \citep{balcan2018dispersion}. In the {\color{\CAMERAREADY} statistical} learning setting, there is a learner trying to optimize hyperparameters for the algorithm within a specific domain. The learner has access to problem instances from that domain, 
{\color{\COMMENTCOLOR} which are assumed to be drawn from a fixed but unknown problem distribution}. In this case, a key question is understanding the sample complexity, i.e. answering the following question: how many problem instances are required {\color{\COMMENTCOLOR} to learn hyperparameters that yield good expected performance on future unseen problem instances?} In the online learning setting, there is a sequence of problem instances chosen by an adversary arriving over time. The goal now is to design a no-regret learning algorithm: adjusting the algorithm's hyperparameters on the fly so that the difference between the average utility of the learner and the utility corresponding to the best {fixed hyperparameters} in hindsight, { that is, the learner's regret, diminishes over time.} We develop tools to answer both these questions, applicable to a broad class of problems.

{\bf Technical formulation.} Typically, the performance of an algorithm is evaluated by a specific utility function, measuring its run-time, memory, {or solution quality, for example}. {More formally}, for an algorithm with input instance space $\mathcal{X}$ and parameterized by a parameter class $\cA$, consider the utility function class $\cU = \{u_{\boldsymbol{a}}: \cX \rightarrow [0, H] \mid {\boldsymbol{a}} \in \cA\}$, where $u_{\boldsymbol{a}}(\boldsymbol{x})$ gauges the performance of the algorithm with hyperparameters $\boldsymbol{a}$ when {\color{\COMMENTCOLOR} run on} a problem instance $\boldsymbol{x} \in \cX$, { and $H$ is an upper-bound for the utility value}. In the statistical learning data-driven algorithm design setting, we assume an unknown underlying distribution $\cD$ over $\cX$, representing the application domain on which the algorithm operates. In this setting, designing parameterized algorithms tailored to a specific domain corresponds to the optimal selection of hyperparameters ${\boldsymbol{a}}$ for the given application-specific distribution $\cD$.

The main challenge in establishing learning guarantees for the utility function classes lies in the complex structure of the utility function. In other words, even a minimal perturbation in $\boldsymbol{a}$ can lead to a drastic change in the performance of the algorithm, making the analysis of such classes of utility functions particularly challenging. In response to this challenge, prior work takes an alternative approach by analyzing the dual utility function class $\cU^* = \{u^*_{\boldsymbol{x}}: \cA \rightarrow [0, H] \mid \boldsymbol{x} \in \cX\}$. Each function in the dual function class often admits piecewise structured behavior \citep{balcan2021much, balcan2017learning, bartlett2022generalization}.

Building upon this observation, in the {\color{\CAMERAREADY} statistical} learning setting, \citet{balcan2021much} propose a general approach that analyzes the learnability of the utility function class via the learning-theoretic complexity of the piece and boundary function classes induced by the  piecewise structure of the dual. {\color{\CAMERAREADY}Approaching this from an alternative perspective}, \citet{bartlett2022generalization} introduced a new version of the \textit{GJ framework}, where the piecewise structure of the dual function is expressed as a tree of computations involving fundamental arithmetic operations at the nodes and branches according to simple conditional statements. The complexity of the operations and the expressions that appear across the tree are shown to be related to the sample complexity of parameter tuning. 
Despite their broad applicability, these general frameworks have inherent limitations. In the {\color{\CAMERAREADY} statistical} learning setting, the  framework introduced by \citet{balcan2021much} reduces the problem of computing the learning-theoretic complexity of a piecewise structured utility function to bounding the complexity of the corresponding piece and boundary functions, { but this might be challenging to compute for certain function classes (see e.g., \citealt{bartlett2022generalization})}. {\color{\CAMERAREADY}On the other hand, the GJ framework }instantiated by \citet{bartlett2022generalization} is limited to the cases where the computation of utility functions only involves rational functions (i.e., ratios of polynomials) of the hyperparameters.

In the online learning setting, prior work has similar limitations.  \citet{balcan2018dispersion} introduce  \textit{dispersion}, which  is a sufficient condition for no-regret learning of piecewise Lipschitz functions. Essentially, the dispersion property implies that if the discontinuities of utility function sequences do not densely concentrate in any small region of the hyperparameter space, then no-regret learning is possible.   However, the dispersion property is generally challenging to verify \citep{balcan2018dispersion, balcan2020semi, balcan2021data}, and requires further assumptions on the discontinuities of the utility function sequence. Moreover, when the form of the discontinuities goes beyond affine and rational functions, prior  techniques for verifying dispersion no longer apply.\looseness-1

Motivated by the limitations of prior research, part of this work aims to present theoretical frameworks for data-driven algorithm design  {when the utility function admits a specific structure}. In the {\color{\CAMERAREADY} statistical} learning setting, we introduce a powerful \textit{Pfaffian GJ framework} that can be used to establish learning guarantees for function classes whose discontinuity involves \textit{Pfaffian functions} {\color{\COMMENTCOLOR}(Definition \ref{def:pfaffian-function})}. Roughly speaking, the Pfaffian function class is a very general class of functions that captures a wide range of functions of interest, including rational, exponential, logarithmic, and combinations of these functions. Furthermore, we demonstrate that many data-driven algorithm design problems exhibit a specific \textit{Pfaffian piecewise structure}, which, when combined with the Pfaffian GJ framework, can establish learning guarantees for these problems. In the online learning setting, we introduce a novel tool to verify the dispersion property, where the discontinuities of the utility function sequence involve Pfaffian functions, going beyond affine and rational functions.

Another aim of this work is to provide learning guarantees for several under-investigated data-driven algorithm design problems, where the piecewise structure of the utility functions involves Pfaffian functions. The problems we consider have been  investigated in simpler settings, including data-driven agglomerative hierarchical clustering \citep{balcan2017learning, balcan2020learning}, data-driven semi-supervised learning \citep{balcan2021data}, and data-driven regularized logistic regression \citep{balcan2024new}. However, previous investigations have limitations: they either have missing results for natural extensions of the settings under study \citep{balcan2017learning, balcan2021data}, require strong assumptions \citep{balcan2020learning}, or solely consider {\color{\CAMERAREADY} statistical} learning settings \citep{balcan2024new}. Moreover, we emphasize that \textit{the techniques used in prior work are insufficient and cannot be applied in our settings}, which involve Pfaffian function analysis.

By carefully analyzing the utility functions associated with these problems, we uncover their underlying Pfaffian structure and carefully control the corresponding Pfaffian complexity, which allows us to leverage our proposed framework to establish learning guarantees. It is important to note that analyzing the Pfaffian structure for specific problems poses a significant challenge. A loose estimation of the Pfaffian function complexity when combined with our proposed framework would still lead to weak learning guarantees.

{\color{\CAMERAREADY}\paragraph{Overview of the Pfaffian GJ framework.}  A {\it Pfaffian GJ algorithm} takes as input real-valued parameters $\boldsymbol{a} \in \cA\subseteq\mathbb{R}^d$. Intuitively, the computation of the dual function $u^*_{\boldsymbol{x}}$ can be expressed via a tree using only certain kinds of operations (including Pfaffian functions). More precisely, for each input $\boldsymbol{x}\in\mathcal{X}$ and each real threshold $r$, one constructs a fixed binary tree $T_{\boldsymbol{x},r}$ that can compute $\sign(u^*_{\boldsymbol{x}}(\boldsymbol{a})-r)$ for each $\boldsymbol{a} \in \cA$. Each node $\nu$ of the tree corresponds to a fixed function evaluation $f_{\nu}$, involving binary arithmetic operations (i.e.\ one of $\{ +, -, \times, \div\}$) or a Pfaffian function, and the arguments include the parameters $\boldsymbol{a}=(a_1,\dots,a_d)$ and previously computed values on the path from the root to the node. The left and right children of ${\nu}$ correspond to $f_{\nu}\ge 0$ and $f_{\nu}<0$ respectively. The tree can be used to compute  $\sign(u^*_{\boldsymbol{x}}(\boldsymbol{a})-r)$  for any  $\boldsymbol{a} \in \cA$ by evaluating a series of expressions along some root-to-leaf path (see Figure \ref{fig:pfaffian-GJ-computation} for a simple illustration). The Pfaffian GJ framework then corresponds to the following recipe for bounding the learning-theoretic complexity of $\mathcal{U}$: Express the computation of the dual function $u^*_{\boldsymbol{x}}$ as a tree $T_{\boldsymbol{x},r}$ for any $\boldsymbol{x},r$ and give bounds on the complexity of the tree ({in terms of the number of distinct functional} expressions across all nodes as well as the worst-case complexity of any functional expression). This automatically yields a bound on the sample complexity of tuning the parameters $\boldsymbol{a}$ using our results.

\begin{figure}[H]
\label{fig:pfaffian-gj}
    \includegraphics[width=\linewidth]{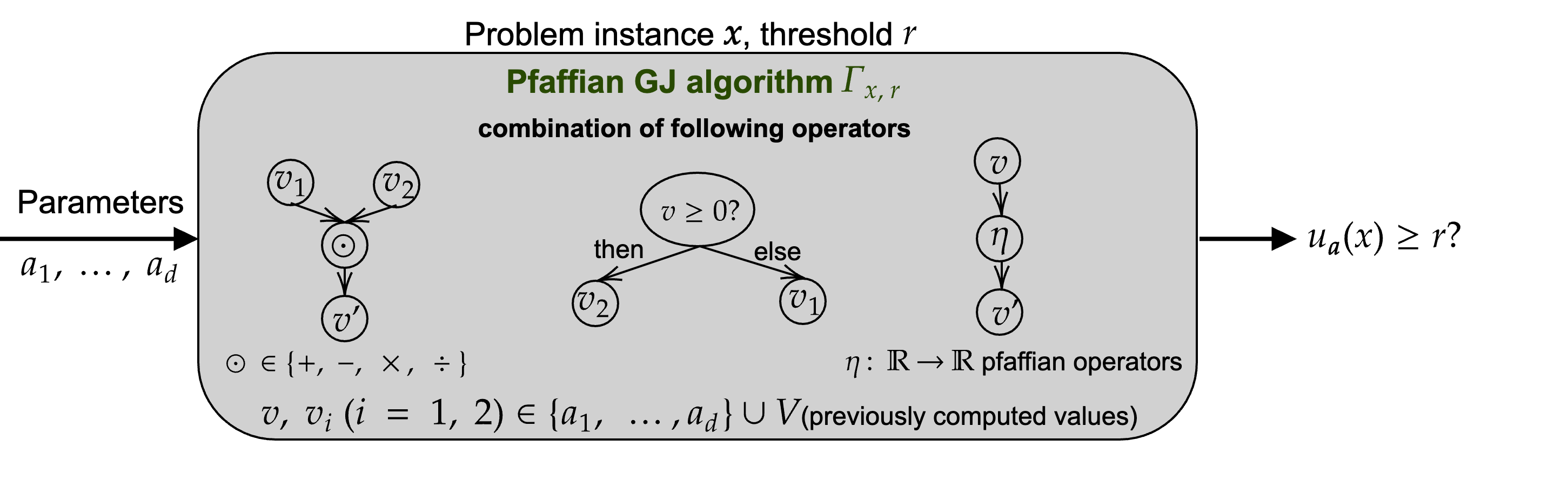}
    \caption{\color{\CAMERAREADY} A simple illustration of the general idea of the Pfaffian GJ framework. Given a problem instance $\boldsymbol{x}$ and a real-valued threshold $r$, a Pfaffian GJ algorithm $\Gamma_{\boldsymbol{x}, r}$ takes as the inputs any possible parameters $\boldsymbol{a}=(a_{1} ,\ \dotsc ,a_{d})$ and outputs if $u_{\boldsymbol{a}}(\boldsymbol{x}) \geq r$, by combining basic arithmetic operators, Pfaffian functions, and conditional statements. If we can bound the complexity of $\Gamma_{\boldsymbol{x}, r}$, our results imply that we can bound the pseudo-dimension of the utility function class $\cU$.}
    \label{fig:pfaffian-GJ-computation}
\end{figure}

}

\paragraph{Contributions.} In this work, we provide a {\color{\CAMERAREADY} new} framework for theoretical analysis of data-driven algorithm design problems. We then investigate many under-investigated data-driven algorithm design problems, analyzing their underlying problem structure, and then leveraging our newly proposed framework to provide learning guarantees for those problems. Concretely, 
\begin{itemize}[leftmargin=*,topsep=0pt,partopsep=1ex,parsep=1ex]

    \item We present the Pfaffian GJ framework (Definition \ref{def:paffian-gj-algorithm}, \autoref{thm:pfaffian-gj-algorithm}), a general approach for analyzing the pseudo-dimension of various function classes of interest in the data-driven setting. This framework draws inspiration from the refined version of the GJ framework introduced by \citet{bartlett2022generalization}. However, in contrast to the conventional GJ framework which is only capable of handling computation related to rational functions, the Pfaffian GJ framework can handle computations that involve Pfaffian functions {\color{\COMMENTCOLOR}(Definition \ref{def:pfaffian-function})}---a much broader function class---significantly increasing 
    its 
    applicability (see Figure \ref{fig:pfaffian-GJ-computation} for an illustration). We note that our proposed Pfaffian GJ framework is of independent interest and can be applied to other {\color{\COMMENTCOLOR} research areas} beyond data-driven algorithm design.

    \item For the {\color{\CAMERAREADY} statistical} learning setting, we introduce {\color{\CAMERAREADY} a refined} notion of piecewise structure (Definition \ref{def:pfaffian-piecewise-structure}, see Figure \ref{fig:pfaffian-piecewisestructure-illustration} for an example) for the dual utility function class, which applies whenever the piece and boundary functions are Pfaffian (see Figure \ref{fig:piecewise-to-pfaffianeb-gj-framework}). In contrast to the prior piecewise structure proposed by \citet{balcan2021much, bartlett2022generalization}, our framework can be used to obtain concrete learning guarantees when the piece and boundary functions belong to the  class of Pfaffian functions, which includes widely used utility functions, including the exponential and logarithmic functions. We then show how the {\color{\CAMERAREADY} refined} piecewise structure can be combined with the newly proposed Pfaffian GJ framework to provide learning guarantees (\autoref{thm:piecewise-pfaffian}) for problems that satisfy this property.

    \begin{figure}[H]
        \centering
        \includegraphics[width=0.44\linewidth]{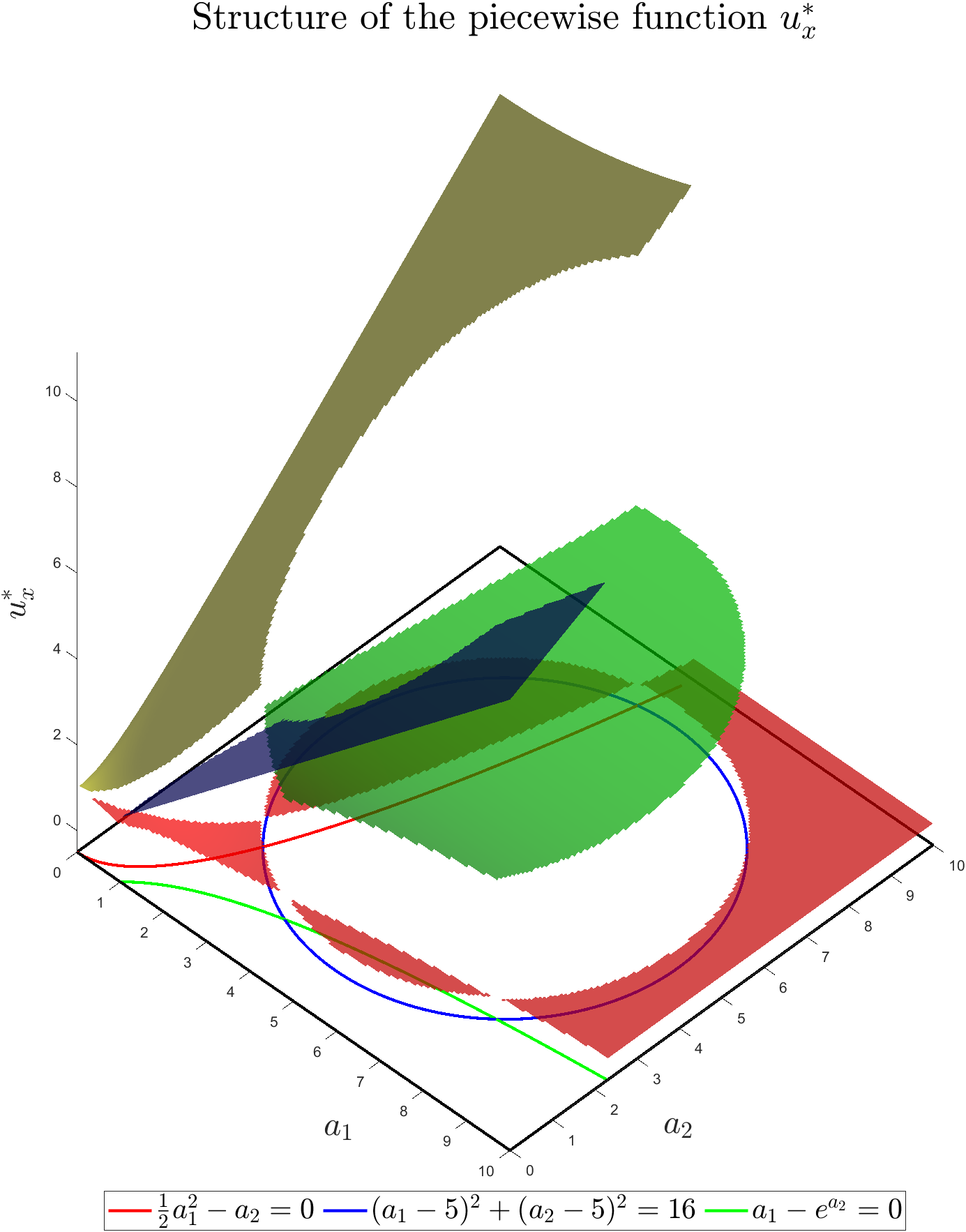}
        \caption{\color{\CAMERAREADY} A simple illustration of Pfaffian piecewise structure. For each problem instance $\boldsymbol{x}$, there are Pfaffian boundaries that partition the space of parameters $\boldsymbol{a}$ into regions. In each region, the value of $u^*_{\boldsymbol{x}}(\boldsymbol{a})$ is a Pfaffian function. 
        See Figure \ref{fig:piecewise} for more details.}
        \label{fig:pfaffian-piecewisestructure-illustration}
    \end{figure}

    \item For online data-driven algorithm design, we introduce a general approach (\autoref{thm:alg-hyp}, \autoref{thm:VC-bound-Pfaffian}) for verifying the \textit{dispersion} property \citep{balcan2018dispersion}—a sufficient condition for online learning in data-driven algorithm design. 
    Prior work \citep{balcan2020semi, balcan2021data} provides techniques for verifying dispersion only when the piece boundaries are algebraic functions. We significantly expand the class of functions for which online learning guarantees may be obtained by establishing a novel tool which applies for Pfaffian  boundary functions.

    \item We derive novel learning guarantees for a variety of understudied data-driven algorithm design problems, including data-driven agglomerative clustering (Theorem \ref{thm:pdim-clustering-1}), data-driven graph-based semi-supervised learning (\autoref{thm:pdim-graph-1}). We also recover the guarantee for data-driven regularized logistic regression in previous work by \citet{balcan2024new}. By carefully analyzing the underlying structure of the utility functions for these problems, we  convert them into the  form required in our proposed framework,  automatically yielding learning guarantees.

\end{itemize}

\section{Related work}
\paragraph{Data-driven algorithm design.} Data-driven algorithm design \citep{ailon2011self,gupta2016pac, balcan2020data} is a modern approach for automatically configuring algorithms and making them adaptive to specific application domains. In contrast to conventional algorithm design and analysis, which predominantly focuses on worst-case or average-case scenarios, data-driven algorithm design assumes the existence of an (unknown) underlying problem distribution that determines the problem instances that the algorithm encounters. The main objective is to identify the optimal configuration for the algorithm, leveraging available problem instances at hand drawn from the same application domain. Empirical work has consistently validated the effectiveness of data-driven algorithm approaches in various domains, including matrix low-rank approximation \citep{indyk2019learning, indyk2021few}, matrix sketching \citep{li2023learning}, mixed-integer linear programming \citep{cheng2024data}, among others. These findings underscore the significance of the data-driven algorithm design approach in real-world applications.

\paragraph{{\color{\CAMERAREADY} Statistical} learning guarantees for data-driven algorithm design.} A growing body of work  focuses on theoretically analyzing data-driven algorithm configuration by providing statistical generalization guarantees. This includes learning guarantees for data-driven algorithm design of low-rank approximation and sketching algorithms \citep{bartlett2022generalization,sakaue2023improved}, learning metrics for clustering \citep{balcan2020learning, balcan2021data, balcan2018data}, integer and mixed-integer linear programming \citep{balcan2021improved, balcan2018learning, cheng2024learning}, simulated annealing \citep{blum2021learning}, hyperparameter tuning for regularized regression \citep{balcan2022provably, balcan2024new}, decision tree learning \citep{balcan2024learning}, robust nearest-neighbors \citep{balcan2023analysis}, deep neural networks \citep{balcan2025sample}, pricing problems \citep{balcan2024newguarantees,xie2024vc} among others. 

\citet{balcan2021much} introduced a general framework for establishing learning guarantees for problems that admit a specific piecewise structure. Despite its broad applicability, the framework has certain limitations:  (1) it requires an intermediate task of analyzing the dual piece and boundary function classes, which can be non-trivial, and (2) naively applying the framework can lead to sub-optimal bounds {\color{\COMMENTCOLOR}  (see e.g. \citealp{balcan2020learning}, Lemma 7, and \citealp{bartlett2022generalization}, Appendix E.3)}. {\color{\CAMERAREADY} Recently,  \citet{bartlett2022generalization} developed a new approach for establishing generalization guarantees when the computation of the data-driven algorithm's dual utility function can be described by basic arithmetic operators ($+, -, \times, \div$) and conditional statements, by extending the classical GJ framework \citep{goldberg1993bounding}.}

\paragraph{Online learning guarantees for data-driven algorithm design.} Another line of work focuses on providing no-regret learning guarantees for data-driven algorithm design problems in online learning settings. This includes online learning guarantees for greedy knapsack, SDP-rounding for integer quadratic programming {\color{\COMMENTCOLOR}\citep{balcan2018dispersion}}, pricing problems \citep{balcan2024newguarantees}, and data-driven linkage-based clustering {\color{\COMMENTCOLOR}\citep{balcan2020semi}}. 

Online learning for data-driven algorithm design is generally a challenging task due to the discontinuity and piecewise structure of the utility functions. Most  prior work provides learning guarantees in this setting by verifying the {\it dispersion} of the utility function sequence, a sufficient condition proposed by \citet{balcan2018dispersion}, roughly stating that the discontinuity of the sequence is not highly concentrated in any small region of the hyperparameter domain. However, verifying the dispersion property is generally challenging. \citet{balcan2020semi} provided a tool for verifying the dispersion property, targeting cases where the discontinuity is described by the roots of random polynomials for one-dimensional hyperparameters or algebraic curves in the two-dimensional case. \citet{balcan2021data}  generalized this by providing a tool for discontinuities described by algebraic varieties in higher-dimensions. \citet{sharma2020learning} show no-regret online learning under dispersion with respect to more challenging dynamic baselines, and \citet{Sharma2024NoIR} establishes how to obtain low internal regret in the data-driven setting. Our dispersion based results in this work imply online learning in these settings as well, using their results. \citet{Sharma2025OfflinetoonlineHT} show how to tune hyperparameters in online bandit learning algorithms using data-driven algorithm design. \cite{Balcan2021LearningtolearnNP} establish connections between this line of work and online meta-learning.

\paragraph{Algorithms with predictions.} Another modern approach for designing learning-based algorithms is \textit{algorithms with predictions} \citep{mitzenmacher2022algorithms}, {\color{\COMMENTBXSEVENE}in which  predictions about certain aspects of the problem (provided either by machine learning models or human experts) are integrated at certain stages of algorithms to enhance their performance. }
The final performance  of these algorithms is closely tied to the quality of predictions. 
A higher quality of predictions generally correlates with improved algorithmic performance. Hence, the algorithm, now integrated with predictive models, is analyzed based on its inherent algorithmic performance and the quality of predictions. Algorithms with predictions have proved their efficacy in various classical problems, including support estimation \citep{eden2021learning}, page migration \citep{indyk2022online}, online matching, flows, and load balancing \citep{lavastida2020learnable}, among others \citep{khodak2022learning, lykouris2021competitive, wei2020optimal}. 

While having many similarities and overlapping traits, there is a fundamental distinction between the two approaches. Data-driven algorithm design primarily seeks to optimize algorithmic hyperparameters directly for specific application domains. On the other hand, algorithms with predictions aim to incorporate prediction in some stages, with the hope that it will improve the performance of the algorithms if the quality of the prediction is good. {\color{\COMMENTBXSEVENE} However, one still needs to decide 
for what aspect of the problem should a prediction be used and what the prediction should be 
on an input problem instance,
and these choices heavily affect the performance and properties of the algorithms.} 
It is worth noting that these two directions can complement each other and be integrated into the same system, as explored in recent work \citep{khodak2022learning}. {\color{\COMMENTBXSEVENE} Most of the work in the literature has focused on how to incorporate good predictions, but there are no general results for end-to-end guarantees where the predictions are also being learned in addition to being used by the algorithm. Though there are some positive results for some very specific algorithmic problems e.g.\ online bipartite matching and the ski rental problem \citep{khodak2022learning}.}
\section{Preliminaries}

\paragraph{Parameterized algorithms, utility function class, and dual utility function class.} {In this work, our main focus is on parameterized algorithms, in which each algorithm has a set of hyperparameters $\boldsymbol{a} \in \cA \subseteq \bbR^d$ that have great influence on the performance of the algorithm. Let $\cX$ represent the set of input problem instances on which the algorithm operates. {\color{\COMMENTBXSEVENE} For example, in the problem of agglomerative hierarchical clustering (see Section \ref{sec:linkage} for details), a problem instance $\boldsymbol{x} \in \cX$ is given by $(S, \boldsymbol{\delta})$, and consists of a set of points $S$ that need to be clustered, and a set of candidate distance functions $\boldsymbol{\delta} = (\delta_1, \dots, \delta_L)$ used for measuring the distances between pairs of points (the set remains the same for every problem instance)\footnote{Even though $\boldsymbol{\delta}$ is the same across instances, we follow the convention from prior work \citep{balcan2021much} and include it as a part of the problem instance to define the dual utility functions more clearly.}. For example, $\boldsymbol{\delta}$ might contain $\ell_p$ distances for difference values $p \in [0, \infty)$, or some other distance function that is suitable for the geometric structure of the points in $S$ (cf. \citealt{balcan2020learning}).} 

The performance of the algorithm for any hyperparameter is measured by the \textit{utility function} $u: \cX \times \cA \rightarrow [0, H]$, where $u(\boldsymbol{x}, \boldsymbol{a})$ represents the performance of the algorithm when operating on input problem instance $\boldsymbol{x}$ and be parameterized by $\boldsymbol{a}$. {\color{\COMMENTBXSEVENE} In the example above, $\boldsymbol{a}$ could be in the $L$-probability simplex, served as the weights to combine distance functions in $\boldsymbol{\delta}$, i.e. $\delta_{\boldsymbol{a}} = \sum_{i = 1}^La_i\delta_i$, used in the clustering algorithm.} The \textit{utility function class} $\cU$ of the algorithm is then defined as {\color{\COMMENTBXSEVENE} $\cU = \{u_{\boldsymbol{a}}: \cX \rightarrow [0, H] \mid \boldsymbol{a} \in \cA\}$}. The utility function class $\cU$ plays a central role in our analysis since the problem of tuning hyperparameter $\boldsymbol{a}$ for the algorithm can be formulated as the problem of analyzing the learnability of $\cU$ in both {\color{\CAMERAREADY} statistical} and online learning described below. 

However, in data-driven algorithm design, the structure of the utility function class $\cU$ can be very intricate in the sense that: (1) a very small variation of the  hyperparameter $\boldsymbol{a}$ can lead to sharp, unpredictable  changes in the utility function $u_{\boldsymbol{a}}$, and (2) the utility function $u_{\boldsymbol{a}}$ corresponding to a fixed $\boldsymbol{a}$  admits a very complicated structure as a function of $\boldsymbol{x}$. Hence, analyzing $\cU$ is often conducted via analyzing the dual utility function class $\cU^*$, which often admits a certain degree of structure. The \textit{dual utility function class} $\cU^*$ of $\cU$ can be defined as $\cU^* = \{u^*_{\boldsymbol{x}}: \cA \rightarrow [0, H] \mid \boldsymbol{x} \in \cX\}$, of which each dual utility function $u^*_{\boldsymbol{x}}$ corresponding to a fixed problem instance $\boldsymbol{x}$ is defined as $u^*_{\boldsymbol{x}}(\boldsymbol{a}):= u_{\boldsymbol{a}}(\boldsymbol{x})$, which consists of utility functions obtained by varying the hyperparameter $\boldsymbol{a}$ for fixed problem instances from $\cX$. In this work, we will show that if $\cU^*$ admits Pfaffian piecewise structure, we can recover the {\color{\CAMERAREADY} statistical} and online learning guarantees for the utility function class $\cU$ in several cases of interest.}


\paragraph{{\color{\CAMERAREADY} Statistical} learning.}
In contrast to traditional worst-case or average-case algorithm analysis, we assume the existence of an {\color{\COMMENTCOLOR}unknown} problem distribution $\cD$ over $\cX$, which encapsulates information about the relevant application domain. {Under such an assumption, our goal is to answer the sample complexity question, i.e.\ how many problem instances are sufficient to learn  near-optimal hyperparameters of the algorithm for any application-specific problem distribution}. To this end, {it suffices to} bound the \textit{pseudo-dimension} \citep{pollard1984convergence} of the corresponding utility function class $\cU$.

\begin{definition}[Pseudo-dimension, \citealp{pollard1984convergence}] \label{def:pseudo-dimension}
    Consider a real-valued function class $\cU$, of which each function takes input in $\cX$. Given a set of inputs $S = (\boldsymbol{x}_1, \dots, \boldsymbol{x}_N)$, we say that $S$ is shattered by $\cU$ if there exists a set of real-valued threshold $r_1, \dots, r_N \in \bbR$ such that $\abs{\{(\sign(u(\boldsymbol{x}_1) - r_1), \dots, \sign(u(\boldsymbol{x}_N) - r_N)) \mid u \in \cU\}} = 2^N$. The pseudo-dimension of $\cU$, denoted as $\Pdim(\cU)$, is the maximum size $N$ of a input set that $\cU$ can shatter. 
\end{definition}
\noindent If the function class $\cU$ is binary-valued, this corresponds to the well-known VC-dimension \citep{vapnik1974theory}. Standard results in learning theory {\color{\COMMENTCOLOR}show} that a bound on the pseudo-dimension implies a bound on the sample complexity ({\color{\COMMENTBXSEVENE} see Appendix \ref{appx:classical-generalization-results} for further background}), which is formalized as follows.
{ 
    \begin{theorem}[\color{\COMMENTBXSEVENE} \citealp{pollard1984convergence}] Consider a real-valued function class $\cU$,  of which each function takes value in $\cX$. Assume that $\Pdim(\cU)$ is finite and $\cU$ is bounded by $H$. Then given $\epsilon > 0$ and $\delta \in (0, 1)$, for any $m \geq m(\delta, \epsilon)$, where $m(\delta, \epsilon) = \cO\left(\frac{H^2}{\epsilon^2}(\Pdim(\cF) + \log(1/\delta)\right)$, with probability at least $1 - \delta$ over the draw of $S = (\boldsymbol{x}_1, \dots, \boldsymbol{x}_m) \sim \cD^m$, we have 
    $$
        \bbE_{\boldsymbol{x} \sim \cD}[\hat{u}_S(\boldsymbol{x})] \geq \sup_{u \in \cU} \bbE_{\boldsymbol{x} \sim \cD}[u(\boldsymbol{x})] - \epsilon.
    $$
    Here $\hat{u}_S \in \argmax_{u \in \cU}\frac{1}{m}\sum_{i = 1}^m u(\boldsymbol{x}_i)$.  
    \end{theorem}

}

{ 
    \paragraph{Online learning.} In the online learning setting, there is a sequence of utility functions $u(\boldsymbol{x}_1, \cdot), \dots, u(\boldsymbol{x}_T, \cdot)$ corresponding to a sequence of problem instances $(\boldsymbol{x}_1, \dots, \boldsymbol{x}_T)$, coming over $T$ rounds. The task is to design a sequence of hyperparameters $(\boldsymbol{a}_1, \dots, \boldsymbol{a}_T)$ for the algorithm so that the regret (w.r.t.\ the best hyperparameter in hindsight) is small 
    $$\text{Regret}_T = \max_{\boldsymbol{a} \in \cA} \sum_{t = 1}^T u(\boldsymbol{x}_t, \boldsymbol{a}) - \sum_{t = 1}^Tu(\boldsymbol{x}_t, \boldsymbol{a}_t).$$
    Our goal is to design a sequence of hyperparameters $\boldsymbol{a}_1, \dots, \boldsymbol{a}_T$ that achieve sub-linear regret.
}

\section{Pfaffian GJ framework for data-driven algorithm design} \label{sec:pfaffian-gj-framework}
In a classical work, \citet{goldberg1993bounding} introduced a comprehensive framework for bounding the VC-dimension (or pseudo-dimension) of parameterized function classes exhibiting a specific property. They proposed that if any function within a given class can be computed via a specific type of computation, named a GJ algorithm, consisting of fundamental operators such as addition, subtraction, multiplication, division, and conditional statements, then the pseudo-dimension of such a function class can be effectively upper bounded. The bound depends on the running time {\color{\COMMENTCOLOR}(number of operations)} of the algorithm, offering a convenient approach to reduce the task of bounding the complexity of a function class into the more manageable task of counting the {\color{\COMMENTCOLOR} number of operations}.

However, a bound based on running time can often be overly conservative. Recently, \citet{bartlett2022generalization} instantiated a refinement for the GJ framework. Noting that any intermediate values the GJ algorithm computes are rational functions of parameters, Bartlett et al.\ proposed {\color{\CAMERAREADY} the relevant} complexity measures of the GJ framework, namely the \textit{predicate complexity} and the \textit{degree} of the GJ algorithm. Informally, the predicate complexity and the degree are the number of distinct rational functions in conditional statements and the highest order of those rational functions, respectively. Remarkably, based on {\color{\CAMERAREADY} the GJ framework's} complexity measures, Bartlett et al.\ showed {\color{\CAMERAREADY} an improved bound}, demonstrating its efficacy in various cases, including applications on data-driven algorithm design for numerical linear algebra. 

It is worth noting that the GJ algorithm has limitations as it can only accommodate scenarios where intermediate values are rational functions. In other words, it does not capture more prevalent classes of functions, such as the exponential function. Building upon the insights gained from the refined GJ framework, we introduce an extended framework called \textit{the Pfaffian GJ Framework}. Our framework can be used to bound the pseudo-dimension of function classes that can be computed not only by fundamental operators and conditional statements but also by a broad class of functions called Pfaffian functions, which includes exponential and logarithmic functions. Technically, our result is a refinement of the analytical approach introduced by {\color{\COMMENTCOLOR} \citet{khovanskiui1991fewnomials, karpinski1997polynomial, milnor1997topology}}, which is directly applicable to data-driven algorithm design. An important part of our contribution is a careful instantiation of our main result for several important algorithmic problems, as a naive application could result in significantly looser bounds on the sample complexity.
\subsection{Pfaffian functions}
We present the foundational concepts of Pfaffian chains, Pfaffian functions, and their associated complexity measures. Introduced by \citet{khovanskiui1991fewnomials}, Pfaffian function analysis is a tool for analyzing the properties of solution sets of Pfaffian equations. We note that these techniques have been previously used to derive an upper bound on the VC-dimension of sigmoidal neural networks \citep{karpinski1997polynomial}.


We first introduce the notion of a \textit{Pfaffian chain}. Intuitively, a Pfaffian chain consists of an ordered sequence of functions, in which the derivative of each function can be represented as a polynomial of the variables and previous functions in the sequence. 
\begin{definition}[Pfaffian Chain, {\color{\COMMENTCOLOR}\citealp{khovanskiui1991fewnomials}}] \label{def:pfaffian-chain}
    A finite sequence of continuously differentiable functions $\eta_1, \dots, \eta_q: \bbR^d \rightarrow \bbR$ and variables $\boldsymbol{a} = (a_1, \dots, a_d) \in \bbR^d$ form a Pfaffian chain $\cC(\boldsymbol{a}, \eta_1, \dots, \eta_q)$ if there are real polynomials $P_{i, j}(\boldsymbol{a}, \eta_1, \dots, \eta_j)$ in $a_1, \dots, a_d, \eta_1, \dots, \eta_j$, for for all $i \in [d]$ and $j \in [q]$, such that
    \begin{equation*}
        \begin{aligned} \color{\COMMENTBXSEVENE}
            \frac{\partial \eta_j}{\partial a_i} = P_{i, j}(\boldsymbol{a}, \eta_1, \dots, \eta_q).
        \end{aligned}
    \end{equation*}
\end{definition}

\noindent {Here, we emphasize again that $P_{i, j}(\boldsymbol{a}, \eta_1, \dots, \eta_j)$ is a polynomial in $\boldsymbol{a}$ and the functions $\eta_1(\boldsymbol{a}), \dots, \eta_j(\boldsymbol{a})$ of $\boldsymbol{a}$}. We now define two complexity notations for Pfaffian chains, termed the \textit{length} and \textit{Pfaffian degree}, that dictate the complexity of a Pfaffian chain. The length of a Pfaffian chain is the number of functions that appear on that chain, while the Pfaffian degree of a chain is the maximum degree of polynomials that can be used to express the partial derivative of functions on that chain. Formal definitions of Pfaffian chain length and Pfaffian degree are mentioned in  Definition \ref{def:paffian-chain-complexity}.
\begin{definition}[Complexity of Pfaffian chain]
    \label{def:paffian-chain-complexity}
    Given a Pfaffian chain $\cC(\boldsymbol{a}, \eta_1, \dots, \eta_q)$, as defined in Definition \ref{def:pfaffian-chain}, we say that the length of $\cC$ is $q$, and Pfaffian degree of $\cC$ is $\max_{i, j}\deg(P_{i, j})$.
\end{definition}
\noindent Given a Pfaffian chain, one can define the Pfaffian function, which is simply a polynomial of variables and functions on that chain. 
\begin{definition}[Pfaffian functions, {\color{\COMMENTCOLOR}\citealp{khovanskiui1991fewnomials}}]
    \label{def:pfaffian-function}
      Given a Pfaffian chain $\cC(\boldsymbol{a}, \eta_1, \dots, \eta_q)$, as defined in Definition \ref{def:pfaffian-chain}, a Pfaffian function over the chain $\cC$ is a function of the form $g(\boldsymbol{a}) = Q(\boldsymbol{a}, \eta_1, \dots, \eta_q)$, where $Q$ is a polynomial in variables $\boldsymbol{a}$ and functions $\eta_1, \dots, \eta_q$ in the chain $\cC$. {\color{\COMMENTCOLOR} The degree $\Delta$ of the Pfaffian function $g(\boldsymbol{a})$ is the degree of the polynomial $Q(\boldsymbol{a}, \eta_1, \dots, \eta_q)$.}
\end{definition}
\noindent {The concepts of the Pfaffian chain, functions, and complexities may be a bit abstract to unfamiliar readers. To help readers better grasp the concepts of Pfaffian chains and Pfaffian functions, here are some simple examples.
}

{\paragraph{Example 1.}{Consider the chain $\cC(a, e^a)$ consisting of the variable $a$ and the function $e^a$, where $a \in \mathbb{R}$. Then $\mathcal{C}$ is a Pfaffian chain since $\frac{d}{da}e^a = e^a$, which is a polynomial of degree 1 in $e^a$. Hence, the chain $\mathcal{C}$ has length $q = 1$ and Pfaffian degree $M = 1$. Now, consider the function $f(a) = (e^a)^2 + a^3$. We observe that $f(a)$ is a polynomial in $a$ and $e^a$ {\color{\COMMENTCOLOR} of degree $3$}. Therefore, $f(a)$ is a Pfaffian function {\color{\COMMENTCOLOR} of degree {$\Delta = 3$}} of the Pfaffian chain $\mathcal{C}$.}
 }
{\paragraph{Example 2.} The following example is useful when analyzing the learnability of clustering algorithms {\color{\COMMENTCOLOR}(we present only the variables and functions here, their interpretation is deferred to Section \ref{sec:linkage})}. Let $\boldsymbol{\beta}=(\beta_1,\dots,\beta_k) \in \bbR_{\ge 0}^k$ and $\alpha \in \bbR$ be variables, and let $d(\boldsymbol{\beta}) = \sum_{i = 1}^k d_i\beta_i$, where $d_i\in\bbR_{+}$ are some fixed positive real coefficients for $i = 1, \dots, k$. Consider the functions $f(\alpha, \boldsymbol{\beta}) := \frac{1}{d(\boldsymbol{\beta})}$, $g(\alpha, \boldsymbol{\beta}) := \ln d(\boldsymbol{\beta})$, and $h(\alpha, \boldsymbol{\beta}):= d(\boldsymbol{\beta})^{\alpha}$. Then $f$, $g$, and $h$ are Pfaffian functions, {\color{\COMMENTCOLOR}all of degree $\Delta = 1$, }from the chain $\cC(\alpha, \boldsymbol{\beta}, f, g, h)$ of length $3$ and Pfaffian degree $2$, since 
\begin{equation*}
    \begin{aligned}
        &\frac{\partial f}{\partial \alpha} = 0, &&\frac{\partial f}{\partial \beta_i} = -d_i \cdot f^2,\\
        &\frac{\partial g}{\partial \alpha} = 0, &&\frac{\partial g}{\partial \beta_i} = d_i\cdot f, \\
        &\frac{\partial h}{\partial \alpha} = g \cdot h, &&\frac{\partial h}{\partial \beta_i} = d_i \cdot f \cdot h.
    \end{aligned}
\end{equation*}
}

\subsection{Pfaffian GJ Algorithm}
We now present a formal definition of the \textit{Pfaffian GJ algorithm}, which shares similarities with the GJ algorithm but extends its capabilities to compute Pfaffian functions as intermediate values, in addition to basic operators and conditional statements. This improvement makes the Pfaffian GJ algorithm significantly more versatile compared to the classical GJ framework.
\begin{definition}[Pfaffian GJ algorithm]\label{def:paffian-gj-algorithm}
     A Pfaffian GJ algorithm $\Gamma$ operates on real-valued inputs $\boldsymbol{a} {\color{\COMMENTEPQD} =(a_1,\dots,a_d)} \in \cA \subseteq \bbR^d$, and can perform three types of operations:
    \begin{itemize}
        \item Arithmetic operators of the form $v'' = v \odot v'$, where $\odot \in \{ +, -, \times, \div\}$.
        \item Pfaffian operators of the form $v'' = \eta(v)$, where $\eta: \bbR \rightarrow \bbR$ is a Pfaffian function.
        \item Conditional statements of the form ``$\text{if } v \geq 0 \dots \text{ else } \dots$''.
    \end{itemize}
    Here $v$ and $v'$ are either inputs {\color{\COMMENTEPQD}($a_1,\dots,$ or $a_d$)} or (intermediate) values previously computed by the algorithm.
\end{definition}
{
    \noindent The main difference between the classical GJ algorithm \citep{goldberg1993bounding, bartlett2022generalization} and our Pfaffian GJ algorithm is that we allow unary Pfaffian operators in the algorithmic computation. By leveraging the fundamental properties of Pfaffian chains and functions, we can easily show that all intermediate functions computed by a specific Pfaffian GJ algorithm come from the same Pfaffian chain (see Appendix \ref{sec:paffian-basic-properties}} for details). Therefore, for a Pfaffian GJ algorithm $\Gamma$, we can define a Pfaffian chain $\cC$ corresponding to $\Gamma$, as formalized below. 
}

\begin{definition}[Pfaffian chain associated with Pfaffian GJ algorithm] Given a Pfaffian GJ algorithm $\Gamma$ operating on real-valued inputs $\boldsymbol{a} \in \cA \subseteq \bbR^d$, we say that a Pfaffian chain $\cC$ is associated with $\Gamma$  if all the intermediate values computed by $\Gamma$ {\color{\COMMENTCOLOR}are Pfaffian functions} from the chain $\cC$.
\end{definition}

 This remarkable property enables us to control the complexity of the Pfaffian GJ algorithm $\Gamma$ by controlling the complexity of the corresponding Pfaffian chain $\cC$. We formalize this claim in the following lemma.

\begin{lemma}
    For any Pfaffian GJ algorithm $\Gamma$ involving a finite number of operations, there is a Pfaffian chain $\cC$ of finite length associated with $\Gamma$.
\end{lemma}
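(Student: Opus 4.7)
The plan is to proceed by induction on the number of operations $k$ performed by $\Gamma$. The inductive invariant is that after the first $k$ operations, there exists a Pfaffian chain $\cC_k$ of finite length such that every intermediate value computed along every execution path during the first $k$ steps is a Pfaffian function over $\cC_k$. The base case is immediate: prior to any operation, the only available values are the inputs $a_1,\dots,a_d$, which are Pfaffian over the trivial chain $\cC_0$ consisting solely of the variables.

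For the inductive step, I would analyze the $(k+1)$-th operation by cases, following the three types in Definition~\ref{def:paffian-gj-algorithm}. For an arithmetic operator in $\{+,-,\times\}$ applied to values $v,v'$ that are already polynomials in $\cC_k$, the result is itself a polynomial in $\cC_k$, so no extension is required. For a division $v/v'$, I would append $\eta_{\mathrm{new}} = 1/v'$ to obtain $\cC_{k+1}$ and verify the Pfaffian property via
\[
\frac{\partial \eta_{\mathrm{new}}}{\partial a_i} \;=\; -\frac{\partial v'}{\partial a_i}\cdot \eta_{\mathrm{new}}^2,
\]
observing that $\partial v'/\partial a_i$ is already polynomial in $\cC_k$ since $v'$ is Pfaffian over $\cC_k$. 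For a Pfaffian operation $v'' = \eta(v)$, where $\eta$ is Pfaffian over some chain $\cC'(\cdot,\xi_1,\dots,\xi_m)$, I would extend $\cC_k$ by appending $\xi_1(v),\dots,\xi_m(v),\eta(v)$ in order; the chain rule then expresses each partial derivative as a polynomial in the enlarged chain, since $v$ itself is Pfaffian over $\cC_k$. For a conditional statement, no new function is introduced, but the algorithm branches and both branches must be accounted for.

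The main obstacle, and the only subtle point, is handling conditional branching cleanly: at a conditional the execution splits, and the invariant must hold over every subsequent execution path. However, because $\Gamma$ performs only a finite total number of operations, the computation tree is finite, so there are only finitely many execution paths and each one extends the chain by finitely many functions. Taking the union of all the chain extensions incurred across every branch yields a single Pfaffian chain of finite length that simultaneously represents every intermediate value $\Gamma$ can ever compute, completing the induction and furnishing the chain $\cC$ associated with $\Gamma$.
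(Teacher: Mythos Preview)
Your proposal is correct and follows essentially the same approach as the paper: an inductive/recursive construction of the chain, handling each operation type by either verifying that the new value is already Pfaffian over the current chain or extending the chain by finitely many functions (using exactly the addition/multiplication/division/composition facts the paper records in its appendix). Your explicit treatment of conditional branching via the finite computation tree is a detail the paper's proof outline omits but is entirely in the same spirit.
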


\begin{proofoutline}
    The Pfaffian chain $\mathcal{C}$ can be constructed recursively as follows. Initially, we create a Pfaffian chain of variables $\boldsymbol{a}$ with length 0. Using the basic properties of Pfaffian functions discussed in Appendix \ref{sec:paffian-basic-properties}, each time $\Gamma$ computes a new value $v$, one of the following cases arises: (1) $v$ is a Pfaffian function on the current chain $\mathcal{C}$, or (2) we can extend the chain $\mathcal{C}$ by adding new functions, increasing its length (but still finite), such that $v$ becomes a Pfaffian function on the modified chain $\mathcal{C}$.
\end{proofoutline}
\begin{remark}
     We note that each Pfaffian GJ algorithm $\Gamma$ can be associated with various Pfaffian chains,  with different complexities. {For example, if $\cC(\boldsymbol{a}, \eta_1, \dots, \eta_q)$ is a Pfaffian chain corresponding to $\Gamma$, then $\cC'(\boldsymbol{a}, \eta_1, \dots, \eta_q, e^{a_1})$ is also a Pfaffian chain corresponding to $\Gamma$, but with a greater length compared to $\cC$}. Therefore, in any specific application, designing the corresponding Pfaffian chain $\mathcal{C}$ for $\Gamma$ with a small complexity is a crucial task that requires careful analysis.
\end{remark}
 We now present our main technical tool, which can be used to bound the pseudo-dimension of a function class by expressing the function computation as a Pfaffian GJ algorithm and giving a bound on the complexity of the associated Pfaffian chain. Technical background for the proof is located in Appendix \ref{app:pfaffian-gj-algorithm-proof}.
\begin{theorem} \label{thm:pfaffian-gj-algorithm}
     Consider a real-valued function class $\cU = \{u_{\boldsymbol{a}}: \cX \rightarrow \bbR \mid \boldsymbol{a} \in \cA\}$ with domain $\cX$, of which each function $u_{\boldsymbol{a}} \in \cU$ is parameterized by ${\boldsymbol{a}} \in \cA \subseteq \bbR^d$. Suppose that for every $\boldsymbol{x} \in \cX$ and $r \in \bbR$, there is a Pfaffian GJ algorithm $\Gamma_{\boldsymbol{x}, r}$, with associated Pfaffian chain $\cC_{\boldsymbol{x}, r}$ of length at most $q$ and Pfaffian degree at most $M$, that given $u_{\boldsymbol{a}} \in \cU$, {\color{\COMMENTCOLOR} outputs $\sign(u_{\boldsymbol{a}}(\boldsymbol{x}) - r)$}. Moreover, assume that values computed at intermediate steps of $\Gamma_{\boldsymbol{x}, r}$ are from the Pfaffian chain $\cC_{\boldsymbol{x}, r}$, each of degree at most $\Delta$; and the functions computed in the conditional statements are of at most $K$ Pfaffian functions. Then 
    $\Pdim(\cU) \le d^2q^2+2dq\log(\Delta + M)+4dq\log d +2d\log (\Delta K) + 16d.$
\end{theorem}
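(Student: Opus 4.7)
The plan is to reduce the pseudo-dimension bound to counting sign patterns of Pfaffian functions over the parameter space $\cA$, and then invoke a Khovanskii-style cell-counting bound for arrangements of Pfaffian hypersurfaces. Concretely, I would suppose for contradiction that $\cL$ shatters a set $\{\boldsymbol{x}_1,\dots,\boldsymbol{x}_N\}$ with witnesses $r_1,\dots,r_N$, so that the map $\boldsymbol{a}\mapsto(\sign(L_{\boldsymbol{a}}(\boldsymbol{x}_i)-r_i))_{i=1}^N$ attains all $2^N$ binary patterns as $\boldsymbol{a}$ ranges over $\cA$, and derive an upper bound on $N$.

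For each $i$, the Pfaffian GJ algorithm $\Gamma_{\boldsymbol{x}_i,r_i}$ decides $\sign(L_{\boldsymbol{a}}(\boldsymbol{x}_i)-r_i)$ through at most $K$ conditional statements, each branching on the sign of some Pfaffian function $\phi_{i,j}(\boldsymbol{a})$ from the chain $\cC_{\boldsymbol{x}_i,r_i}$ (of length $\le q$, Pfaffian degree $\le M$, with each $\phi_{i,j}$ of polynomial degree $\le\Delta$). On any connected region of $\cA$ in which the signs of all $NK$ functions $\phi_{i,j}$ are fixed, the output of every $\Gamma_{\boldsymbol{x}_i,r_i}$ is constant. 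Consequently $2^N$ is bounded above by the number of cells in the arrangement of the $NK$ Pfaffian hypersurfaces $\{\phi_{i,j}=0\}$ inside $\cA\subseteq\bbR^d$.

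To count these cells, I would merge the $N$ chains $\cC_{\boldsymbol{x}_i,r_i}$ into a single Pfaffian chain $\cC^{\star}$ of length at most $Nq$ and Pfaffian degree at most $M$, using the chain-concatenation construction underlying the preceding lemma. The $NK$ functions $\phi_{i,j}$ are then Pfaffian functions from $\cC^{\star}$ of polynomial degree at most $\Delta$, and I would apply a refined Khovanskii-type cell count for Pfaffian arrangements (in the spirit of Gabrielov--Vorobjov, or the Karpinski--Macintyre analysis of Pfaffian neural networks). Imposing this cell count to be at least $2^N$ and solving for $N$ via a standard log-inversion step of the form ``if $N\le c_1+c_2\log N$ then $N=O(c_1+c_2\log c_2)$'' produces the claimed bound; the leading term $d^2q^2$ reflects the Khovanskii factor coming from the chain length, while the lower-order terms $2dq\log(\Delta+M)$, $4dq\log d$, $2d\log(\Delta K)$, and $16d$ arise, respectively, from the polynomial and Pfaffian degree factors, the ambient $d$, and the atom count $NK$.

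The hard part will be invoking a Khovanskii-type bound that remains sharp enough after chain merging. A careless application using the merged chain length $Nq$ in the classical $2^{q(q-1)/2}$ factor produces an inequality of the form $N\le O(N^2q^2)$, which is vacuous. The argument must exploit the ``layered'' structure of $\cC^{\star}$---the fact that chain functions associated with distinct shattering points do not interact---to recover a bound whose leading order is $d^2q^2$ rather than one quadratic in $N$. The remaining work is careful bookkeeping of polynomial and Pfaffian degree factors to match the stated constants, together with the standard manipulation of the resulting sub-logarithmic inequality.
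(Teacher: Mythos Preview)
Your high-level plan matches the paper's: reduce to counting sign patterns of the $NK$ Pfaffian predicates, then bound the number of cells in the induced arrangement via Khovanski\u{\i}. You also correctly spot the obstacle---merging all $N$ chains into one of length $Nq$ and plugging into the $2^{q(q-1)/2}$ factor is fatal.

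However, your proposed repair is not the one that works, and as stated it is a gap. The paper does \emph{not} exploit any ``layered'' structure of a global chain $\cC^\star$ of length $Nq$; there is no refined Khovanski\u{\i} bound for block-structured chains being invoked. Instead, the crucial step is a Warren-type reduction (this is the content of the Karpinski--Macintyre lemma, Lemma~\ref{lm:km-97} with Lemma~\ref{lm:cc-solution-set-cc}): the number of sign patterns of $NK$ functions in $\bbR^d$ is bounded by $\sum_{j=0}^d \binom{NK}{j}\cdot B$, where $B$ is the maximal number of connected components of any system $\theta_1=\cdots=\theta_r=0$ with $r\le d$ and each $\theta_i$ chosen from the $NK$ predicates. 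The point is that any such system involves at most $d$ predicates, hence at most $d$ of the instance-specific chains $\cC_{\boldsymbol{x}_i,r_i}$, and concatenating only those gives a chain of length at most $dq$ (not $Nq$). Khovanski\u{\i}'s theorem (Lemma~\ref{lm:kho-91}, Corollary~\ref{cor:kho-91}) applied to this short chain yields $B\le 2^{dq(dq-1)/2}\Delta^d[d^2(\Delta+M)]^{dq}$, and combining with the $\binom{NK}{d}$ factor and solving $2^N\le B(eNK/d)^d$ gives exactly the stated bound.

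So the missing idea is: do not merge globally and then look for structure; use the ambient dimension $d$ to cap the number of equations that ever appear simultaneously, and merge only those $\le d$ chains per system. Once you route through Lemma~\ref{lm:km-97} in this way, the bookkeeping you describe in your last sentence goes through directly.
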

\proof 

{
    To bound $\Pdim(\cU)$, the overall idea is that given any $N$ input problem instances $\boldsymbol{x}_1, \dots, \boldsymbol{x}_N$ and $N$ thresholds $r_1, \dots, r_N$, we bound $\Pi_\cU(N)$, the number of distinct sign patterns 
    \begin{equation*}
        \begin{aligned}
            (\sign(u_{\boldsymbol{a}}(\boldsymbol{x}_1) - r_1), \dots, \sign(u_{\boldsymbol{a}}(\boldsymbol{x}_N) - r_N)) 
            = (\sign(u^*_{\boldsymbol{x}_N}(\boldsymbol{a}) - r_1), \dots, \sign(u^*_{\boldsymbol{x}_N}(\boldsymbol{a}) - r_N))
        \end{aligned}
    \end{equation*} 
    obtained by varying ${\boldsymbol{a}} \in \cA$. Then we solve the inequality $2^N \leq \Pi_\cL(N)$ to obtain an upper bound for $\Pdim(\cU).$
    
    From the assumption, for each $\boldsymbol{x_i}$ and a threshold value $r_i$, the value of $\sign(u_{\boldsymbol{a}}(\boldsymbol{x}_i) - r_i) = \sign(u^*_{\boldsymbol{x}_i}(\boldsymbol{a}) - r_i)$ can be computed by a Pfaffian GJ algorithm $\Gamma_{\boldsymbol{x}_i, r_i}$ that takes input as $\boldsymbol{a}$ and return if $u_{\boldsymbol{a}}(\boldsymbol{x}_i) > r_i$. 
    
    Again, from the assumption, the Pfaffian GJ algorithm $\Gamma_{\boldsymbol{x}_i, r_i}$ has at most $K$ conditional statements, each determines if $\tau_{{\color{\COMMENTCOLOR} \boldsymbol{x}_i, j}}(\boldsymbol{a}) \geq 0$ {\color{\COMMENTCOLOR} ($j = 1, \dots, K$)}, where $\tau_{{\color{\COMMENTCOLOR} \boldsymbol{x}_i, r_i}}(\boldsymbol{a})$ is a Pfaffian function from the Pfaffian chain $\cC_{\boldsymbol{x}_i, {\color{\COMMENTCOLOR} r_i}}$ of length at most $q$ and Pfaffian degree at most {\color{\COMMENTCOLOR} $M$} corresponding to $\Gamma_{\boldsymbol{x}, i}$.
    
    Therefore, each binary value $\sign(u^*_{\boldsymbol{x}_i}(\boldsymbol{a}) - r_i)$ is determined by at most $K$ other values in the form $\sign(\tau_{\boldsymbol{x}_i, j}(\boldsymbol{a}))$ for $j = 1, \dots, K$, meaning that the number of  patterns 
    $$
        (\sign(u^*_{\boldsymbol{x}_1}(\boldsymbol{a}) - r_1), \dots, \sign(u^*_{\boldsymbol{x}_N}(\boldsymbol{a}) - r_N))
    $$
    is upper bounded by the number of sign patterns
    $$
        (\sign(\tau_{\boldsymbol{x}_1, 1}(\boldsymbol{a})), \dots, \sign(\tau_{\boldsymbol{x}_1, K}(\boldsymbol{a})), \dots, \sign(\tau_{\boldsymbol{x}_N, 1}(\boldsymbol{a})), \dots, \sign(\tau_{\boldsymbol{x}_N, K}(\boldsymbol{a})))
    $$
    obtained by varying $\boldsymbol{a}$. Moreover, we can construct a Pfaffian chain $\cC'$ of length at most $qN$ and Pfaffian degree at most $\Delta$ by merging all the Pfaffian chain $\cC_{\boldsymbol{x}_i}$, such that any function $\tau_{{\color{\COMMENTCOLOR} \boldsymbol{x}_i, j}}(\boldsymbol{a})$ above is a Pfaffian function from the Pfaffian chain $\cC'$, and of degree at most $\Delta$. 
    
    Finally, the number of sign patterns  $(\sign(\tau_{\boldsymbol{x}_1, 1}(\boldsymbol{a})),\dots, \sign(\tau_{\boldsymbol{x}_N, K}(\boldsymbol{a})))$ can be upper bounded by the number of connected components of $\bbR^d - \cup_{i = 1}^N\cup_{j = 1}^K\{\boldsymbol{a}: \tau_{\boldsymbol{x}_i, j}(\boldsymbol{a}) = 0\}$, where $\tau_{\boldsymbol{x}_i, j}(\boldsymbol{a}) = 0$ is a Pfaffian hypersurface. Using results by \citet{khovanskiui1991fewnomials} and \citet{karpinski1997polynomial} (see Appendix \ref{app:pfaffian-gj-algorithm-proof} for background), we can establish Lemma \ref{lm:pfaffian-algorithm}, which leads to the claimed bound  $\Pdim(\cU) \le d^2q^2+2dq\log(\Delta + M)+4dq\log d +2d\log \Delta K + 16d$.
 
}
\qedhere

\begin{remark}
    For the case $q = 0$, meaning that the functions computed in the conditional statements are merely rational functions in $\boldsymbol{a}$, Theorem \ref{thm:pfaffian-gj-algorithm} gives an upper bound of $\cO(d \log (\Delta K))$, which matches the rate of GJ algorithm instantiated by \citet{bartlett2022generalization}.
\end{remark}

\section{Pfaffian piecewise structure for data-driven statistical learning} \label{sec:pfaffian-piecewise-structure}

In this section, we analyze function classes for which the duals have a  Pfaffian piecewise structure, a special case of the piecewise decomposable structure introduced by \cite{balcan2021much}. {Compared to their piecewise decomposable structure, our proposed Pfaffian piecewise structure incorporates additional information about the Pfaffian structures of piece and boundary functions of the dual function classes, as well as the maximum number of forms that the piece functions can take}. We argue that the additional information can be derived as a by-product in many data-driven algorithm design problems, but would be ignored if naively using the framework by \cite{balcan2021much}. We then show that if the dual utility function class $\cU^*$ of a parameterized algorithm admits the Pfaffian piecewise structure, then we can directly establish learning guarantees for the utility function class $\cU$. {We note that the advantage of our framework compared to the framework by \cite{balcan2021much} is two-fold: (1) our approach offers a concrete and alternative way of analyzing the pseudo-dimension for the utility function class $\cU$, avoiding the non-trivial combinatorial tasks of analyzing the pseudo/VC-dimension of the dual piece and boundary function classes if using the framework by \cite{balcan2021much}, and (2) directly and naively applying the framework by \cite{balcan2021much} can potentially lead to loose or vacuous bounds} {\color{\COMMENTCOLOR} (see e.g. \citealp{balcan2020learning}, Lemma 7 and \citealp{bartlett2022generalization}, Appendix E.3)}.
Additionally, we propose a further refined argument for the case where all dual utility functions share the same boundary structures, which leads to further improved learning guarantees. 

\subsection{Prior generalization framework for piecewise structured utility functions in data-driven algorithm design and its limitations}

In this section, we discuss a prior general framework for providing learning guarantees for data-driven algorithm design problems by \cite{balcan2021much}. Many parameterized algorithms, such as combinatorial algorithms and integer/mixed-integer programming \citep{balcan2018learning, balcan2017learning}, exhibit volatile utility functions with respect to their parameters. In other words, even minor changes in parameters can lead to significant alterations in the behavior of the utility function. Analyzing such volatile utility function classes poses a significant technical challenge.

Fortunately, many data-driven algorithm design problems still possess a certain degree of structure. Prior studies \citep{balcan2017learning, balcan2018learning, balcan2022provably, balcan2024new} have demonstrated that the dual utility functions associated with data-driven algorithm design problems often exhibit a piecewise structure. In other words, the parameter space of the dual utility function can be partitioned into regions, each separated by distinct \textit{boundary functions}. Within each region, the dual utility function corresponds to a \textit{piece function} that exhibits well-behaved properties, such as being linear, rational, or Pfaffian in nature. Building upon this insight, prior work by \cite{balcan2021much} proposed a formal definition of a piecewise-structured dual utility function class and established a generalization guarantee applicable to any data-driven algorithm design problem conforming to such structures.

Formally, let us recall the definition of the utility function class $\mathcal{U} = \{u_{\boldsymbol{a}}: \mathcal{X} \rightarrow [0, H] \mid {\boldsymbol{a}} \in \mathcal{A}\}$ for a parameterized algorithm, where $\mathcal{A} \subseteq \mathbb{R}^d$. This class represents functions that evaluate the performance of the algorithm, with $u_{\boldsymbol{a}}: \mathcal{X} \rightarrow [0, H]$ denoting the utility function corresponding to parameter ${\boldsymbol{a}}$. For a given input problem instance $\boldsymbol{x} \in \mathcal{X}$, $u_{\boldsymbol{a}}(\boldsymbol{x})$ yields the performance evaluation of the algorithm on $\boldsymbol{x}$.  Notably, for each input $\boldsymbol{x} \in \mathcal{X}$, we can define the \textit{dual utility function corresponding to $\boldsymbol{x}$} as $u_{\boldsymbol{x}}^*: \mathcal{A} \rightarrow [0, H]$, where $u^*_{\boldsymbol{x}}({\boldsymbol{a}}) := u_{\boldsymbol{a}}(\boldsymbol{x})$ measures the performance for a specific problem instance $\boldsymbol{x}$ as a function of the parameter ${\boldsymbol{a}}$. Consequently, we can also define the dual utility function class $\mathcal{U}^* = \{u^*_{\boldsymbol{x}}: \mathcal{A} \rightarrow \mathbb{R} \mid \boldsymbol{x} \in \mathcal{X}\}$. It was shown in prior work \citep{balcan2021much,balcan2022provably,balcan2022structural,balcan2024new} that in many data-driven algorithm design problems, every function in the dual function class $\mathcal{U}^*$ adheres to a specific piecewise structure, which can be precisely defined as follows:

{

    \begin{definition}[Piecewise decomposable, {\color{\COMMENTCOLOR}\citealp{balcan2021much}}] \label{def:piecewise-structure}
    A function class $\cH \subseteq \bbR^\cA$ that maps a domain $\cA$ to $\bbR$ is $(\cF, \cG, k)$-piecewise decomposable for a class $\cG \subseteq \{0, 1\}^\cA$ and a class $\cF \subseteq \bbR^\cA$ of piece functions if the following holds: for every $h \in \cH$, there are $k$ boundary functions\footnote{At the risk of notation abuse, we sometimes use the term \textit{boundary functions} to refer to $g^{(i)}(\boldsymbol{a})$ rather than $\bbI(g^{(i)}(\boldsymbol{a}) \geq 0)$. We will ensure the context is clear when employing this shorthand.} $\bbI(g^{(1)}(\boldsymbol{a}) \geq 0), \dots, \bbI(g^{(k)}(\boldsymbol{a}) \geq 0) \in \cG$ and a piece function $f_{\mathbf{b}} \in \cF$ for each bit vector $\boldsymbol{b} \in \{0, 1\}^k$ such that for all $\boldsymbol{a} \in \cA$, $h(\boldsymbol{a}) = f_{\mathbf{b}_y}(\boldsymbol{a})$ where $\mathbf{b}_{\boldsymbol{a}} = (g^{(1)}(\boldsymbol{a}), \dots, g^{(k)}(\boldsymbol{a})) \in \{0, 1\}^k$.
    \end{definition}

}


\begin{figure}[ht!]
    \centering
    \subfigure[A demonstration of piecewise structure of $u^*_{\boldsymbol{x}}$ in sheer view. This has been shown in Figure \ref{fig:pfaffian-piecewisestructure-illustration}.]{
        \includegraphics[width=0.46\textwidth]{figures/matlab-plot.png}
        \label{fig:matlab}
    }
    \hfill  
    \subfigure[The corresponding piecewise structure of $u^*_{\boldsymbol{x}}$ in top view.]{
        \includegraphics[width=0.46\textwidth]{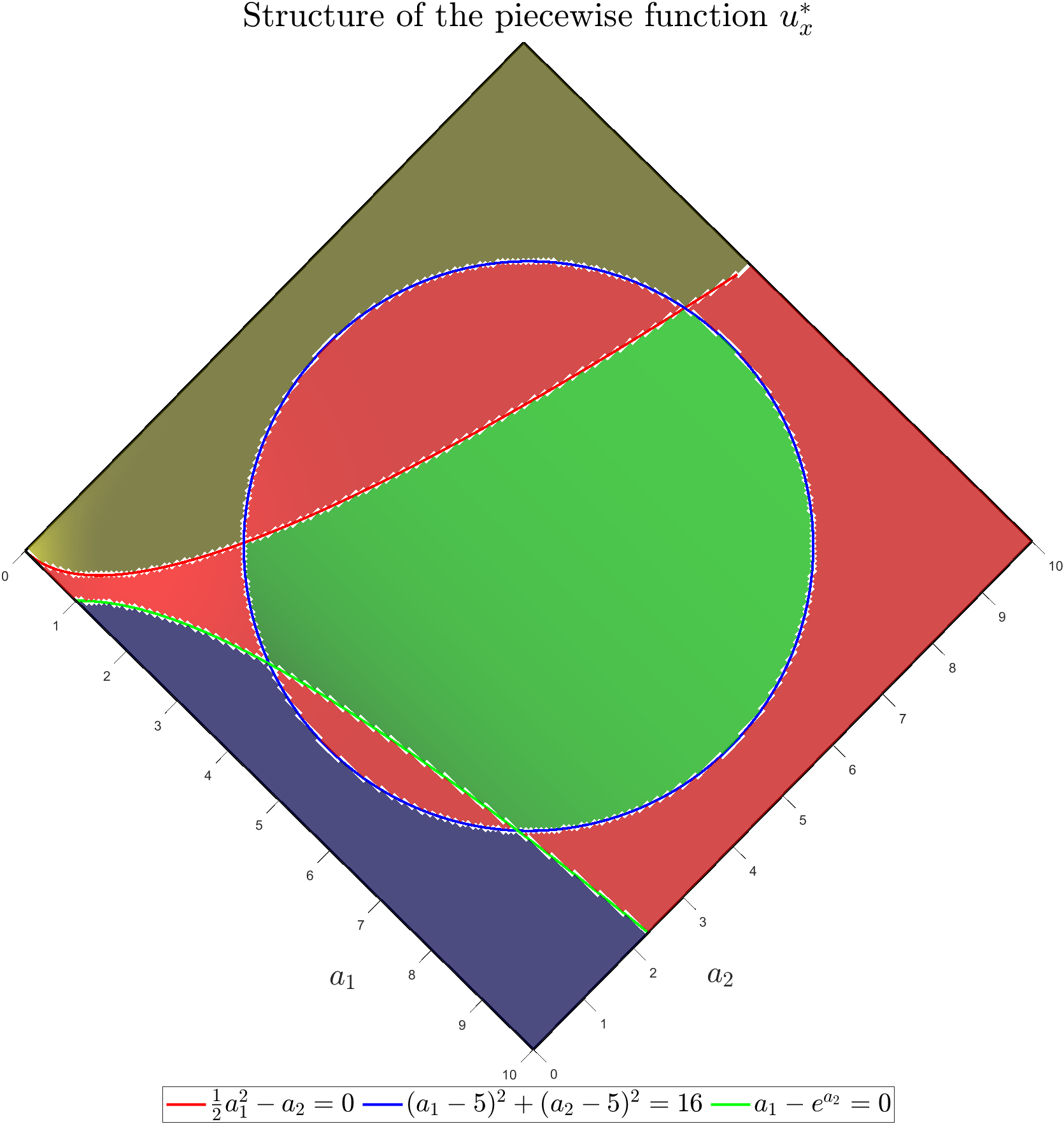}
        \label{fig:topview-matlab}
    }
    \caption{ An example of the original piecewise structure (Definition \ref{def:piecewise-structure}) and our proposed Pfaffian piecewise structure (Definition \ref{def:pfaffian-piecewise-structure}). Here, (a) demonstrates the sheer view of the piecewise structure of a specific dual utility function $u^*_{\boldsymbol{x}}$, while (b) shows the corresponding top view for better illustration of regions and their boundaries. As can be seen, there are three boundary functions $g_{\boldsymbol{x}, 1}(\boldsymbol{a}) = \frac{1}{2}a_1^2 - a_2$, $g_{\boldsymbol{x}, 2}(\boldsymbol{a}) = (a_1 - 5)^2 + (a_2 - 5)^2 - 16$, and $g_{\boldsymbol{x}, 3}(\boldsymbol{a}) = a_1 - e^{a_2}$, partitioning the domain $\cA$ into $7$ regions. In each region, the function $u^*_{\boldsymbol{x}}(\boldsymbol{a})$ takes the form of a Pfaffian function. What is not captured by the original piecewise structure Definition \ref{def:piecewise-structure} is that, in this example, there are only 4 forms that $u^*_{\boldsymbol{x}}(\boldsymbol{a})$ can take, which is either $a_1 + \frac{a_2}{2}$ (blue region), $e^{-0.2(a_1^2 + a_2^2)}$ (red regions), $\log(a_2) + 2$ (green region), and $\sqrt{a_1^2 + a_2^2 + \exp(0.1\sqrt{a_1})}$, (yellow region). It can be verified that all the piece and boundary functions are Pfaffian function from the Pfaffian chain $\cC_{\boldsymbol{x}}(\boldsymbol{a}, e^{a_2}, e^{-0.2(a_1^2 + a_2^2)}, \frac{1}{\sqrt{a_1}}, e^{0.1\sqrt{a_1}}, \frac{1}{\sqrt{a_1^2 + a_2^2 + \exp(0.1\sqrt{a_1})}})$.}
    \label{fig:piecewise}
\end{figure}

 An intuitive illustration of the piecewise structure can be found in Figure \ref{fig:piecewise}. Roughly speaking, if a function class satisfies the piecewise structure as defined in Definition \ref{def:piecewise-structure}, then for each function in such a class, the input domain is partitioned into multiple regions by $k$ boundary functions. Within each region, which corresponds to a $k$-element binary vector indicating its position relative to the $k$ boundary functions, the utility function is a well-behaved piece function. Based on this observation, \citet{balcan2021much} showed that for any algorithm, if the dual utility function  class satisfies the piecewise structure as defined in Definition \ref{def:piecewise-structure}, then the pseudo-dimension of the utility function class is bounded.


\begin{theorem}[{\color{\COMMENTCOLOR}\citealp{balcan2021much}}] \label{thm:stoc-21-main}
Consider the utility function class $\cU = \{u_{\boldsymbol{a}}: \cX \rightarrow [0, H] \mid {\boldsymbol{a}} \in \cA\}$. Suppose that the dual function class $\cU^*$ is $(\cF, \cG, k)$-piecewise decomposable with boundary functions $\cG \subseteq \{0, 1\}^\cA$ and piece functions $\cF \subseteq \bbR^\cA$. Then the pseudo-dimension of $\cU$ is bounded as follows
\begin{equation*} \label{thm:piecewise-structure-guarantee}
    \Pdim(\cU) = \cO((\Pdim(\cF^*) + \VCdim(\cG^*))\log (\Pdim(\cF^*) + \VCdim(\cG^*)) + \VCdim(\cG^*)\log k),
\end{equation*}
where $\cF^*$ and $\cG^*$ is the dual class of $\cF$ and $\cG$, respectively. 
\end{theorem}
\noindent Intuitively, Theorem \ref{thm:piecewise-structure-guarantee} allows us to bound the pseudo-dimension of the utility function class $\cU$, which is not well-behaved, by alternatively analyzing the dual boundary and piece function classes $\cF^*$ and $\cG^*$. However, \cite{bartlett2022generalization} demonstrate that for certain problems in which the piecewise structure involves only rational functions, naively applying Theorem \ref{thm:piecewise-structure-guarantee} can yield looser bounds compared to the approach based on the GJ framework \citep{goldberg1993bounding}. Besides, applying Theorem \ref{thm:stoc-21-main} requires the non-trivial task of analyzing the dual classes $\cF^*$ and $\cG^*$ of piece and boundary functions. This might cause trouble, such as leading to loose bounds ({\color{\COMMENTCOLOR}see e.g. \citealp{balcan2020learning}, Lemma 7}) or vacuous bounds ({\color{\COMMENTCOLOR}e.g. \citealp{bartlett2022generalization}, Appendix E.3}). This situation arises when the piece and boundary functions involve Pfaffian functions, motivating the need to design a refined approach.

\subsection{A refined piecewise structure for data-driven algorithm design} \label{sec:refined-piecewise-stucture}
In this section, we propose a more refined and concrete approach to derive learning guarantees for data-driven algorithm design problems where the utility functions exhibit a Pfaffian piecewise structure. {The key difference between our proposed frameworks and the framework by \citet{balcan2021much} is that we consider the following additional factors: (1) Both piece and boundary functions are Pfaffian functions from the same Pfaffian chain, and (2) the maximum number of the distinct forms that the piece function {\color{\COMMENTCOLOR}  $h_{\boldsymbol{x}, i}(\boldsymbol{a})$} can admit. Later, we will argue that by leveraging those extra structures, we can get a better pseudo-dimension upper bound by a logarithmic factor, compared to using the framework by \citet{balcan2021much}. The Pfaffian piecewise structure is formalized as below.}


{

    \begin{definition}[Pfaffian piecewise structure] \label{def:pfaffian-piecewise-structure}
    A function class $\cH \subseteq \bbR^\cA$ that maps domain $\cA \subseteq \bbR^d$ to $\bbR$ is said to be $(k_\cF, k_\cG, q, M, \Delta, d)$ piecewise structured if the following holds: for every $h \in \cH$, there are at most $k_\cG$ boundary functions of the forms $\bbI(g^{(1)}_h(\boldsymbol{a}) \geq 0), \dots, \bbI(g^{(k)}_h(\boldsymbol{a}) \geq 0)$, where $k \leq k_\cG$, and a piece function $f_{h, \mathbf{b}}$ for each binary vector $\mathbf{b} \in \{0, 1\}^{k}$ such that for all $\boldsymbol{a} \in \cA$, $h(\boldsymbol{a}) = f_{h, \mathbf{b}_{\boldsymbol{a}}}(\boldsymbol{a})$ where $\mathbf{b}_{\boldsymbol{a}} = (\bbI(g^{(1)}_h(\boldsymbol{a}) \geq 0), \dots, \bbI(g^{(k)}_h(\boldsymbol{a}) \geq 0)) \in \{0, 1\}^{k}$. Moreover, the piece functions $f_{h, \mathbf{b}}$ can take on of at most $k_\cF$ forms, i.e. $\abs{\left\{f_{h, \boldsymbol{b}} \mid \boldsymbol{b} \in \{0, 1\}^{k}\right\}} \leq k_\cF$, and all the piece and boundary functions $f_{h, \boldsymbol{b}}, g^{(i)}_{h}$ are Pfaffian functions of degree at most $\Delta$ over a Pfaffian chain $\cC_h$ of length at most $q$ and Pfaffian degree at most $M$.
    \end{definition}

}

\begin{figure}
    \centering
    \includegraphics[width=0.7\textwidth]{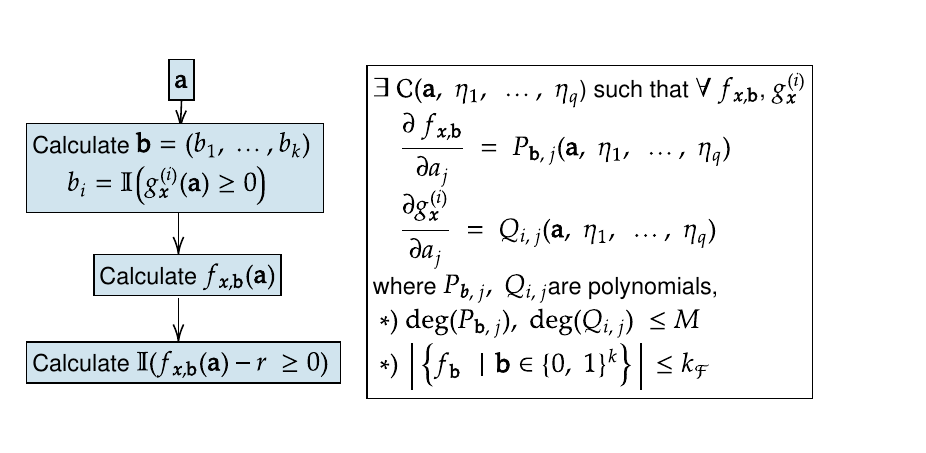}
    \caption{{A demonstration of how the computation of a dual utility function satisfying Pfaffian piecewise structure can be described by the Pfaffian GJ algorithm. Given an input $\boldsymbol{x} \in \cX$ and a threshold $r \in \bbR$, the function $u^*_{\boldsymbol{x}}$ is piecewise structured with boundary functions $g^{(i)}_{\boldsymbol{x}}$ (for $i = 1, \dots k$), and piece functions $f_{h, \mathbf{b}}$ ($\mathbf{b} \in \{0, 1\}^{k}$). Note that, the piece functions $f_{h, \mathbf{b}}$ can take at most $k_\cF$ forms and all the piece and boundary functions are Pfaffian functions from the chain $\cC$.
    }}
     \label{fig:piecewise-to-pfaffianeb-gj-framework}
\end{figure}

\noindent An intuitive illustration of Pfaffian piecewise structure and its comparison with the piecewise structure by \citet{balcan2021much} can be found in \autoref{fig:piecewise}. For a data-driven algorithm design problem with corresponding utility function class  $\cU = \{u_{\boldsymbol{a}}: \cX \rightarrow [0, H] \mid {\boldsymbol{a}} \in \cA\}$ where $\cA \subseteq \bbR^d$, we can see that if its dual utility function class $\cU^* = \{u^*_{\boldsymbol{x}}: \cA \rightarrow \bbR \mid \boldsymbol{x} \in \cX\}$ admits the Pfaffian piecewise structure as in Definition \ref{def:pfaffian-piecewise-structure}, then it can be computed using the Pfaffian GJ algorithm (see \autoref{fig:piecewise-to-pfaffianeb-gj-framework} for a visualization). {\color{\COMMENTEPQD} Therefore, we can use the established results for the Pfaffian GJ algorithm (Theorem \ref{thm:pfaffian-gj-algorithm}) to derive learning guarantees for such problems. We formalize this claim in the following theorem.}

\begin{theorem} \label{thm:piecewise-pfaffian}
Consider the utility function class $\cU = \{u_{\boldsymbol{a}}: \cX \rightarrow [0, H] \mid {\boldsymbol{a}} \in \cA\}$ where $\cA \subseteq \bbR^d$. Suppose that the dual function class $\cU^* = \{u^*_{\boldsymbol{x}}: \cA \rightarrow [0, H] \mid \boldsymbol{x} \in \cX\}$ is $(k_\cF, k_\cG, q, M, \Delta, d)$-Pfaffian piecewise structured, then the pseudo-dimension of $\cU$ is bounded as follows
\begin{equation*}
    \Pdim(\cU) \le d^2q^2+2dq\log(\Delta + M)+4dq\log d +2d\log \Delta (k_\cF + k_\cG)  + 16d.
\end{equation*}
\end{theorem}

\begin{proof}
     { 

        An intuitive explanation of this theorem can be found in \autoref{fig:piecewise-to-pfaffianeb-gj-framework}. Since $\cU^*$ admits $(k_\cF, k_\cG, q, M, \Delta, d)$-Pfaffian piecewise structure, then for any problem instance $\boldsymbol{x} \in \cX$ corresponding to the dual utility function $u^*_{\boldsymbol{x}}$, there are {\color{\COMMENTCOLOR} $k$ boundary functions} $\bbI(g_{\boldsymbol{x}}^{(1)} \geq 0), \dots, \bbI(g_{\boldsymbol{x}}^{(k)} \geq 0)$ where $k \leq k_\cG$, and a piece function $f_{\boldsymbol{x}, \boldsymbol{b}}$ for each binary vector $\boldsymbol{b} \in \{0, 1\}^{k}$ such that for all $\boldsymbol{a} \in \cA$, $u^*_{\boldsymbol{x}}(\boldsymbol{a}) = f_{\boldsymbol{x}, \boldsymbol{b}_{\boldsymbol{a}}}(\boldsymbol{a})$, where $\boldsymbol{b}_{\boldsymbol{a}} = (\bbI(g_{\boldsymbol{x}}^{(1)}(\boldsymbol{a}) \geq 0), \dots, \bbI(g_{\boldsymbol{x}}^{(k)}(\boldsymbol{a}) \geq 0))$. Therefore, for any problem instance $\boldsymbol{x}$ and real threshold $r \in \bbR$, the boolean value of $u^*_{\boldsymbol{x}}(\boldsymbol{a}) - r \geq 0$ for any $\boldsymbol{a} \in \cA$ can be calculated by an algorithm $\Gamma_{\boldsymbol{x}, r}$ described as follow: first calculating the boolean vector $\boldsymbol{b}_{\boldsymbol{a}}$, and then calculate the boolean value $f_{\boldsymbol{x}, \boldsymbol{b}_{\boldsymbol{a}}}(\boldsymbol{a}) - r \geq 0$.

        From assumption, note that $g_{\boldsymbol{x}}^{(i)}$ and $f_{\boldsymbol{x}, \boldsymbol{b}}$ are  Pfaffian functions of degree at most $\Delta$ from Pfaffian chain $\cC_{\boldsymbol{x}}$ of length at most $q$ and Pfaffian degree at most $M$. This means that $\Gamma_{\boldsymbol{x}, r}$ is a Pfaffian GJ algorithm. Therefore, combining with \autoref{thm:pfaffian-gj-algorithm}, we have the above claim.
     }
\end{proof}

\begin{remark}
    The details of the differences between our refined Pfaffian piecewise structure and the piecewise structure by \citet{balcan2021much} can be found in Appendix \ref{apx:detailed-comparison-stoc21-ours}. In short, compared to the framework by \citet{balcan2021much}, our framework offers: (1) an improved upper bound on the pseudo-dimension by a logarithmic factor (\autoref{thm-apx:using-stoc}) { compared to naively applying \citep{balcan2021much}}, and (2) a more concrete method for problems admitting a Pfaffian piecewise structure, without invoking dual piece and boundary function classes, which are non-trivial to analyze. This might  lead to loose bounds {\color{\COMMENTCOLOR}(see e.g. \citet{balcan2020learning}, Lemma 7, and  \citet{bartlett2022generalization}, Appendix E.3)}.
\end{remark}

\subsection{Special case: discontinuity-homogeneous function class} \label{sec:discontinuity-homogeneous}
{

    We now consider the case where all the dual utility functions $v^* \in \cV^*$ share the same discontinuity structure, which can be used to establish an improved bound for the utility function class $\cV$. We then argue that this analysis is particularly useful in some cases {\color{\COMMENTCOLOR} (e.g. Section \ref{sec:analyzing-via-approximation})}. 
    
    Concretely, consider a utility function class $\cV = \{v_{\boldsymbol{a}}: \cX \rightarrow [0, H] \mid {\boldsymbol{a}} \in \cA\}$ where $\cA \subseteq \bbR^d$, with the corresponding dual function class $\cV = \{v^*_{\boldsymbol{x}}: \cA \rightarrow [0, H] \mid \boldsymbol{x} \in \cX\}$. Different from Section \ref{sec:refined-piecewise-stucture}, here all the dual utility function $v^*_{\boldsymbol{x}}$ shares the same discontinuity structure: that is, there is a partition $\cP = \{\cA_1, \dots, \cA_n\}$ of the parameter space $\cA$ such that for any problem instance $\boldsymbol{x} \in \cX$, the dual function $v^*_{\boldsymbol{x}}$ {\color{\COMMENTCOLOR} restricted to any region $\cA_i$} is a Pfaffian function. In this case, we have the following refined bound for the utility function class $\cV$.
    
    \begin{corollary} \label{lm:approximation-peice-wise-struture}
        Consider a function class $\cV = \{v_{\boldsymbol{a}}: \cX \rightarrow [0, H] \mid {\boldsymbol{a}} \in \cA\}$ where $\cA \subseteq \bbR^d$. Assume there is a partition $\cP = \{\cA_1, \dots, \cA_n\}$ of the parameter space $\cA$ such that for any problem instance $\boldsymbol{x} \in \cX$, the dual function $v^*_{\boldsymbol{x}}$ is a Pfaffian function of degree at most $\Delta$ in region $\cA_i$ from a Pfaffian chain $\cC_{\boldsymbol{x}, \cA_i}$ of length at most $q$ and Pfaffian degree $M$. Then the pseudo-dimension of $\cV$ is upper bounded as follows
        \begin{equation*}
            \Pdim(\cV) = \cO(q^2d^2 + qd\log(\Delta + M) + qd\log d + \log n).
        \end{equation*}
    \end{corollary}

    \begin{remark}
        The detailed proof is provided in Appendix \ref{apx:analyzing-via-approximation}. Essentially, Lemma \ref{lm:approximation-peice-wise-struture} simplifies the analysis by restricting the complexity of the Pfaffian chain to individual regions $\mathcal{A}_i$, rather than across the entire partition $\mathcal{P}$. The insight here is that, compared to the Pfaffian piecewise structure (Theorem \ref{thm:stoc-21-main}), Lemma \ref{lm:approximation-peice-wise-struture} makes use of the fact that all the dual functions $v^*_{\boldsymbol{x}}$ share the same discontinuity structure dictated by a fixed partition $\cP$ of $\cA$. This shared structure significantly reduces the length of the Pfaffian chain, which is typically the dominant term in the upper bound.
    \end{remark} 

}

\subsubsection{Use case: analyzing via surrogate function class}  \label{sec:analyzing-via-approximation}
{
    In many applications, we might want to analyze the utility function class $\cU$ indirectly by studying a surrogate utility function class $\cV$, of which the dual function class $\cV^*$ is ``sufficiently close'' to $\cU^*$. This indirect approach offers several advantages. First, even the dual utility function $\cU^*$ may lack a clear structure or prove difficult to analyze, making it impractical to establish learning guarantees for $\cU$ by examining $\cU^*$ \citep{balcan2024new}. Second, when $\cU^*$ is excessively complex, deriving learning guarantees for $\cU$ through its analysis may lead to sub-optimal bounds. In such cases, analyzing a simpler surrogate function class $\cV$ can yield tighter bounds \citep{balcan2020refined}. 
    
    Formally, consider the utility function class $\cU = \{u_{\boldsymbol{a}}: \cX \rightarrow [0, H] \mid \boldsymbol{a} \in \cA\}$ with the corresponding dual utility function class $\cU^* = \{u_{\boldsymbol{x}}^*: \cA \rightarrow [0, H] \mid \boldsymbol{x} \in \cX\}$ {\color{\COMMENTEPQD} over instance space $\cX$ and parameter space $\cA\subseteq \mathbb{R}^d$}. Assume that for any dual utility function $u^*_{\boldsymbol{x}} \in \cU^*$, there is a function $v^*_{\boldsymbol{x}}: \cA \rightarrow [0, H]$ such that $\lpnorm{v^*_{\boldsymbol{x}} - u^*_{\boldsymbol{x}}}{\infty} \leq \delta$. We then construct the surrogate dual function class $\cV^* = \{v^*_{\boldsymbol{x}}: \cA \rightarrow [0, H] \mid \boldsymbol{x} \in \cX\}$, and the surrogate utility function class $\cV = \{v_{\boldsymbol{a}}: \cX \rightarrow [0, H] \mid \boldsymbol{a} \in \cA\}$.
    
    In this section, we consider a scenario where we approximate the parameterized utility function over a predefined partition $\mathcal{A}_1, \dots, \mathcal{A}_n$ of the parameter space $\mathcal{A}$. Formally, a partition $\cP$ of a set $\cA$ is a collection $\{\cA_1, \dots, \cA_n\}$ of non-empty subsets $\cA_i$ of $\cA$ that are pairwise disjoint and whose union is the entire set $\cA$. For any problem instance $\boldsymbol{x}$, the function $v^*_{\boldsymbol{x}}(\boldsymbol{a})$ restricted to $\cA_i$ (for any $i = 1, \dots, n$) is a Pfaffian function from some Pfaffian chain $\cC_{\boldsymbol{x}}$. This is a special case, as for every problem instance $\boldsymbol{x}$, the dual function $v_{\boldsymbol{x}}^*$ exhibits the same discontinuity structure, which can be leveraged to obtain a tighter bound.
    Our goal is for this property to recover the learning guarantee for the utility function class $\cU$. To do that, we proceed with the following steps: (1) derive learning guarantee for $\cV$ using the property of $\cV^*$, (2) using the relation between $\cU^*$ and $\cV^*$, derive learning guarantee for $\cU$ via $\cV$. Step (1) is resolved by Lemma \ref{lm:approximation-peice-wise-struture} above, and to proceed step (2), we first recall the following useful results.
}



{
    
     \begin{lemma}[{\color{\COMMENTCOLOR}\citealp{balcan2020refined}}] \label{lm:balcan-refined-icml}
        {\color{\COMMENTEPQD} Let $\cF = \{f_{\boldsymbol{a}}: \cX \rightarrow [0, H] \mid \boldsymbol{a} \in \cA \}$ and $\cG = \{g_{\boldsymbol{a}}: \cX \rightarrow [0, H] \mid \boldsymbol{a} \in \cA\}$ be two function classes parameterized by $\boldsymbol{a} \in \cA$ } and consist of functions mapping from $\cX$ to $[0, H]$. {\color{\COMMENTEPQD} For any set of problem instances $S \in \cX^N$}, we have 
        $$
        \hat{\mathscr{R}}_S(\cF) \leq {\color{\COMMENTCOLOR} \hat{\mathscr{R}}_S(\cG)}+ \frac{1}{\abs{S}}\sum_{\boldsymbol{x} \in S} \|f^*_{\boldsymbol{x}} - g^*_{\boldsymbol{x}}\|_\infty.
        $$ 
        Here $\hat{\mathscr{R}}_{S}(\cF)$ is the empirical Rademacher complexity of $\cF$ corresponding to the set of inputs $S = \{\boldsymbol{x}_1, \dots, \boldsymbol{x}_N\}$, i.e.
        {\color{\COMMENTCOLOR}
            $$
            \hat{\mathscr{R}}_{S}(\cF) = \bbE_{{\color{\COMMENTEPQD} \xi_1, \dots, \xi_N} \text{ i.i.d from unif. $\{-1, 1\}$}}\left[\frac{1}{N}\sup_{f \in \cF}\sum_{i = 1}^N \xi_i f(\boldsymbol{x}_i)\right].
            $$
        }
    \end{lemma}
    
    We also recall a standard result in learning theory, which draws a connection between empirical Rademacher complexity and the pseudo-dimension. 
    
    \begin{lemma}[{\color{\COMMENTCOLOR} \citealp{wainwright2019high}}] \label{lm:pdim-to-rademacher}
            Let $\cF$ be a 
            function class consisting of functions with bounded range $[0,H]$. Then {\color{\COMMENTCOLOR}$\mathscr{R}_N(\cF) = \cO\left(H\sqrt{\frac{\Pdim(\cF)}{N}}\right)$}. Here $\mathscr{R}_N(\cF) = \sup_{S \in \cX^N}\hat{\mathscr{R}}_S(\cF)$.
    \end{lemma}
    
    We are now ready to present the main result in this section, which allows us to establish learning guarantees for the utility function class $\cU$ via the surrogate function class $\cV$ satisfying a very specific piecewise Pfaffian structure. 
    
    \begin{lemma} \label{lm:approximation-peice-wise-struture-1}
    Consider the utility function class $\cU = \{u_{\boldsymbol{a}}: \cX \rightarrow [0, H] \mid {\boldsymbol{a}} \in \cA\}$ where $\cA \subseteq \bbR^d$. Assume that there exists a function class $\cV = \{v_{\boldsymbol{a}}: \cX \rightarrow \bbR \mid \boldsymbol{a} \in \cA\}$ such that: 
    \begin{enumerate}
        \item For any $\boldsymbol{x}$, we have $\|u^*_{\boldsymbol{x}} - v^*_{\boldsymbol{x}}\|_\infty \leq \xi$, and
        \item There is a partition $\cP = \{\cA_1, \dots, \cA_n\}$ of $\cA$ such that for any problem instance $\boldsymbol{x} \in \cX$, the dual function $v^*_{\boldsymbol{x}}$ is a Pfaffian function of degree at most $\Delta$ when restricted to region $\cA_i$, from a Pfaffian chain $\cC_{\boldsymbol{x}, \cA_i}$ of length at most $q$ and Pfaffian degree $M$.
    \end{enumerate}
    Then, for any  $\delta \in (0, 1)$, w.p. at least $1 - \delta$ over the draw of $m$ problem instances $S = \{\boldsymbol{x}_1, \dots, \boldsymbol{x}_m\} \sim \cD^m$, where $\cD$ is any problem distribution over $\cX$, we have
    {\color{\COMMENTCOLOR}
        $$
           \abs{\sup_{\boldsymbol{a} \in \cA}\bbE_{\boldsymbol{x} \sim \cD}[u_{\boldsymbol{a}}(\boldsymbol{x})] - \bbE_{\boldsymbol{x} \sim \cD}[u_{\hat{\boldsymbol{a}}_S}(\boldsymbol{x})]} = \cO\left( H\sqrt{\frac{q^2d^2 + qd\log((\Delta + M)d) + \log n}{m}} + H\sqrt{\frac{\log (1/\delta)}{m}} + \xi\right).
        $$
    }
    Here {\color{\COMMENTEPQD} $\hat{\boldsymbol{a}}_S \in \argmax_{\boldsymbol{a} \in \cA}\frac{1}{m}\sum_{i = 1}^m u_{\boldsymbol{a}}(\boldsymbol{x})$.}
    \end{lemma}
    \begin{proof} From Lemma \ref{lm:approximation-peice-wise-struture}, we have $\Pdim(\cV) = \cO(q^2d^2 + qd\log(\Delta + M) + qd\log d + \log n)$, which implies 
    $$
        \mathscr{R}_{m}(\cV) = {\color{\COMMENTCOLOR}\cO\left(H\sqrt{\frac{q^2d^2 + qd\log(\Delta + M) + qd\log d + \log n}{m}}\right)}.
    $$
    From Lemma \ref{lm:balcan-refined-icml}, we have
    $$
        \mathscr{R}_{m}(\cU) = {\color{\COMMENTCOLOR}\cO\left(H\sqrt{\frac{q^2d^2 + qd\log(\Delta + M) + qd\log d + \log n}{m}} + \xi\right)}.
    $$
    A standard learning theory result implies the final claim. 
    \end{proof}
}
\noindent {\color{\COMMENTCOLOR} A concrete application of the above general result to tuning the regularization parameter in regularized logistic regression appears in Section \ref{sec:logistic-distributional}}.

\section{Applications of the Pfaffian piecewise structure framework} \label{sec:applications}

In this section, we demonstrate how to leverage our proposed framework to establish new {\color{\CAMERAREADY} statistical} learning guarantees {for under-explored data-driven algorithm design problems}. { For convenience, the notation used in this section might be redefined for each application. }

\subsection{Data-driven agglomerative hierarchical clustering} \label{sec:linkage}
Agglomerative hierarchical clustering \citep{murtagh2012algorithms}, or AHC, is a versatile, two-stage clustering approach widely employed across various domains. In the first stage, data is organized into a hierarchical clustering tree, determining the order in which data points are merged into clusters. Subsequently, in the second stage, the cluster tree is pruned according to a specific objective function, for which some common choices are $k$-mean, $k$-median, or $k$-center \citep{lloyd1982least, xu2005survey}, to obtain the final clusters.

The first stage of AHC involves carefully designing linkage functions, which measure the similarity between clusters and determine the pair of clusters to be merged at each step. These linkage functions require a pairwise distance function $\delta$ between data points and calculate the distance between clusters based on the distances of their constituent points in a specific manner. Common linkage functions include single-linkage, complete-linkage, and average-linkage, and it is possible to generate infinite families of linkage functions by interpolating them \citep{balcan2017learning}, which can lead to different merge sequences when building the hierarchical clustering tree. 
Note that  {if two linkage functions generate the same hierarchical cluster tree, they will yield the same final clusters, irrespective of the objective function used in the subsequent pruning stage.} 

Although linkage functions are a crucial component of AHC, they are generally chosen heuristically without any theoretical guarantees. Recently, \citet{balcan2017learning} proposed a principled data-driven  approach for designing linkage functions. Similar to other data-driven settings, their analysis operates under the assumption that there exists an unknown, application-specific distribution for clustering instances. They then provide learning guarantees for some simple families of linkage functions, parameterized by a single parameter, that interpolates among single, complete, and average linkage. However, they assume that the pairwise distance function $\delta$ is fixed, whereas in practice, multiple distance functions, each with distinct properties and benefits, may be combined to achieve better performance. Subsequent work by \cite{balcan2020learning} proposes combining multiple pairwise distance functions by jointly learning their weights and the parameters of the linkage function. However, their analysis does not apply to all the families studied by \citet{balcan2017learning}. 

{

    \paragraph{Contributions.} In this section, we instantiate theoretical guarantees for novel data-driven AHC settings, where we near-optimally learn the parameter of the merge function family and the combination of multiple  distance functions. Compared to prior work, the setting we consider is more general---in prior work a combination of multiple distance metrics was only handled for simpler linear interpolations between single and complete linkage, while our techniques handle more sophisticated geometric interpolations as well that include average linkage. 

}

\subsubsection{Problem setting}
\paragraph{Distance function, linkage function, and linkage family.} Given a set of $n$ points $S \in \mathcal{X}^n$, where $\mathcal{X}$ denotes the data domain, and a distance function $\delta: \mathcal{X} \times \mathcal{X} \rightarrow \mathbb{R}_{\geq 0}$, the overall goal of clustering is to partition $S$ into clusters such that {\color{\COMMENTCOLOR} some notion of  intra-cluster distance} is minimized, and/or {\color{\COMMENTCOLOR}some notion of inter-cluster distance} is maximized. In the AHC approach, we first design a linkage function $m_\delta$ based on $\delta$, where $m_\delta(A, B)$ specifies the distance between two clusters $A$ and $B$. The cluster tree construction algorithm starts with $n$ singleton clusters, each containing a single data point, and successively merges the pair of clusters $A, B$ that minimizes the cluster-wise distance $m_\delta(A, B)$. This sequence of merges yields a hierarchical cluster tree, with the root corresponding to the entire set $S$, leaf nodes corresponding to individual points in $S$, and each internal node representing a subset of points in $S$ obtained by merging the point sets corresponding to its two child nodes. Subsequently, the cluster tree is pruned to obtain the final clusters via a dynamic programming procedure that optimizes a chosen objective function. Common objective functions include $k$-means, $k$-medians, and $k$-centers. Importantly, given a fixed objective function, if two linkage functions generate the same cluster tree for a given dataset, they will yield the same final clusters.

{ 
    As discussed previously, the point-wise distance function $\delta$ can be a convex combination of several distance functions chosen from a given set of distance functions $\boldsymbol{\delta} = \{\delta_1, \dots, \delta_L\}$, i.e., $\delta = \delta_{\boldsymbol{\beta}} = \sum_{i=1}^L \beta_i \delta_i$ for some $\boldsymbol{\beta} = (\beta_1, \dots, \beta_L) \in \Delta(L)$. Here $\Delta(L) = \{\boldsymbol{\beta} \in \bbR^L \mid \beta_i \geq 0, \sum_{i = 1}^L \beta_i = 1\}$ denotes the $(L-1)$-dimensional probability simplex. The combined distance function $\delta_{\boldsymbol{\beta}}$ is then used in the linkage function. In this work, we consider the following parameterized families of linkage functions:

}

\begin{align*}
    \cM_1 &= \Bigg\{m^{1,\alpha}_{\delta_{\boldsymbol{\beta}}}:(A,B)\mapsto \left(\min_{a\in A,b\in B}(\delta_{\boldsymbol{\beta}}(a,b))^\alpha+\max_{a\in A,b\in B}(\delta_{\boldsymbol{\beta}}(a,b))^\alpha\right)^{1/\alpha}, \alpha\in\bbR\cup\{-\infty,\infty\}\setminus\{0\}\Bigg\},\\
    \cM_2 &= \Bigg\{m^{2,\alpha}_{\delta_{\boldsymbol{\beta}}}:(A,B)\mapsto \left(\frac{1}{|A||B|}\sum_{a\in A,b\in B}(\delta_{\boldsymbol{\beta}}(a,b))^\alpha\right)^{1/\alpha}, \alpha\in\bbR\cup\{-\infty,\infty\}\setminus\{0\}\Bigg\}, \\
    \cM_3 &= \Bigg\{m^{3,\alpha}_{\boldsymbol{\delta}}:(A,B)\mapsto \left(\frac{1}{|A||B|}\sum_{a\in A,b\in B}\Pi_{i\in[L]}(\delta_i(a,b))^{\alpha_i}\right)^{1/\sum_i\alpha_i}, \alpha_i\in\bbR\cup\{-\infty,\infty\}\setminus\{0\}\Bigg\}.
\end{align*}
The linkage function family $\mathcal{M}_1$ interpolates between single and complete linkage. The linkage function family $\mathcal{M}_2$ is called the versatile linkage \citep{fernandez2020versatile}, which interpolates among single, complete, and average linkage. The family $\mathcal{M}_3$ is another generalization of single, complete, and average linkages that incorporates multiple pairwise distance functions. Note that in $\mathcal{M}_3$, we do not combine all the given distance functions $\boldsymbol{\delta} = \{\delta_1, \dots, \delta_L\}$ into one but treat them separately. Precisely, if we set $\alpha_i = 1$, and $\alpha_j = 0$ for all $j \neq i$, we have average linkage with respect to the distance function $\delta_i$; if we set $\alpha_i = \infty$, and $\alpha_j = 0$ for all $j \neq i$, we have complete linkage with respect to the distance function $\delta_i$; and if we set $\alpha_i = -\infty$, and $\alpha_j = 0$ for $j \neq i$, we have the well-known single linkage with respect to $\delta_i$.

\paragraph{Data-driven AHC.} In the data-driven setting, we are given multiple problem instances $P_1, \dots, P_N$, where each problem instance $P_i = (S_i, \boldsymbol{\delta})$ consists of a set of points $S_i$ that need to be clustered and a fixed set of distance functions $\boldsymbol{\delta}$ that is shared across problem instances. Assuming that there exists an underlying problem distribution that represents a specific application domain, we aim to determine how many problem instances are sufficient to learn the parameters $\alpha$ of the linkage function families and the weights $\boldsymbol{\beta}$ of the distance functions (for the families $\cM_1$ and $\cM_2$). These questions are equivalent to analyzing the pseudo-dimension of the following classes of utility functions.

For $i \in \{1, 2\}$, let $A^{\alpha, \beta}_i$ denote the algorithm that takes a problem instance $P = (S, \boldsymbol{\delta})$ as input and returns a cluster tree $A^{\alpha, \beta}_i(S, \boldsymbol{\delta}) \in \cT$, where $\cT$ is the set of all possible cluster trees, by using the interpolated merge function $m^{1, \alpha}_{\delta_{\beta}}$. Then given an objective function,  for example, the $k$-means objective, the pruning function $\mu_k: \cT \rightarrow \cS_k$ takes as input a clustering tree, and returns a $k$-partition $\{\cP_1, \dots, \cP_k\}$ of $S$ that minimizes such objective function. Then, given a target cluster $\cY = \{C_1, \dots, C_k\}$, the performance of the algorithm $A_i^{\alpha, \beta}$ is {\color{\COMMENTEPQD} measured by the Hamming distance between the produced clusters $\mu_k(A_i^{\alpha, \beta}(S, \boldsymbol{\delta})) = \{\cP_1, \dots, \cP_k\}$ and the target clusters $\cY = \{C_1, \dots, C_k\}$}
$$\color{\COMMENTEPQD}
    u(\mu_k(A_i^{\alpha, \beta}(S, \boldsymbol{\delta})), \cY) = 1 - \min_{\sigma \in \bbS_k} \frac{1}{|S|}\sum_{i = 1}^k|C_i \setminus P_{\sigma_i}|.
$$

\noindent Here, the minimum is taken over the set of all permutations of $\bbS_k = \{1, \dots, k\}$. We can clearly see that $\ell$ takes value in $\{0, \frac{1}{n}, \dots, \frac{n - 1}{n}, 1\}$.  However, note that given an objective function, the cluster tree is equivalent to the produced clusters. Thus, the performance of the algorithm is completely determined by the cluster tree it produces, and for simplicity, we can express the performance of $A^{\alpha, \beta}_i$ as $u^{\alpha, \beta}_i: (S, \boldsymbol{\delta}) \mapsto v(A^{\alpha, \beta}_i(S, \boldsymbol{\delta}))$, where $v$ is a function that maps a cluster tree to a value in $[0, 1]$. 

Similarly, we denote $A_3^{\alpha}$ as the cluster tree building algorithm that takes $P = (S, \boldsymbol{\delta})$ as the input and returns a cluster tree $A_3^{\alpha}(S, \boldsymbol{\delta})$ by using the linkage function $m_{\boldsymbol{\delta}}^{3, \alpha}$. Again, we have that $u_3^{\alpha}: (S, \boldsymbol{\delta}) \mapsto v(A_3^{\alpha}(S, \boldsymbol{\delta}))$ represents the performance of the algorithm. We  consider the following function classes that measure the performance of the above algorithm families: 
\begin{equation*}
    \begin{aligned}
        &\cH_1=\{u^{\alpha,\beta}_1:(S, \boldsymbol{\delta})\mapsto u(A^{\alpha,\beta}_1(S, \boldsymbol{\delta})) \mid \alpha \in \bbR\cup \{-\infty, +\infty\}, \beta \in \Delta([L])\}, \\
        &\cH_2=\{u^{\alpha,\beta}_2:(S, \boldsymbol{\delta})\mapsto u(A^{\alpha,\beta}_2(S, \boldsymbol{\delta}))  \mid \alpha \in \bbR\cup \{-\infty, +\infty\}, \beta \in \Delta([L])\}, \\
        &\cH_3=\{u^{\alpha}_3:(S, \boldsymbol{\delta})\mapsto u(A^{\alpha}_3(S, \boldsymbol{\delta})) \mid  \alpha_i\in\bbR\cup\{-\infty,\infty\}\setminus\{0\}\}.
    \end{aligned}
\end{equation*}
In the next section, we analyze the pseudo-dimension of the function classes described above, which provides insights into the number of problem instances required to learn near-optimal AHC parameters for a specific application domain.

\subsubsection{Generalization guarantees for data-driven hierarchical clustering} 
{ In this section, we will leverage our proposed Pfaffian piecewise structure (\autoref{thm:piecewise-pfaffian}) to establish the upper bounds for the pseudo-dimension of $\cH_1$, $\cH_2$, and $\cH_3$ described above. }First, we will instantiate a pseudo-dimension upper bound for $\cH_1$, which is formalized as the following Theorem \ref{thm:pdim-clustering-1}.

\begin{theorem} \label{thm:pdim-clustering-1}
    Let $\cH_1$ be a class of functions
    $$\cH_1=\{u^{\alpha,\beta}_1:(S, \boldsymbol{\delta})\mapsto u(A^{\alpha,\beta}_1(S, \boldsymbol{\delta})) \mid \alpha \in \bbR\cup \{-\infty, +\infty\}, \beta \in \Delta([L])\}$$
mapping clustering instances $(S, \boldsymbol{\delta}))$ to $[0,1]$ by using merge functions from class $\cM_1$ and an arbitrary objective function. Then $\Pdim(\cH_1)=\cO(n^4L^2)$. 
\end{theorem}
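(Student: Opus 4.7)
The plan is to apply the Pfaffian piece-wise structure framework (Theorem~\ref{thm:piece-wise-pfaffian}) to the dual utility class. For any fixed problem instance $P = (S, \boldsymbol{\delta})$ with $|S| = n$, I will show that the dual utility $u^*_P(\alpha, \beta)$ is piece-wise constant over the parameter space $(\alpha, \beta) \in \bbR \times \Delta(L)$, with boundaries that are Pfaffian functions over a single Pfaffian chain of length $O(n^2)$.

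I would first identify the boundary functions that induce the piece-wise decomposition. The output cluster tree of $A_1^{\alpha, \beta}(P)$ is determined entirely by the sequence of greedy merges, which in turn depends on two orderings: the ordering of inter-point distances $\delta_\beta(a,b) = \sum_i \beta_i \delta_i(a,b)$ for $(a,b) \in S^2$ (which dictates the min and max pairs in each merge function evaluation), and the ordering of merge function values $m^{1,\alpha}_{\delta_\beta}(A,B)$ across candidate cluster pairs (which dictates the chosen merges). The first ordering is captured by the $O(n^4)$ linear (in $\beta$) boundaries $\delta_\beta(a,b) - \delta_\beta(a',b') = 0$. Within any cell of the arrangement defined by these, the min/max pairs are fixed for every cluster pair the algorithm could ever encounter, so the second ordering reduces to equations of the form $\delta_\beta(a_1,b_1)^\alpha + \delta_\beta(a_2,b_2)^\alpha - \delta_\beta(a'_1,b'_1)^\alpha - \delta_\beta(a'_2,b'_2)^\alpha = 0$, contributing an additional $\mathrm{poly}(n)$ Pfaffian boundaries.

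Next, I would construct a shared Pfaffian chain $\mathcal{C}$ by appending, for each pair $(a,b) \in S^2$, the three functions $1/\delta_\beta(a,b)$, $\ln \delta_\beta(a,b)$, and $\delta_\beta(a,b)^\alpha$, exactly as in Example~2 of the preliminaries. This gives a chain of length $q = O(n^2)$ and Pfaffian degree $M = O(1)$, over which every Type~1 boundary (linear in $\beta$) and every Type~2 boundary (a sum of four chain functions) is Pfaffian of degree $\Delta = O(1)$. The piece functions are constants with values in $\{0, 1/n, \ldots, 1\}$, so $k_\cF \le n+1$, while $k_\cG = \mathrm{poly}(n)$ by the counts above.

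Finally, plugging $d = L$ (the effective dimension of $(\alpha, \beta) \in \bbR \times \Delta(L)$), $q = O(n^2)$, $M = \Delta = O(1)$, and $k_\cF + k_\cG = \mathrm{poly}(n)$ into Theorem~\ref{thm:piece-wise-pfaffian}, the dominant term $d^2 q^2 = O(L^2 n^4)$ gives the desired pseudo-dimension bound. The main obstacle I anticipate is the clean verification that all relevant Type~2 boundary functions indeed lie in this single chain $\mathcal{C}$ (they do, since each is an integer combination of the $\delta_\beta(\cdot,\cdot)^\alpha$ terms already present), together with careful handling of the boundary parameter values $\alpha \in \{-\infty, 0, +\infty\}$, which correspond to single, undefined, and complete linkage and can be treated as limiting or isolated cases that do not inflate the bound.
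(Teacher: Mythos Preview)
Your proposal is correct and takes essentially the same approach as the paper: build the Pfaffian chain of length $O(n^2)$ from the triples $1/\delta_\beta(a,b),\ \ln\delta_\beta(a,b),\ \delta_\beta(a,b)^\alpha$ over all pairs $(a,b)\in S^2$, observe that the piece functions are constants in $\{0,1/n,\dots,1\}$, and invoke Theorem~\ref{thm:piece-wise-pfaffian} with $d=\Theta(L)$, $q=\Theta(n^2)$, $M,\Delta=O(1)$ so that the $d^2q^2$ term dominates. The only minor difference is that the paper omits your Type~1 linear boundaries and directly enumerates all $n^8$ eight-tuples $(a_1,b_1,a_2,b_2,a_1',b_1',a_2',b_2')$ as the boundary functions---by continuity of $m^{1,\alpha}_{\delta_\beta}$ in $(\alpha,\beta)$, any change in the merge comparison must pass through one of these $n^8$ hypersurfaces even when the $\argmin/\argmax$ pairs are switching, so your extra linear boundaries are redundant but harmless.
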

    
\begin{proof}
    \textbf{Overview.\quad} The high-level idea is to show that the dual utility function $u_{1, P}^*: (\alpha, \boldsymbol{\beta}) \mapsto u(A^{\alpha,\beta}_1(S, \boldsymbol{\delta}))$ for any fixed problem instance $P = (S, \boldsymbol{\delta})$ exhibits a piecewise structure: its parameter space is partitioned by multiple boundary functions, and within each region, the cluster tree remains unchanged, implying that the utility function is constant. We then characterize the number and complexity of the boundary functions, which we show belong to a Pfaffian system. Subsequently, we can utilize our framework to obtain a bound on the pseudo-dimension of $\mathcal{H}_1$.

    \textbf{Proof details.\quad}Fix a problem instance $P = (S, \boldsymbol{\delta})$, and consider the dual utility function $u_{1, P}^*: (\alpha, \boldsymbol{\beta}) \mapsto u(A^{\alpha,\beta}_1(S, \boldsymbol{\delta}))$. Suppose $A,B\subseteq S$ and $A',B'\subseteq S$ are two candidate clusters at some merge step of the algorithm. Then $A,B$ is preferred for merging over $A',B'$ iff
    $$\left(\min_{a\in A,b\in B}(\delta_{\boldsymbol{\beta}}(a,b))^\alpha+\max_{a\in A,b\in B}(\delta_{\boldsymbol{\beta}}(a,b))^\alpha\right)^{1/\alpha}\le \left(\min_{a\in A',b\in B'}(\delta_{\boldsymbol{\beta}}(a,b))^\alpha+\max_{a\in A',b\in B'}(\delta_{\boldsymbol{\beta}}(a,b))^\alpha\right)^{1/\alpha},$$
    or equivalently,
    $$
        (\delta_{\boldsymbol{\beta}}(a_1,b_1))^\alpha+(\delta_{\boldsymbol{\beta}}(a_2,b_2))^\alpha\le (\delta_{\boldsymbol{\beta}}(a_1',b_1'))^\alpha+(\delta_{\boldsymbol{\beta}}(a_2',b_2'))^\alpha,
    $$
    where 
    $$
        (a_1, b_1) \in \argmin_{a\in A,b\in B}(\delta_{\boldsymbol{\beta}}(a,b))^\alpha, (a_2,b_2) \in \argmax_{a\in A,b\in B}(\delta_{\boldsymbol{\beta}}(a,b))^\alpha, 
    $$  
    
    $$
        (a'_1, b'_1) \in \argmin_{a\in A',b\in B'}(\delta_{\boldsymbol{\beta}}(a,b))^\alpha, \text{ and }(a'_2, b'_2) \in \argmax_{a\in A',b\in B'}(\delta_{\boldsymbol{\beta}}(a,b))^\alpha.
    $$ 
    
    { Each possible choice of the points $a_1,b_1,a_2,b_2,a'_1,b'_1,a'_2,b'_2$ corresponds to a boundary function in the form 
    $$
        \bbI((\delta_{\boldsymbol{\beta}}(a_1,b_1))^\alpha+(\delta_{\boldsymbol{\beta}}(a_2,b_2))^\alpha - (\delta_{\boldsymbol{\beta}}(a_1',b_1'))^\alpha+(\delta_{\boldsymbol{\beta}}(a_2',b_2'))^\alpha \geq 0).
    $$
    }
    
    Among all possible choices of the points $a_1,b_1,a_2,b_2,a'_1,b'_1,a'_2,b'_2$, we get at most $n^8$ distinct boundary conditions across which the merge decision at any step of the algorithm may change

    We next show that the boundary functions constitute a Pfaffian system in $\alpha,\beta_1,\dots,\beta_L$ and bound its complexity. For each pair of points $a,b\in S$, define $f_{a,b}(\alpha,{\boldsymbol{\beta}}):=\frac{1}{\delta_{\boldsymbol{\beta}}(a,b)}$, $g_{a,b}(\alpha,{\boldsymbol{\beta}}) := \ln{\delta_{\boldsymbol{\beta}}(a,b)}$ and $h_{a,b}(\alpha,{\boldsymbol{\beta}}) :={\delta_{\boldsymbol{\beta}}(a,b)}^{\alpha}$. Recall $\delta_{\boldsymbol{\beta}}(a,b)=\sum_{i=1}^L\beta_i\delta_i(a,b)$. Consider the chain $\cC(\alpha, {\boldsymbol{\beta}}, f_{a,b}, g_{a,b}, h_{a,b})$ of length $q = 3n^2$, for $a, b \in S$. We will show that these functions form a Pfaffian chain of Pfaffian degree $M = 2$. Indeed, we have for each $a,b\in S$,
    \begin{equation*}
        \begin{aligned}
            &\frac{\partial f_{a,b}}{\partial \alpha}=0, &&\frac{\partial f_{a,b}}{\partial \beta_i}=-\delta_i(a,b)f_{a,b}^2,\\
    &\frac{\partial g_{a,b}}{\partial \alpha}=0, &&\frac{\partial g_{a,b}}{\partial \beta_i}=\delta_i(a,b)f_{a,b},\\
    &\frac{\partial h_{a,b}}{\partial \alpha}=g_{a,b}h_{a,b}, &&\frac{\partial {h}_{a,b}}{\partial \beta_i}=\delta_i(a,b)f_{a,b}h_{a,b},\\
        \end{aligned}
    \end{equation*}
which establishes the claim. The boundary conditions can clearly be all written in terms of the functions $\{h_{a,b}\}_{a,b\in S}$, meaning that the degree $\Delta = 1$. Note that the piece functions are constant and can only take value in $\{0, \frac{1}{n}, \dots, 1\}$, meaning that $k_\cF = n + 1$. Then we conclude that $\cH_1^*$ admits $(n + 1, n^8, 3n^2, 2, 1, L + 1)$-Pfaffian piecewise structure. Applying Theorem \ref{thm:piecewise-pfaffian} now gives that $\Pdim(\cH_1)=\cO(n^4L^2+n^2L\log L+L\log(n^8 + n + 1))=\cO(n^4L^2)$. 
\end{proof}

\noindent Similarly, we also have the upper bound for the pseudo-dimension of $\cH_2$, and $\cH_3$.

\begin{theorem} \label{thm:pdim-clustering-2}
    Let $\cH_2$ be a class of functions
    $$\cH_2=\{u^{\alpha,{\boldsymbol{\beta}}}_2 \colon (S, \boldsymbol{\delta})\mapsto u(A^{\alpha,{\boldsymbol{\beta}}}_2(S, \boldsymbol{\delta}))  \mid \alpha \in \bbR\cup \{-\infty, +\infty\}, {\boldsymbol{\beta}} \in \Delta([L])\}$$
mapping clustering instances $(S, \boldsymbol{\delta})$ to $[0,1]$ by using merge functions from class $\cM_2$ and an arbitrary merge function. Then $\Pdim(\cH_2)=\cO(n^4L^2)$.  
\end{theorem}

\begin{theorem} \label{thm:pdim-clustering-3}
    Let $\cH_3$ be a class of functions
    $$\cH_3=\{u^{\alpha}_3:(S, \boldsymbol{\delta})\mapsto u(A^{\alpha}_3(S, \boldsymbol{\delta})) \mid  \alpha_i\in\bbR\cup\{-\infty,\infty\}\setminus\{0\}\}$$
mapping clustering instances $(S, \boldsymbol{\delta})$ to $[0,1]$ by using merge functions from class $\cM_3$. Then $\Pdim(\cH_3)=\cO(n^4L^2)$. 
\end{theorem}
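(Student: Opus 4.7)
My plan is to mirror the strategy used for Theorem \ref{thm:pdim-clustering-1}. I would fix a problem instance $(S,\boldsymbol{\delta})$ and first argue that, as $\alpha\in(\bbR\setminus\{0\})^L$ varies, the cluster tree produced by $A_3^\alpha$ (and hence the value $u_3^\alpha(S,\boldsymbol{\delta})$) changes only when the ordering of candidate merge costs flips. Each such event is governed by the comparison of two expressions of the form $\bigl(\frac{1}{|A||B|}\sum_{a\in A,b\in B}\prod_i \delta_i(a,b)^{\alpha_i}\bigr)^{1/\sum_i\alpha_i}$. Since the inner sums are positive, this monotone transformation is equivalent---up to the sign of $\sum_i\alpha_i$---to a comparison of the inner sums themselves, at the cost of an extra linear boundary $\sum_i\alpha_i=0$.

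The central Pfaffian-chain construction I intend to use defines, for each ordered pair $a,b\in S$, the function $h_{a,b}(\alpha):=\prod_{i\in[L]}\delta_i(a,b)^{\alpha_i}=\exp\bigl(\sum_{i=1}^L c_{a,b,i}\alpha_i\bigr)$ with $c_{a,b,i}:=\ln\delta_i(a,b)$. These $n^2$ functions together with the variables $\alpha$ should form a Pfaffian chain of length $q=n^2$ and Pfaffian degree $M=1$, since $\partial h_{a,b}/\partial \alpha_j = c_{a,b,j}\,h_{a,b}$. Every merge-comparison boundary is then a linear (degree $\Delta=1$) polynomial in the $h_{a,b}$, namely $\frac{1}{|A||B|}\sum_{a\in A,b\in B}h_{a,b}-\frac{1}{|A'||B'|}\sum_{a\in A',b\in B'}h_{a,b}=0$, and the piece function on each resulting cell is constant.

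With the Pfaffian structure in hand, I would then count the remaining parameters: piece functions take values in $\{0,1/n,\dots,1\}$, so $k_\cF=n+1$; boundary functions can be crudely upper bounded by the number of quadruples of subsets of $S$ that could ever appear as candidate cluster pairs, giving $k_\cG=2^{\cO(n)}$ (a tighter polynomial count is possible but unnecessary). The dual class $\cH_3^*$ would therefore be $(n+1,\,2^{\cO(n)},\,n^2,\,1,\,1,\,L)$-Pfaffian piece-wise structured, and substituting into Theorem \ref{thm:piece-wise-pfaffian} would yield $\Pdim(\cH_3)=\cO(n^4L^2)$, since the dominant term is $d^2 q^2 = L^2\cdot n^4$.

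The main obstacle I anticipate is handling the variable exponent $1/\sum_i\alpha_i$, which is itself a function of the parameters and therefore does not fit directly inside a Pfaffian chain of bounded length. My workaround is to observe that only the \emph{merge decision} ever affects the utility, and that decision depends only on comparisons of the inner sums, modulo one extra linear boundary separating the two signs of $\sum_i\alpha_i$. A secondary bookkeeping issue---bounding the number of cluster pairs that can arise across all parameter settings---is more delicate than in the $\cM_1$ case (where only the min/max pair of points mattered, yielding the clean $n^8$ count), but even the crude $2^{\cO(n)}$ estimate suffices, since $\log k_\cG$ enters only in a lower-order term of Theorem \ref{thm:piece-wise-pfaffian}.
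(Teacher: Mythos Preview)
Your proposal is correct and matches the paper's argument almost exactly: the same Pfaffian chain $\{h_{a,b}=\prod_i\delta_i(a,b)^{\alpha_i}\}$ of length $n^2$ and Pfaffian degree $1$, the same degree-$1$ boundary comparisons of the inner sums, the same $k_\cF=n+1$, and the same crude $k_\cG=2^{\cO(n)}$ count (the paper uses $\binom{2^n}{2}^2\le 2^{4n}$), yielding the identical Pfaffian piece-wise structure parameters and final bound via Theorem~\ref{thm:piece-wise-pfaffian}. You are in fact slightly more careful than the paper in explicitly flagging the sign of $\sum_i\alpha_i$ and adding the corresponding linear boundary; the paper simply writes ``or equivalently'' when stripping the outer $1/\sum_i\alpha_i$ exponent.
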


\noindent The detailed proofs of Theorem \ref{thm:pdim-clustering-2}, \ref{thm:pdim-clustering-3} for the function classes $\mathcal{H}_2$, and $\mathcal{H}_3$ can be found in Appendix \ref{appx:linkage-proofs}. Although these function classes share the same pseudo-dimension asymptotic upper bound, their structures differ, necessitating separate analyses and leading to distinct Pfaffian piecewise structures. To show that the dual function classes of $\cH_1$, $\cH_2$, and $\cH_3$ admit Pfaffian piecewise structure, we analyze the transition condition on the hyperparameters when the preference for merging two candidate clusters $A, B$ switches to merging a different pair of clusters $A', B'$ instead, at any merge step of the hierarchical clustering algorithm.
The transition condition corresponds to an equality involving Pfaffian functions of the parameters $\alpha$ and ${\boldsymbol{\beta}}$. All of such equations corresponding to each tuple $(A, B, A', B') \subset S^4$ will divide the parameter space into regions, in each of which the AHC algorithm produces the same clustering tree, leading to the same performance. After this step, we construct the Pfaffian chains for each function in function classes $\cH_1$, $\cH_2$, and $\cH_3$, where the difference naturally lies in the form of functions in those function classes. We then carefully analyze the complexities of the Pfaffian chain of those Pfaffian functions to obtain the above bounds.

\subsection{Data-driven graph-based semi-supervised learning} \label{sec:graph-based-SSL}
Semi-supervised learning \citep{chapelle2006semi} is a learning paradigm {\color{\COMMENTCOLOR}designed for settings} where labeled data is scarce due to expensive labeling processes. This paradigm leverages unlabeled data in addition to a small set of labeled samples for effective learning. A common semi-supervised learning approach is the graph-based method \citep{zhu2009introduction, chapelle2006semi}, which captures relationships between labeled and unlabeled data using a graph. In this approach, nodes represent data points, and edges are constructed based on the similarity between data point pairs, measured by a given distance function. Optimizing a labeling function on this graph helps propagate labels from the labeled data to the unlabeled data.

A large body of research focuses on how to learn such labeling functions given the graph, including using $st$-mincuts \citep{blum2001learning}, optimizing harmonic objective with soft mincuts \citep{zhu2003semi}, label propagation \citep{zhu2002learning}, among others. Recent  work by \citet{Du2025TuningAA} examines the problem of data-driven tuning of hyperparameters in label propagation based algorithms. However, it is  crucial to  design a good graph by setting the edge weights between the data point appropriately in order for these label optimization algorithms to work well \citep{zhu2009introduction}.

Despite its significance, the graph is usually considered given or constructed using heuristic methods without theoretical guarantees \citep{zhu2005semi, zemel2004proximity}. Recently, \citet{balcan2021data} proposed a novel data-driven approach for constructing the graph, by learning the parameters of the graph kernel underlying the graph construction, from the problem instances at hand. Each problem instance $P$ consists of sets of labeled $\cL$ and unlabeled data $\cU$ and a distance metric $d$. Assuming that all problem instances are drawn from an underlying, potentially unknown distribution, they provide guarantees for learning near-optimal graph kernel parameters for such a distribution. { Nonetheless, they consider only a single distance function, whereas in practical applications, combining multiple distance functions, each with its unique characteristics, can improve the graph quality and typically result in better outcomes compared to utilizing a single metric \citep{balcan2005person}. } 

{

    \paragraph{Contributions.} In this section, we consider a generalized and more practical setting for data-driven graph-based semi-supervised learning, where we learn the parameters of the commonly-used  Gaussian RBF kernel $w_{\sigma, \boldsymbol{\beta}}(u, v) = \exp(-\delta(u, v) / \sigma^2)$ and the weights $\boldsymbol{\beta} \in \Delta(L) = \{\boldsymbol{\beta} \in \bbR^L \mid \beta_i \geq 0, \sum_{i = 1}^L \beta_i = 1\} $ of $\delta = \sum_{i = 1}^L\beta_i \delta_i$ which is a convex combination of multiple distance functions for constructing the graph.

}
\subsubsection{Problem setting} 
\paragraph{Graph-based semi-supervised learning with Gaussian RBF Kernel.} In the graph-based semi-supervised learning with Gaussian RBF kernel, we are given a few labeled samples $\cL \subset \cX\times \cY$, a large number of unlabeled points $\cU\subset \cX$, and a set of distance functions $\boldsymbol{\delta} = \{\delta_1, \dots, \delta_L\}$, where $\delta_i:\cX\times\cX\rightarrow \mathbb{R}{\geq0}$ for $i = 1, \dots, L$. Here, $\cX$ denotes the data space, and $\cY=\{0,1\}$ denotes the binary classification label space. To extrapolate labels from $\cL$ to $\cU$, a graph $G^{\sigma, \boldsymbol{\beta}}$ is constructed with the node set $\cL\cup \cU$ and edges weighted by the Gaussian RBF graph kernel {\color{\COMMENTEPQD} $w_{\sigma, \boldsymbol{\beta}}(x_1, x_2)=\exp(-\delta(x_1, x_2)/\sigma^2)$ for $x_1, x_2 \in \cL \cup \cU$}, where $\sigma$ is the bandwidth parameter, and $\delta = \sum_{i = 1}^L\beta_i \delta_i$ is a convex combination of the given distance functions. After constructing the graph $G^{\sigma, \boldsymbol{\beta}}$, a popular graph labeling algorithm called the harmonic method \citep{zhu2003semi} is employed. It assigns soft labels by minimizing the following quadratic objective:
$$
    \frac{1}{2}\sum_{u,v}w_{\sigma, \boldsymbol{\beta}}(x_1, x_2)(f(x_1)-f(x_2))^2=f^T(D-W)f,
$$
{\color{\COMMENTEPQD}where $f\in[0,1]^{n}$ is the soft label vector that includes labels of labeled samples $f(x_\cL)$ ($x \in \cL$) and the prediction variables of unlabeled labels $f(x_{\cU})$ ($x_\cU \in \cU$) that we want to minimize over. Besides, $n=|\cU|+|\cL|$ is the total number of samples}, $W$ denotes the graph adjacency matrix $W_{x_1x_2}=w_{\alpha, \boldsymbol{\beta}}(x_1, x_2)$, and $D$ is the diagonal matrix with entries $D_{ii}=\sum_jW_{ij}$. {The final predictions are obtained by rounding $f_{x_\cU}$ for $x_\cU \in \cU$, i.e.\ predicting $\bbI\{f(x_{\cU})\ge \frac{1}{2}\}$, denoted by $G^{\sigma,\boldsymbol{\beta}}(\cL,\cU, \boldsymbol{\delta})$}. {\color{\COMMENTEPQD} Let $u^{\sigma,\boldsymbol{\beta}}:(\cL,\cU,\boldsymbol{\delta})\mapsto[0,1]$ denote the utility of the algorithm corresponding to $\sigma$ and $\boldsymbol{\beta}$, which is given by  $u^{\sigma,\boldsymbol{\beta}}=1-l^{\sigma,\boldsymbol{\beta}}$, where $l^{\sigma,\boldsymbol{\beta}}$ is the average 0-1 binary classification loss of the predictions of the above algorithm when the graph is built with parameters $\sigma,\boldsymbol{\beta}$.}

\paragraph{Data-driven graph-based semi-supervised learning.} {In the data-driven setting, we are given multiple problem instances $P_1, \dots, P_i$, each $P_i$ is represented as a tuple $(\cL_i, \cU_i, \boldsymbol{\delta})$ of a set of labeled samples $\cL_i$, a set of a unlabeled samples $\cU_i$, and a set of distance functions $\boldsymbol{\delta}$ that is shared across problem instances}. Assuming that there is an underlying (unknown) problem distribution that represents a specific application domain, we want to know how many problem instances are sufficient to learn the best parameters $\alpha, \boldsymbol{\beta}$ that are near-optimal for such a problem distribution. To do that, we want to analyze the pseudo-dimension of the following function class:
$$\color{\COMMENTEPQD} 
    \cG=\{u^{\sigma,\boldsymbol{\beta}}:(\cL,\cU,\boldsymbol{\delta})\mapsto u(G^{\sigma,\boldsymbol{\beta}}(\cL,\cU,\boldsymbol{\delta})) \mid \sigma \in \bbR \setminus \{0\}, \boldsymbol{\beta} \in \Delta(L)\}.
$$
\subsubsection{Generalization guarantees for data-driven semi-supervised learning with Gaussian RBF kernel and multiple distance functions}
 We now instantiate the main result in this section, which establishes an upper bound for the pseudo-dimension of the function class $\cG$ described above.
 
\begin{theorem}\label{thm:pdim-graph-1}
    Let $\cG$ be a class of functions mapping semi-supervised learning instances $(\cL,\cU, \boldsymbol{\delta})$ to $[0,1]$. Then $\Pdim(\cG)=\cO(n^4L^2)$, where $n = \abs{\cL} + \abs{\cU}$ is the total number of samples in each problem instance, and $L = \abs{\boldsymbol{\delta}}$ is the number of distance functions.  
\end{theorem}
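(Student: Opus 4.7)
My plan is to mimic the proof of Theorem \ref{thm:pdim-clustering-1}: show that the dual class $\cG^*$ is $(k_\cF, k_\cG, q, M, \Delta, d)$-Pfaffian piece-wise structured for appropriate parameters, then invoke Theorem \ref{thm:piece-wise-pfaffian}. Fix an instance $(\cL, \cU, \boldsymbol{\delta})$ with $n = |\cL| + |\cU|$. The harmonic soft labels on $\cU$ are the unique solution $f_\cU$ of the linear system $(D - W)_{\cU\cU}\, f_\cU = W_{\cU\cL}\, y_\cL$, where $W_{uv}(\sigma, \beta) = \exp(-\delta_\beta(u,v)/\sigma^2)$ and $\delta_\beta = \sum_{i = 1}^L \beta_i \delta_i$. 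By Cramer's rule, each $f_u$ equals a ratio of two $|\cU|\times |\cU|$ determinants whose entries are linear in the edge weights $W_{uv}$ (note $D_{ii} = \sum_{j \ne i} W_{ij}$). The prediction $\bbI\{f_u \ge 1/2\}$ flips exactly along the hypersurface $2\det(M_u(\sigma, \beta)) - \det((D - W)_{\cU\cU}(\sigma, \beta)) = 0$, giving at most $|\cU| \le n$ boundary conditions. Within each resulting region the predictions, and hence the average $0$-$1$ loss, are constant and take at most $n + 1$ values, so $k_\cG = \cO(n)$ and $k_\cF = \cO(n)$.

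Next I will place all boundary polynomials into a single Pfaffian chain. Let $\psi := 1/\sigma$ and, for each unordered pair $\{u,v\}$ in $\cL \cup \cU$, set $W_{uv}(\sigma, \beta) := \exp(-\delta_\beta(u,v)\,\psi^2)$. I claim the ordered sequence $(\psi, \{W_{uv}\})$ is a Pfaffian chain over the variables $(\sigma, \beta_1, \dots, \beta_L)$ of length $q = 1 + \binom{n}{2} = \cO(n^2)$ and constant Pfaffian degree $M$, since
\begin{equation*}
\frac{\partial \psi}{\partial \sigma} = -\psi^2, \qquad \frac{\partial W_{uv}}{\partial \sigma} = 2\delta_\beta(u,v)\,\psi^3\, W_{uv}, \qquad \frac{\partial W_{uv}}{\partial \beta_k} = -\delta_k(u,v)\,\psi^2\, W_{uv},
\end{equation*}
are each polynomial of bounded degree in the preceding chain functions and input variables. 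Each boundary polynomial, being a determinant of an $|\cU|\times|\cU|$ matrix whose entries are linear in the $W_{uv}$, is a Pfaffian function over this chain of degree at most $\Delta = \cO(n)$.

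Substituting $d = L + 1$, $q = \cO(n^2)$, $M = \cO(1)$, $\Delta = \cO(n)$, and $k_\cF + k_\cG = \cO(n)$ into Theorem \ref{thm:piece-wise-pfaffian} yields
\begin{equation*}
\Pdim(\cG) \le d^2 q^2 + \cO\!\left( dq \log(\Delta + M) + dq \log d + d \log\!\big(\Delta(k_\cF + k_\cG)\big)\right) = \cO(L^2 n^4),
\end{equation*}
with the $d^2 q^2$ term dominating.

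\textbf{Main obstacle.} The crux is keeping both $q$ and $\Delta$ small in Step~2: because the bound scales as $d^2 q^2$, even a modest inflation of the chain (for instance, introducing auxiliary functions per matrix entry when expressing the harmonic solution via iterative inversion) would destroy the target $\cO(n^4 L^2)$ rate. The key idea is that the single ``layer'' of exponentials $\{W_{uv}\}$, together with $\psi = 1/\sigma$, already spans all boundary polynomials; the determinant structure is then absorbed into the polynomial degree $\Delta$ rather than the chain length $q$. A minor subtlety is that $(D - W)_{\cU\cU}$ may be singular on a measure-zero parameter set; this does not obstruct the pseudo-dimension argument since any shattering witness configuration can be taken in the generic (non-singular) open set.
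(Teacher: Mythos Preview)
Your proposal is correct and follows essentially the same approach as the paper: both fix an instance, express each soft label $f_u$ as a ratio of $|\cU|\times|\cU|$ determinants via Cramer's rule, build the same Pfaffian chain $(\psi=1/\sigma,\{W_{uv}\})$ of length $\cO(n^2)$ and Pfaffian degree $M=5$, identify the $|\cU|$ threshold conditions $\bbI\{f_u\ge 1/2\}$ as boundary functions of degree $\Delta=|\cU|$, note that the loss is piece-wise constant with at most $|\cU|+1$ values, and plug into Theorem~\ref{thm:piece-wise-pfaffian}. Your use of $\binom{n}{2}$ unordered pairs versus the paper's $n^2$ ordered pairs, and your explicit remark on generic invertibility of $(D-W)_{\cU\cU}$, are minor refinements but do not change the argument.
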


\begin{proof} 
{ 
    \textbf{Technical overview. \quad} Fix a problem instance $P = (\cL, \cU, \boldsymbol{\delta})$, we will show that the dual {\color{\COMMENTEPQD} utility function function $u^*_P(\sigma,\boldsymbol{\beta}):= u^{\sigma,{\boldsymbol{\beta}}}(\cL, \cU, \boldsymbol{\delta})$} is piecewise constant and characterize the number and complexity of the boundary functions which we will show belong to a Pfaffian system. This implies a bound on the pseudo-dimension of $\cG$ following our Pfaffian piecewise structure Theorem \ref{thm:piecewise-pfaffian}.
} 

\textbf{Proof details.\quad}First, the quadratic objective minimization has a closed-form solution \citep{zhu2003semi}, given by
$$f_{\cU}=(D_{\cU\cU}-W_{\cU\cU})^{-1}W_{\cU\cL}f_{\cL},$$
where {\color{\COMMENTEPQD} $W$ denotes the graph adjacency matrix, $D$ is the diagonal matrix with entries $D_{x_i, x_i} = \sum_{j = 1}^nw_{\sigma, \boldsymbol{\beta}}(x_i, x_j)$}, and subscripts $\cU,\cL$ refer to the unlabeled and labeled points, respectively. Here $f_{\cL}\in\{0,1\}^{|\cL|}$ is the ground truth label of samples in the labeled set $\cL$.

The key challenge now is to analyze the formula $(D_{\cU\cU}-W_{\cU\cU})^{-1}$ of which the element will be shown to be Pfaffian functions of $\sigma, {\boldsymbol{\beta}}$. Recall that {\color{\COMMENTEPQD} $w_{\sigma, \boldsymbol{\beta}}(x_i, x_j)=\exp(-\delta_{{\boldsymbol{\beta}}}(x_i, x_j)/\sigma^2)$} for {\color{\COMMENTEPQD} $x_i, x_j\in\cL\cup\cU$}. First, we recall the identity $A^{-1} = \frac{\text{adj}(A)}{\det(A)}$, for any invertible matrix $A$, where $\text{adj}(A)$ and $\det{A}$ are the adjoint and determinant of matrix $A$, respectively. Therefore, we can see that any element of $(D_{\cU\cU}-W_{\cU\cU})^{-1}$ is a rational function of $w_{\sigma, \boldsymbol{\beta}}(x_i, x_j)$ of degree at most $\abs{\cU}$ (i.e.\ a ratio of polynomial functions where both the numerator and the denominator have degrees at most $\abs{\cU}$).

Now, consider the Pfaffian chain $\cC(\sigma, {\boldsymbol{\beta}}, \frac{1}{\sigma}, w_{11}, \dots, w_{nn})$ with $L + 1$ variables $\sigma, {\boldsymbol{\beta}}$, and of length $q = n^2 + 1$. To see the Pfaffian degree of $\cC$, note that for any pair of nodes {\color{\COMMENTEPQD} $x_i, x_j \in \cU \cup \cL$}, we have
 
\[\color{\COMMENTEPQD}
    \frac{\partial w_{\sigma, \boldsymbol{\beta}}(x_i, x_j)}{\partial \beta_k}=-\frac{\delta_k(x_i, x_j)}{\sigma^2}\exp\left(-\frac{\delta_{\boldsymbol{\beta}}(x_i, x_j)}{\sigma^2}\right)=-\delta_k(x_i, x_j)g^2w_{\sigma, \boldsymbol{\beta}}(x_i, x_j), \quad \text{for $k = 1, \dots, L$,}
\]

\[\color{\COMMENTEPQD}
    \frac{\partial w_{\sigma, \boldsymbol{\beta}}(x_i, x_j)}{\partial \sigma}=\frac{2\delta_{\boldsymbol{\beta}}(x_i, x_j)}{\sigma^3}\exp\left(-\frac{\delta_{\boldsymbol{\beta}}(x_i, x_j)}{\sigma^2}\right)=2\delta_{\boldsymbol{\beta}}(x_i, x_j)g^3w_{\sigma, \boldsymbol{\beta}}(x_i, x_j).
\]
Thus, the Pfaffian chain $\cC$ has Pfaffian degree $M = 5$. 

Now, consider the {\color{\COMMENTEPQD} dual utility function $v_{\cL, \cU, \boldsymbol{\delta}}(\sigma, {\boldsymbol{\beta}})$}. Note that 

$$\color{\COMMENTEPQD}
    u_{\cL, \cU, \boldsymbol{\delta}}(\sigma, {\boldsymbol{\beta}}) = \frac{1}{\abs{\cU}}\sum_{x_\cU \in \cU}\bbI(\bbI(f(x_\cU) \geq 1/2) = g(x_\cU)),
$$

{\color{\COMMENTEPQD}
    where $g(x_\cU)$ is the ground-truth label of unlabeled node $x_\cU \in \cU$, using for evaluation purpose. We can see that, for each sample $x_\cU \in \cU$, $\bbI(f(x_\cU) \geq 1/2) = \bbI\left(\frac{f^{(1)}(x_\cU)}{f^{(2)}(x_\cU)} \geq 1/2\right) =  \bbI\left(f^{(1)}(x_\cU) \geq 1/2f^{(2)}(x_\cU)\right)$ is a boundary function. Both functions $f^{(1)}(x_\cU), f^{(2)}(x_\cU)$ are Pfaffian functions from chain $\cC$ and of degree $|\cU|$. In each region determined by the sign vector $b_\cU = (b(x_\cU))_{x_\cU \in \cU}$, where $b(x_\cU) = \bbI(f(x_\cU) \geq 1/2)$ for $x_\cU \in \cU$, the dual utility function $u_{\cL, \cU, \boldsymbol{\delta}}(\sigma, {\boldsymbol{\beta}})$ is a constant functions. We can see that there are at most $\abs{\cU} + 1$ such constant functions that $v_{\cL, \cU, \boldsymbol{\delta}}(\sigma, {\boldsymbol{\beta}})$ can take. Therefore, by Definition \ref{def:pfaffian-piecewise-structure}, the dual function class $\cG^*$ is $(\abs{\cU} + 1, \abs{\cU} + 1, n^2 + 1, 5, \abs{\cU})$-Pfaffain piecewise structured. We can apply Theorem \ref{thm:piecewise-pfaffian} to get
    $$
        \Pdim(\cG) = \cO(n^4L^2 + n^2L\log(|\cU|) + n^2L \log L + L\log |\cU|).
    $$
    Noting $|\cU|\le n$ and simplifying yields the claimed bound.
}
\end{proof}

\noindent 

\subsection{Data-driven regularized logistic regression} \label{sec:logistic-distributional}
Logistic regression \citep{james2013introduction} is a fundamental statistical technique and widely used classification model with numerous applications across diverse domains, including healthcare screening \citep{armitage2008statistical}, market forecasting \citep{linoff2011data}, and engineering safety control \citep{palei2009logistic}. To mitigate overfitting and enhance robustness, regularization for sparsity ($\ell_1$) and shrinkage ($\ell_2$) is commonly employed in logistic regression \citep{mohri2018foundations, murphy2012machine}. In regularized logistic regression and regularized linear models in general, the regularization parameters, which control the magnitude of regularization, play a crucial role \citep{mohri2018foundations, murphy2012machine, james2013introduction} and must be carefully chosen. Setting regularization parameters too high may lead to underfitting, while setting them too low may nullify the effectiveness of regularization. In practice, a common strategy for selecting regularization parameters is cross-validation, which is known to lack theoretical guarantees, in general \citep{zhang2009some, chichignoud2016practical}.

Recently, \citet{balcan2024new} proposed a data-driven approach for selecting regularization parameters in regularized logistic regression. Their methodology considers each regression problem, comprising training and validation sets, as a problem instance drawn from an underlying problem distribution. The objective is to leverage the available problem instances to determine regularization parameters that minimize the validation loss for any future problem instance sampled from the same distribution.
{
    \paragraph{Contribution.} In this section, we will demonstrate the applicability of Lemma \ref{lm:approximation-peice-wise-struture} by recovering a result by \cite{balcan2024new}.
}

\subsubsection{Problem setting}     
{
    \paragraph{Regularized logistic regression.} We closely follow the data-driven regularized logistic regression setting by \cite{balcan2024new}. A problem instance $P = (\boldsymbol{X}, \boldsymbol{y}, {\boldsymbol{X}}_\text{val}, \boldsymbol{y}_\text{val}) \in \cR_{m, p, m'} =  \bbR^{m \times p} \times \{\pm 1\}^{m} \times \bbR^{m' \times p} \times \{\pm 1\}^{m'}$, where $m' < m$, consists of a training set $(\boldsymbol{X}, \boldsymbol{y})$ {\color{\COMMENTCOLOR}of $m$ samples} and a validation set $(\boldsymbol{X}_\text{val}, \boldsymbol{y}_\text{val})$ {\color{\COMMENTCOLOR}of $m'$ samples}. Given a regularization parameter $\lambda \in [\lambda_{\text{min}}, \lambda_{\text{max}}]$, {\color{\COMMENTCOLOR}where $0 < \lambda_{\min} < \lambda_{\max}$}, the regularized logistic regression solves for the coefficients $\hat{\boldsymbol{\beta}}_{(X, y)}$ that is the best fit for the training set $(\boldsymbol{X}, \boldsymbol{y})${\color{\COMMENTCOLOR}, i.e.,}
    \begin{equation*}
        \hat{\boldsymbol{\beta}}_{(\boldsymbol{X}, \boldsymbol{y})}(\lambda) = \arg \min_{\boldsymbol{\beta} \in \bbR^p} l(\boldsymbol{\beta}, (\boldsymbol{X}, \boldsymbol{y})) + \lambda R(\boldsymbol{\beta}),
    \end{equation*}
    where $l(\boldsymbol{\beta}, (\boldsymbol{X}, \boldsymbol{y})) = \frac{1}{m}\sum_{i = 1}^m\log(1 + \exp(-y_i\boldsymbol{x}_i^\top\boldsymbol{\beta}))$ denotes the logistic loss, and $R(\boldsymbol{\beta})$ is either sparsity ($\|\boldsymbol{\beta}\|_1$) or shrinkage ($\|\boldsymbol{\beta}\|^2_2$) regularizer.
}

{
    \paragraph{Data-driven regularized logistic regression.} 
    

    In the data-driven setting, we assume that there is an unknown, application specific problem distribution $\mathcal{D}$ over $\mathcal{R}_{m, p, m'}$. 
    { \color{\COMMENTEPQD}
        We then can define the optimal hyperparameter $\lambda^*$ for such problem distribution $\mathcal{D}$ as 
        $$
            \lambda^* \in \argmin_{\lambda \in [\lambda_{\min}, \lambda_{\max}]} \bbE_{P \sim \cD}[h_{\lambda}(P)],
        $$
        where, 
        \begin{equation*}
            \begin{aligned}
                h_{\lambda}(P) = H - l(\hat{\boldsymbol{\beta}}_{\boldsymbol{X}, \boldsymbol{y}}(\lambda), (\boldsymbol{X}_{\text{val}}, \boldsymbol{y}_{\text{val}})) =&  H - \frac{1}{m'}\sum_{i = 1}^{m'}\log(1 + \exp(-y_i\boldsymbol{x}_i^\top\hat{\boldsymbol{\beta}}_{(\boldsymbol{X}, \boldsymbol{y})}(\lambda)) \\
                =& H - \frac{1}{m'}\log\left(\prod_{i = 1}^{m'} (1 + \exp(-y_i\boldsymbol{x}_i^\top\hat{\boldsymbol{\beta}}_{(\boldsymbol{X}, \boldsymbol{y})}(\lambda)) \right),
            \end{aligned}
        \end{equation*}
         which is the validation utility function for the problem instance $P$ corresponding to the regularization hyperparameter $\lambda$. Since the problem distribution $\mathcal{D}$ is unknown, we  assume that we are given a set $S = \{P_1, \dots, P_N\}$ of $N$ problem instances drawn i.i.d.\ from $\mathcal{D}$, and we set the regularization hyperparameter $\hat{\lambda}_{S}$ via ERM, i.e.,
         $$
            \hat{\lambda}_{S} \in \argmax_{\lambda \in [\lambda_{\min}, \lambda_{\max}]}\frac{1}{N}\sum_{i = 1}^Nh_{\lambda}(P_i).
         $$
         Our goal is to determine how many problem instances $N$ do we need such that with high probability, the performance of the optimal hyperparameter $\lambda^*$ and the hyperparameter $\hat{\lambda}_S$ tuned using the set of problem instances $S = \{P_1, \dots, P_N\}$ are sufficiently close. Define $\cH=\{h_{\lambda}: \cR_{m, p, m'} \rightarrow [0, H] \mid \lambda \in [\lambda_{\min}, \lambda_{\max}]\}$. The goal above is equivalent to analyzing the learning-theoretic complexity of the function class $\cH$.
    }
}

\subsubsection{Generalization guarantee for data-driven regularized logistic regression}
{

    As discussed previously, the challenge for analyzing the learnability of $\cH$ is the unknown structure of $h_{\lambda}(P)$ as a function of problem instance $P$. Even if we consider the dual function class $\cH^* = \{h^*_{P}: [\lambda_{\min}, \lambda_{\max}] \rightarrow [0, H] \mid P \in  \cR_{m, p, m'}\}$, it is also hard to analyze the structure of $\cH^*$ as we do not have the explicit form of $h^*_{P}(\lambda)$. Hence, the approach here is to construct a surrogate function class $\cV_{\epsilon}^*$ that is sufficiently ``close'' to $\cH^*$ and is more well-behaved, and then we will use the idea proposed in Section \ref{sec:analyzing-via-approximation} to recover the learning guarantee for $\cH$.

}

{
    Using this approach, we recover the following result by \citet{balcan2024new}, which establishes learning guarantees for hyperparameter tuning for regularized logistic regression. 
    \begin{theorem} \label{thm:balcal2023}
        Consider the problem of tuning regularization parameter for regularized logistic regression with $\ell_1$ (or $\ell_2$) regularizer under data-driven setting. Consider the function class $\cH = \{h_{\lambda}: \cR_{m, p, m'}  \rightarrow \bbR \mid \lambda \in [\lambda_{\min}, \lambda_{\max}]\}$, where $h_{\lambda}(P)$ is the validation {\color{\COMMENTEPQD} utility} for the problem instance $P$ and with regularization hyperparameter $\lambda$. Then for any $\delta \in (0, 1)$, with probability at least $1 - \delta$ over the draw of $m$ problem instances $S = \{P_1, \dots, P_{\color{\COMMENTCOLOR} N}\} \sim \cD^{\color{\COMMENTCOLOR} N}$, where $\cD$ is some problem distribution over $\cR_{m, p, m'}$, we have 
    $$\color{\COMMENTEPQD}
        \cO\left(\sqrt{\frac{m^2 + \log (1 / \epsilon)}{\color{\COMMENTCOLOR}N}}  + \epsilon^2 + \sqrt{\frac{\log (1/\delta)}{\color{\COMMENTCOLOR} N}} \right) + \bbE_{P \sim \cD}[h_{\hat{\lambda}_S}(P)] \geq \sup_{\lambda \in [\lambda_{\min}, \lambda_{\max}]}\bbE_{p \sim \cD}[h_{\lambda}(P)],
    $$ 
    {\color{\COMMENTCOLOR} where $\epsilon > 0$ is some sufficiently small value (satisfying Theorem \ref{thm:rosset1}).}
    \end{theorem}
    \proof \textbf{Technical overview.} The idea is to construct a surrogate dual function class $\cV^*_\epsilon = \{v^*_{P, \epsilon}: [\lambda_{\min}, \lambda_{\max}] \rightarrow [0, H] \}$ that is sufficiently close to $\cH^*$. {\color{\COMMENTCOLOR} In other words, given any  sufficiently small $\epsilon$ (see Theorem \ref{thm:rosset1}), we want to construct $\cV^*_{\epsilon}$ such that for any problem instance $P \in \cR_{m, p, m'}$, we have $\|v^*_{P, \epsilon} - h^*_{P}\|_\infty < \epsilon$. This can be done using Theorem \ref{thm:rosset1}}. Moreover, we can partition the space $[\lambda_{\min}, \lambda_{\max}]$ into $\frac{1}{\epsilon}(\lambda_{\max} - \lambda_{\min})$ intervals, over each of which the function $v^*_{P, \epsilon}$ is a Pfaffian function {\color{\COMMENTCOLOR} from a Pfaffian chain with bounded Pfaffian complexity}. We then use our results from Section \ref{sec:discontinuity-homogeneous} (Theorem \ref{lm:approximation-peice-wise-struture}) to bound the statistical complexity of the surrogate primal function class $\cV_{\epsilon}$, which then can convert to the statistical complexity of $\cH$.
    
    \textbf{Proof details.} We now go to the details of the proof, which consists of the following steps.

    \paragraph{Constructing the surrogate dual and primal function classes $\cV_{\epsilon}^*$ and $\cV_{\epsilon}$.} To construct such a surrogate dual function class $\cV^*_{\epsilon}$ for $\cH^*$, for any problem instance $P =(\boldsymbol{X}, \boldsymbol{y}, {\boldsymbol{X}}_\text{val}, \boldsymbol{y}_\text{val})$, \cite{balcan2024new} first approximate the regularized logistic regression estimator $\hat{\boldsymbol{\beta}}_{(\boldsymbol{X}, \boldsymbol{y})}(\lambda)$ by $\boldsymbol{\beta}^{(\epsilon)}_{(\boldsymbol{X}, \boldsymbol{y})}(\lambda)$, using Algorithm \ref{alg:rosset-lasso} {\color{\COMMENTCOLOR} (see Appendix \ref{sec:logistic-distributional})} if $R(\boldsymbol{\beta}) = \|\boldsymbol{\beta}\|_1$ (or Algorithm \ref{alg:rosset-ridge} if $R(\boldsymbol{\beta}) = \|\boldsymbol{\beta}\|^2_2$). Intuitively, for a sufficiently small $\epsilon$, the search space $[\lambda_{\text{min}}, \lambda_{\text{max}}]$ can be divided into $\frac{1}{\epsilon}(\lambda_{\max} - \lambda_{\min})$ intervals. Within each interval, $\boldsymbol{\beta}^{(\epsilon)}_{(\boldsymbol{X}, \boldsymbol{y})}(\lambda)$ is a linear function of $\lambda$, and there exists a uniform error bound of $O(\epsilon^2)$ for the approximation error $\|\boldsymbol{\beta}^{(\epsilon)}_{(\boldsymbol{X}, \boldsymbol{y})}(\lambda) - \hat{\beta}_{(\boldsymbol{X}, \boldsymbol{y})}(\lambda)\|_2$ (formalized in Theorem \ref{thm:rosset1}). The surrogate dual validation {\color{\COMMENTEPQD} utility} function class $\cV^*_{\epsilon}$ can now be defined as $\cV^*_{\epsilon} = \{v^*_{P, \epsilon}: [\lambda_{\min}, \lambda_{\max}] \rightarrow \bbR \mid P \in \cR_{m, p, m'} \}$. Here
    \begin{equation*} \color{\COMMENTEPQD}
        \begin{aligned}
            v^*_{P, \epsilon}(\lambda) = H- l(\boldsymbol{\beta}^{(\epsilon)}_{(\boldsymbol{X}, \boldsymbol{y})}(\lambda), (\boldsymbol{X}_{\text{val}}, \boldsymbol{y}_{\text{val}})) =&  H- \frac{1}{m'}\sum_{i = 1}^{m'}\log(1 + \exp(-y_ix_i^\top \boldsymbol{\beta}^{(\epsilon)}_{(\boldsymbol{X}, \boldsymbol{y})}(\lambda))) \\ 
            =& H- \frac{1}{m'}\log \left(\prod_{i = 1}^{m'} (1 + \exp(-y_ix_i^\top \boldsymbol{\beta}^{\epsilon}_{(\boldsymbol{X}, \boldsymbol{y})}(\lambda)))\right),
        \end{aligned}
    \end{equation*}
    and $\boldsymbol{\beta}^{(\epsilon)}_{(\boldsymbol{X}, \boldsymbol{y})}(\lambda)$ is defined as in Theorem \ref{thm:rosset1}. An important property of $\boldsymbol{\beta}^{(\epsilon)}_{(\boldsymbol{X}, \boldsymbol{y})}(\lambda)$ is its piecewise linear structure: we can partition $[\lambda_{\min}, \lambda_{\max}]$ into $\frac{1}{\epsilon}(\lambda_{\max} - \lambda_{\min})$ intervals, and in each interval $[\lambda_{\min} + t\epsilon, \lambda_{\min} + (t + 1)\epsilon]$, $\boldsymbol{\beta}^{\epsilon}_{(\boldsymbol{X}, \boldsymbol{y})}(\lambda) = \lambda \boldsymbol{a}_{P, \epsilon, t} + \boldsymbol{b}_{P, \epsilon, t}$, where $\boldsymbol{a}_{P, \epsilon, t}, \boldsymbol{b}_{P, \epsilon, t}$ is defined as in Theorem \ref{thm:rosset1}. This leads to the piecewise structure of $\cV^*_{\epsilon}$, and we have successfully construct a surrogate function class for $\cH^*$ that is: (1) admits piecewise structure, and (2) sufficiently close to $\cH^*$. 
    
    We can then the define $\cV_{\epsilon} = \{v_{\lambda, \epsilon}:   \cR_{m, p, m'} \rightarrow \bbR \mid \lambda \in [\lambda_{\text{min}}, \lambda_{\text{max}}]\}$,  where $v_{\lambda, \epsilon}(P):= v^*_{P, \epsilon}(\lambda)$. 
    
    \paragraph{Analyzing the pseudo-dimension of $\cV_{\epsilon}$.} We can now proceed to analyze {\color{\COMMENTCOLOR} the pseudo-dimension of} $\cV_{\epsilon}$ {\color{\COMMENTCOLOR} using our proposed results from Section \ref{sec:discontinuity-homogeneous}}. First, notice that {\color{\COMMENTCOLOR} $f(z) = \frac{1}{m'}\log z$ } is a monotonic function. Hence, we can simplify the analysis by analyzing the pseudo-dimension of $\cG_\epsilon = \{g_{\lambda, \epsilon}:  \cR_{m, p, m'}  \rightarrow \bbR \mid \lambda \in [\lambda_{\min}, \lambda_{\max}]\}$, where
    \begin{equation*}
        g_{\lambda, \epsilon}(P) = \prod_{i = 1}^{m'} (1 + \exp(-y_ix_i^\top \boldsymbol{\beta}^{(\epsilon)}_{(\boldsymbol{X}, \boldsymbol{y})}(\lambda))).
    \end{equation*}
    {\color{\COMMENTCOLOR} Again, we can define the dual function class $G^*_\epsilon = \{g^*_{P, \epsilon}: \lambda \rightarrow \bbR \mid P \in \mathcal{R}_{m, p, m'}\}$ of $G_\epsilon$, where $g^*_{P, \epsilon}(\lambda):= g_{\lambda, \epsilon}(P)$.}
    
    Fixing a problem instance $P$ and a threshold $\tau \in \bbR$, by the property of $\boldsymbol{\beta}^{(\epsilon)}_{(\boldsymbol{X}, \boldsymbol{y})}(\lambda)$, we know that the domain $[\lambda_{\min}, \lambda_{\max}]$ can be partitioned in to $\frac{1}{\epsilon}|\lambda_{\max} - \lambda_{\min}|$ intervals. In each interval $[\lambda_{\min} + t\epsilon, \lambda_{\min} + (t + 1)\epsilon]$, {\color{\COMMENTCOLOR} $g^*_{P, \epsilon}(\lambda)$} takes the form
    {\color{\COMMENTCOLOR}
        \begin{equation*}
             g^*_{P, \epsilon}(\lambda) = \prod_{i = 1}^{m'} (1 + \exp(-y_ix_i^\top 
             (\lambda a_t + b_t))).
        \end{equation*}
    }
    
    \noindent {\color{\COMMENTCOLOR}Notice that from the problem assumption we have $m' < m$}. Therefore, in each interval $[\lambda_{\min} + t\epsilon, \lambda_{\min} + (t + 1)\epsilon]$, $g^*_{P, \epsilon}(\lambda)$ is a Pfaffian function of degree at most $m$ from a Pfaffian chain $\cC$ of length at most $m$ and of Pfaffian degree at most $m$. From Lemma \ref{lm:approximation-peice-wise-struture}, we conclude that $\Pdim(\cG_\epsilon) = \cO(m^2 + \log(1/\epsilon))$, {\color{\COMMENTCOLOR} which implies $\Pdim(\cV_\epsilon) = \cO(m^2 + \log(1/\epsilon))$.}
    
    \paragraph{Recovering the guarantee for $\cH$.} {\color{\COMMENTCOLOR} Note that we just establish the upper bound for the learning-theoretic complexity of the surrogate primal function class $\cV_\epsilon$}. To recover the learning guarantee for $\cH$, we need to leverage the approximation error guarantee between $\cH$ and $\cH^{(\epsilon)}$. Using Theorem \ref{lm:balcan-refined-icml}, we then conclude that 
    $$
        \mathscr{R}_{N}(\cH) = \cO\left(\sqrt{\frac{m^2 + \log(1/ \epsilon)}{\color{\COMMENTCOLOR} N}} + \epsilon^2\right).
    $$
    Finally, classical learning theory results imply the claim.
    \qed 
    
    \begin{remark} 
        %
        \citet{balcan2024new} introduced the result presented above (Theorem \ref{thm:balcal2023}). For this specific application, the foundation of their approach and our framework (Section \ref{sec:discontinuity-homogeneous})) are similar. Concretely, our proof makes use of Corollary \ref{lm:approximation-peice-wise-struture}, which is similar to the proof in prior work. One can think of the novelty of results presented in Section \ref{sec:discontinuity-homogeneous} is in obtaining a  generalization that is applicable to other problems. 
        
        
    \end{remark}
}

\section{Online learning} \label{sec:online-learning}
{In this section, we introduce new general techniques for establishing no-regret learning guarantees for data-driven algorithms in the online learning setting {\color{\COMMENTEPQD} with full-information\footnote{The loss function for the entire parameter domain is revealed to the learner in each round, after the learner has chosen the parameter for that round.} feedback} when discontinuity of the piecewise Lipschitz dual loss or utility functions admit Pfaffian structure}. Prior work \citep{balcan2020semi} provided a tool for online learning when non-Lipschitzness (or discontinuity) occurs along roots of polynomials in one and two dimensions, and \citet{balcan2021data} extended the result to algebraic varieties in higher dimensions. Here, we generalize the results to cases where non-Lipschitzness occurs along Pfaffian hypersurfaces. {\color{\COMMENTEPQD} For convenience and matching the frequently used notations in the online learning literature, we will present our results using \textit{utility functions} instead of \textit{loss functions}.}

{\color{\COMMENTBXSEVENE}
    We note that although the online learning setting is more general than the {\color{\CAMERAREADY} statistical} learning settings (in the sense that guarantees for online learning settings can be automatically translated into guarantees for {\color{\CAMERAREADY} statistical} learning settings via online-to-batch conversion), the results in online learning settings typically require extra smoothness assumption compared to the {\color{\CAMERAREADY} statistical} learning setting, as noted in prior work \citep{balcan2024much}. Here, we consider the \textit{dispersion property} as a sufficient condition for no-regret online learning in this setting and develop tools to verify whether the dispersion property holds when the discontinuities of the loss sequences involve Pfaffian functions. Roughly speaking, dispersion means that the discontinuities of the dual function do not concentrate in any small region of the parameter space. So, by online-to-batch conversion arguments, our online learning results imply sample complexity guarantees, but only if the distribution satisfies dispersion. In contrast, our sample complexity results in the {\color{\CAMERAREADY} statistical} setting do not make such an assumption, and hold even if the discontinuities concentrate.
}

    \subsection{Overview of the dispersion property}
    We first start by giving a high-level overview of the dispersion property, which serves as a sufficient condition for online learning where the sequence of loss (or utility) functions is not continuous but is piecewise Lipschitz. While it is not always possible to obtain sub-linear regret for online learning with piecewise Lipschitz loss functions, \cite{balcan2018dispersion} introduce the dispersion condition under which online learning is possible. Informally, dispersion measures the number of discontinuities of the loss function that can occur in a ball of a given radius. A sufficient bound on the dispersion can lead to a no-regret learning guarantee. Concretely, we will use the  $f$-point-dispersion \citep{balcan2020semi} which  is a slightly more generalized notion. 
    
    
    \begin{definition}[$f$-point-dispersed, {\color{\COMMENTCOLOR}\citealp{balcan2020semi}}] The sequence of loss functions $\ell_1, \ell_2, \dots$ is $f$-point-dispersed for the Lipschitz constant $L$ and dispersion function $f: \bbN \times [0, \infty) \rightarrow \bbR$ if for all $T$ and for all $\epsilon > 0$, we have 
    $$
        \bbE[\max_{\rho, \rho'} \abs{\{t = 1, \dots, T: \abs{\ell_t(\rho) - \ell_t(\rho')} > L\|\rho - \rho'\|_2\}}] \leq f(T, \epsilon),
    $$
    where the max is taken over all $\rho, \rho' \in \cC: \|\rho - \rho'\|_2 \leq \epsilon$.    \label{def:f-point-dispersed}
    \end{definition}
    \noindent {\color{\COMMENTEPQD} The expectation here is over both the randomization of the algorithm\footnote{Algorithm here refers to an algorithm from our parameterized family of algorithms  under study, as opposed to the online learning algorithm used to learn its parameters.} (e.g.\ Theorem \ref{thm:logistic-dispersion} for logistic regression) for which the performance is measured by the loss functions $\ell_i$ and the ``smoothed'' adversary that generates the sequence of input instances on which the losses are measured (e.g.\ Theorem \ref{thm:linkage-dispersion} for hierarchical clustering). In more detail,  there are examples where randomization of the underlying algorithm is sufficient to ensure that Definition \ref{def:f-point-dispersed} is satisfied (\citealp{balcan2018dispersion}, Theorem \ref{thm:logistic-dispersion}). However, in other cases, one can establish $f$-point-dispersion by assuming that the input instances   in the adversarial sequence are  perturbed slightly using a smoothed analysis (introduced by \citealt{spielman2003smoothed}, see \citealp{balcan2018dispersion,balcan2022provably,balcan2023analysis} for some examples). Intuitively, the adversary can choose an adversarial problem instance, which is then slightly modified to avoid the concentration of a large number of loss functions in the sequence that have Lipschitzness violations (or discontinuities) in arbitrarily small regions of the parameter space. The exact nature of perturbations to the input instance needed to ensure Definition \ref{def:f-point-dispersed} is satisfied depends on the problem instance, e.g.\ for hierarchical clustering, it is sufficient to assume that pairwise distances between points in the clustering instance follow a bounded-density distribution (Theorem \ref{thm:linkage-dispersion}).  In contrast to the {\color{\CAMERAREADY} statistical} setting above, this smoothing does not need to be i.i.d., although in this work we will  assume that the perturbations are independent (e.g.\ Theorem \ref{thm:VC-bound-Pfaffian}).}

    A continuous version of the classical multiplicative weights algorithm achieves the following no-regret guarantee provided the loss functions are dispersed.
    \begin{theorem}[\citealp{balcan2020semi}] Let $\cC \subset \bbR^d$ be contained in a ball of radius $R$ and $\ell_1, \ell_2, \dots: \cC \rightarrow [0, 1]$ be piecewise $L$-Lipschitz functions that are $f$-point-dispersed with an $r_0$-interior minimizer. Then, {\color{\COMMENTEPQD} for online learning under full-information feedback, i.e.\ $\ell_t(\rho)$ is revealed for all $\rho\in\cC$ after the learner has picked $\rho_t$ for round $t$,} there exists an algorithm that satisfies the regret bound $$\bbE\left[\sum_{t = 1}^T \ell_t(\rho_t) - \ell_t(\rho^*)\right] \leq \cO(\sqrt{dT\log(R/r)} + f(T, r) + TLr),$$ for any $r \in (0, r_0]$. \label{thm:dispersion-no-regret}
    \end{theorem}
    \paragraph{Contributions.} Though the dispersion property provides a sufficient condition for no-regret learning piecewise Lipschitz loss functions, it is verifying the dispersion property that  poses  a challenge for Pfaffian structured functions. In this section, we provide a general tool for verifying dispersion property for the sequence of loss functions of which the discontinuities involve in Pfaffian hypersurfaces (Section \ref{sec:general-tool-dispersion-pfaffian}). We then demonstrate the applicability of our tool by providing no-regret learning guarantees for under-explored data-driven algorithm design problems in the online learning setting (Section \ref{sec:dispersion-application}).

\subsection{A general tool for verifying dispersion property involving Pfaffian discontinuity}
\label{sec:general-tool-dispersion-pfaffian}

{We first introduce a useful notion of shattering which we use to} establish a constant upper bound on the VC-dimension of a class of Pfaffian functions with bounded chain length and Pfaffian degree when labeled by axis-aligned segments (i.e., line segments parallel to some coordinate axis). This bound is crucial in extending the framework for establishing dispersion developed in prior work \citep{balcan2020semi,sharma2020learning,balcan2021data}. We use tools from the theory of Pfaffian functions (included in Appendix \ref{app:online-learning}).

{


\begin{definition}[Hitting and shattering] Consider a Pfaffian chain $\cC(\boldsymbol{x}, f_1, \dots, f_q)$. Let $\cP = \{P_1, \dots, P_K\}$ denote a set of hypersurface, each defined by a Pfaffian function $P(\boldsymbol{x}, f_1(\boldsymbol{x}), \dots, f_{q}(\boldsymbol{x})) = 0$ from the Pfaffian chain $\cC$. We say that a subset $\cP' \subseteq \cP$ is hit by a line segment $v$ if, for any $P \in \cP$, $v$ intersects with $P$ iff $P \in \cP'$. A line $\Upsilon$ hits a subset $\cP' \subseteq \cP$, there is a line segment $v \in \Upsilon$ such that $v$ hit $\cP'$. A collection $\cV$ of line segments shatters $\cP$ if for each subset $\cP' \subseteq \bbR$, there exists a line segment $\cV$ such that $v$ hits $\cP'$.
\end{definition}
}
\noindent We have the following key structural result which intuitively establishes a bound on the complexity of intersection of Pfaffian hypersurfaces.

\begin{theorem}\label{thm:alg-hyp}
There is a constant $K_{M,q,d,p}$ depending only on $M$, $q$, $d$, and $p$ such that axis-aligned line segments in $\bbR^p$ cannot
shatter any collection of $K_{M,q,d,p}$ Pfaffian hypersurfaces consisting of functions from a Pfaffian $\cC$ chain of length $q$, degree at most $d$ and Pfaffian degree at most $M$.
\end{theorem}
\begin{proof} {
Let $\cC(\boldsymbol{x}, f_1, \dots, f_q)$, where $\boldsymbol{x} \in \bbR^p$, is a Pfaffian chain of length $q$ and of Pfaffian degree at most $M$. Let $\cP = \{P_1, \dots, P_K\}$ denote a collection of $K$ Pfaffian hypersurfaces $P_i({\boldsymbol{x}}, f_1({\boldsymbol{x}}),\dots,f_q({\boldsymbol{x}})) = 0$ for $i = 1, \dots, K$, where each $P_i$ is a Pfaffain function of degree at most $d$ from the Pfaffian chain $\cC$. We seek to upper bound the number of subsets of $\cP$ which may be hit by axis-aligned line segments. We will first consider shattering by line segments in a fixed axial direction $x={\boldsymbol{x}}_j$ for $j\in[p]$. 

}


Let {$\Upsilon_j$} be a line along the axial direction $x_j$. The subsets of $\cP$ which may be hit by segments along {$\Upsilon_j$} are determined by the pattern of intersections of {$\Upsilon_j$} with the hypersurfaces in $\cP$. We can now use Theorem \ref{thm:pfaffian-intersections} to bound the number of intersections between {$\Upsilon_j$} and any hypersurface $P_i = 0$ for $P_i \in \cP$ as 
$$R:=2^{q(q-1)/2}d(M\min\{q,p\}+d)^q,$$
using the fact that the straight line {$\Upsilon_j$} is given by $p-1$ equations $x_k=0$ for $k\ne j$ which form a Pfaffian system with $P_i$ of chain length $q$, Pfaffian degree $M$, {and degrees $\deg(P_i) = d$, $\deg(x_k) = 1$ for $k \neq j$}. Therefore {$\Upsilon_j$} intersects hypersurfaces in $\cP$ at most $KR$ points, resulting in at most $KR+1$ segments of {$\Upsilon_j$}. Thus, {$\Upsilon_j$} may hit at most ${KR+1 \choose 2}=\cO(K^22^{q(q-1)}d^2{(M\min\{q,p\}+d)^{2q}})$ subsets of $\cP$. We remark that these upper bounds correspond to an assumption that the Pfaffian hypersurfaces are in a general position, i.e. a small perturbation to the hypersurfaces does not change the number of intersections. Note that there can only be fewer subsets otherwise, so the upper bound still holds.


We will now bound the number of distinct subsets that may be hit as the {axis-aligned $\Upsilon_j$} is varied (while it is still along the direction $x_j$). Define the equivalence relation $L_{x_1}\sim L_{x_2}$ if the same sequence of hypersurfaces in $\cP$ intersect $L_{x_1}$ and $L_{x_2}$ (in the same order, including with multiplicities). To obtain these equivalence classes, we will project the hypersurfaces in $\cP$ onto a hyperplane orthogonal to the $x_j$-direction. By the generalization of the Tarski-Seidenberg theorem to Pfaffian manifolds, we get a collection of semi-Pfaffian sets
\citep{van1986generalization}, and these sets form a cell complex with at most $\cO(K^{(p!)^2(2p(2p+q))^p}(M+d)^{q^{\cO(p^3)}})$ cells \citep{gabrielov2001complexity}. This is also a bound on the number of equivalence classes for the relation $\sim$.

Putting together, for each axis-parallel direction $x_j$, the number of distinct subsets of $\cP$ hit by any line segment along the direction $x_j$ is at most $\cO(K^22^{q(q-1)}d^2{(M\min\{q,p\}+d)^{2q}}K^{(p!)^2(2p(2p+q))^p}(M+d)^{q^{\cO(p^3)}})$. Thus, for all axis-parallel directions, we get that the total number of subsets of $\cP$ that may be hit is at most

$$C_K=O\left(p2^{q(q-1)}d^2{(M\min\{q,p\}+d)^{2q}}K^{(p!)^2(2p(2p+q))^p+2}(M+d)^{q^{\cO(p^3)}}\right).$$

For fixed $M,q,d,p$, this grows sub-exponentially in $K$, and therefore there is an absolute constant $K_{M,q,d,p}$ such that $C_K<2^K$ provided $K\ge K_{M,q,d,p}$. This implies that a collection of $K_{M,q,d,p}$ Pfaffian hypersurfaces cannot be shattered by axis-aligned line segments and establishes the desired claim. 
\end{proof}

We can now use the above theorem to establish the following general tool for verifying dispersion of piecewise-Lipschitz functions where the piece boundaries are given by Pfaffian hypersurfaces. 

\begin{theorem}\label{thm:VC-bound-Pfaffian}
 Let {\color{\COMMENTEPQD}  $\ell_1, \dots, \ell_T : \bbR^p \rightarrow \bbR$} be independent piecewise $L$-Lipschitz functions, each having discontinuities specified by a collection of at most $K$ Pfaffian hypersurfaces of bounded degree $d$, Pfaffian degree $M$ and  length of common Pfaffian chain $q$. {\color{\COMMENTEPQD} Let $\mathcal{L}$ denote the set of axis-aligned paths between pairs of points in $\bbR^p$, and for each $s\in \mathcal{L}$} define
 $D(T, s) = |\{1 \le t \le T \mid \ell_t\text{ has a discontinuity along }s\}|$. Then we
have that $\bbE[\sup_{s\in \mathcal{L}} D(T, s)] \le \sup_{s\in \mathcal{L}} \bbE[D(T, s)] +
\tilde{O}(\sqrt{T \log T})$, where the soft-O notation suppresses constants in $d,p,M,q$.
\end{theorem}

\begin{proof} {\bf Technical overview}.
We  relate the
number of ways line segments can label vectors of $K$ Pfaffian hypersurfaces of degree $d$, Pfaffian degree $M$ and common chain length $q$, to the VC-dimension of line
segments (when labeling Pfaffian hypersurfaces), which from Theorem \ref{thm:alg-hyp} is constant. To verify dispersion,
we need a uniform-convergence bound on the number of Lipschitzness violations between the worst pair of points $\rho,\rho'$
at
distance $\le \epsilon$, but the definition allows us to bound the worst rate of discontinuities along any path between $\rho,\rho'$ of our
choice. We can bound the VC-dimension of axis aligned segments against Pfaffian
hypersurfaces of bounded complexity, which will allow us to establish dispersion by considering piecewise axis-aligned paths between points $\rho$ and $\rho'$.

\noindent {\bf Proof details}. The proof is similar to the analogous results in \citep{balcan2020semi,balcan2021data}. The main difference is that instead of relating the number
of ways intervals can label vectors of discontinuity points to the VC-dimension of intervals, we instead relate the
number of ways line segments can label vectors of $K$ Pfaffian hypersurfaces of bounded complexity to the VC-dimension of line
segments (when labeling Pfaffian hypersurfaces), which from Theorem \ref{thm:alg-hyp} is constant. To verify dispersion,
we need a uniform-convergence bound on the number of Lipschitz failures between the worst pair of points $\alpha,\alpha'$
at
distance $\le \epsilon$, but the definition allows us to bound the worst rate of discontinuities along any path between $\alpha,\alpha'$ of our
choice. We can bound the VC dimension of axis aligned segments against bounded complexity Pfaffian
hypersurfaces, which will allow us to establish dispersion by considering piecewise axis-aligned paths between points $\alpha$ and $\alpha'$.

Let $\cP$ denote the set of all Pfaffian hypersurfaces of degree $d$, from a Pfaffian chain of length at most $q$ and Pfaffian degree at most $M$. For simplicity, we assume that every function has its discontinuities specified by a collection of exactly $K$ Pfaffian hypersurfaces ($K$ could be an upper bound, and we could simply duplicate hypersurfaces as needed which does not affect our argument below). For each function {\color{\COMMENTEPQD} $\ell_t$}, let $\pi^{(t)}
\in \cP^K$
denote the ordered tuple of Pfaffian hypersurfaces in $\cP$ whose entries are the discontinuity locations of {\color{\COMMENTEPQD} $\ell_t$}. That is, {\color{\COMMENTEPQD} $\ell_t$} has discontinuities along $(\pi^{(t)}_1,\dots,\pi^{(t)}_K)$,
but is otherwise $L$-Lipschitz. 

For any axis aligned path $s$, define the function $f_s : \cP^K \rightarrow \{0, 1\}$ by
\begin{align*}
    f_s(\pi) = \begin{cases*}
    1 &if for some $i \in [K]$, $\pi_i$ intersects $s$,\\
    0 & otherwise,
    \end{cases*}
\end{align*}
where $\pi = (\pi_1, \dots, \pi_K) \in \cP^K$. The sum $\sum_{t=1}^T f_s (\pi^{(t)})$ counts the number of vectors $(\pi^{(t)}_1,\dots,\pi^{(t)}_K)$
that intersect $s$ or, equivalently, the number of functions {\color{\COMMENTEPQD} $\ell_1, \dots , \ell_T$ } that are not $L$-Lipschitz on $s$. We will 
apply VC-dimension uniform convergence arguments to the class $\cF = \{f_s : \cP^K\rightarrow \{0, 1\} \mid s \text{ is an axis-aligned path}\}$.
In particular, we will show that for an independent set of vectors $(\pi^{(t)}_1,\dots,\pi^{(t)}_K)$, with high probability we have that $\frac{1}{T}\sum_{t=1}^T f_s (\pi^{(t)})$ is close to $\bbE[\frac{1}{T}\sum_{t=1}^T f_s (\pi^{(t)})]$ for all paths $s$. 

Now, Theorem \ref{thm:alg-hyp} implies that the VC dimension of $\cF$ is at most  $K_{M,q,d,p}$. A standard VC-dimension uniform convergence argument for the class $\cF$ imply that with probability at least $1-\delta$, for all $f_s\in\cF$
\begin{align*}
    \left\lvert\frac{1}{T}\sum_{t=1}^T f_s (\pi^{(t)})-\bbE\left[\frac{1}{T}\sum_{t=1}^T f_s (\pi^{(t)})\right]\right\rvert&\le O\left(\sqrt{\frac{K_{M,q,d,p}+\log(1/\delta)}{T}}\right)\text{, or}\\
    \left\lvert\sum_{t=1}^T f_s (\pi^{(t)})-\bbE\left[\sum_{t=1}^T f_s (\pi^{(t)})\right]\right\rvert&\le \Tilde{O}\left(\sqrt{T\log(1/\delta)}\right).
\end{align*}
Now since $D(T,s)=\sum_{t=1}^Tf_s (\pi^{(t)})$, we have for all $s$ and $\delta$, with probability at least $1-\delta$,
$\sup_{s\in \mathcal{L}} D(T, s) \le \sup_{s\in \mathcal{L}} \bbE[D(T, s)] +
\Tilde{O}(\sqrt{T \log(1/\delta)})$. Taking expectation and setting $\delta=1/\sqrt{T}$ implies
\begin{align*}\bbE[\sup_{s\in \mathcal{L}} D(T, s)] &\le (1-\delta)\left(\sup_{s\in \mathcal{L}} \bbE[D(T, s)] +
\Tilde{O}(\sqrt{T \log(1/\delta)})\right)+\delta\cdot T\\
&\le \sup_{s\in \mathcal{L}} \bbE[D(T, s)] +
\Tilde{O}(\sqrt{T \log(\sqrt{T})})+\sqrt{T},
\end{align*}
which implies the desired bound. Here we have upper bounded the expected regret by $T$ in the low probability failure event where the uniform convergence does not hold.
\end{proof}

\noindent The above result reduces the problem of verifying dispersion to showing that the expected number of discontinuities $\bbE[D(T, s)]$ along any axis-aligned path $s$ is $\tilde{O}(\epsilon T)$, which together with Theorem \ref{thm:VC-bound-Pfaffian} implies that the sequence of functions is $f$-point-dispersed with $f(T,\epsilon)=\tilde{O}(\epsilon T+\sqrt{T})$, which in turn implies no-regret online learning (Theorem \ref{thm:dispersion-no-regret}).

\subsection{Applications} \label{sec:dispersion-application}
We will now instantiate our general online learning results for concrete applications, including linkage clustering and regularized logistic regression. The key new challenge for establishing online learning guarantees is that we need to analyze the relative position of the Pfaffian structured discontinuities (in contrast, in the {\color{\CAMERAREADY} statistical} learning setting, we only cared about the number of induced sign patterns).
\subsubsection{Online learning for data-driven agglomerative hierarchical clustering}
We first show how to analyze the dispersion property of Pfaffian structured loss functions in agglomerative hierarchical clustering. Technical lemmas needed for the proof are deferred to Appendix \ref{app:online-learning}.

\begin{theorem}\label{thm:linkage-dispersion}
    Consider an adversary choosing a sequence of clustering instances where the $t$-th instance has a symmetric distance matrix $D^{(t)} \in [0, R]^{n\times n}$ and for all $i \le j, d^{(t)}_{ij}$ {\color{\COMMENTCOLOR} follows a distribution of which the probability density function is bounded by some constant $\kappa$}. For the sub-family of $\cM_1$ with 
    $\alpha > \alpha_{\min}$
    for $0<\alpha_{\min}<\infty$, we have that the corresponding sequence of utility functions $u_1^{(t)}$ are $f$-point-dispersed with $f(T,\epsilon)=\tilde{O}(\epsilon T+\sqrt{T})$.
\end{theorem}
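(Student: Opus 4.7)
The plan is to combine the Pfaffian piece-wise structure already identified in the proof of \autoref{thm:pdim-clustering-1} with the new dispersion tool \autoref{thm:VC-bound-Pfaffian}. Since this sub-family is parameterized by the single coordinate $\alpha \in (\alpha_{\min}, \infty)$, we are in the one-dimensional setting $p=1$, where the axis-aligned paths of \autoref{thm:VC-bound-Pfaffian} are simply intervals. As in the proof of \autoref{thm:pdim-clustering-1}, the discontinuities of $u_1^{(t)}$ in $\alpha$ lie in the root sets of at most $n^8$ equations of the form $x^\alpha + y^\alpha = u^\alpha + v^\alpha$ with $x,y,u,v$ drawn from entries of $D^{(t)}$, and these equations are Pfaffian in $\alpha$ over the chain built from $\alpha, 1/d_{ij}^{(t)}, \ln d_{ij}^{(t)},$ and $(d_{ij}^{(t)})^\alpha$, of constant length and constant Pfaffian degree. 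By \autoref{thm:VC-bound-Pfaffian}, it then suffices to bound $\sup_{I} \mathbb{E}[D(T, I)]$ for intervals $I$ of length $\epsilon$, and combine with the $\tilde O(\sqrt{T \log T})$ uniform-convergence slack.

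Next, I would reduce this expectation bound to a per-boundary anti-concentration inequality. A union bound over the at most $n^8$ boundary equations per round $t$ reduces the claim to showing that for any fixed choice of eight points and any interval $I \subset (\alpha_{\min}, \infty)$ of length $\epsilon$,
\[
\Pr\!\left[\exists\, \alpha \in I : x^\alpha + y^\alpha = u^\alpha + v^\alpha\right] \;=\; O(\kappa \epsilon),
\]
where $x,y,u,v$ are the four corresponding $\kappa$-smooth entries of $D^{(t)}$. Summing gives $\mathbb{E}[D(T, I)] = O(n^8 \kappa \epsilon T)$, and taking $\epsilon = T^{-1/2}$ in \autoref{thm:VC-bound-Pfaffian} yields $\sup_I D(T, I) = \tilde O(\sqrt{T})$, which is exactly $1/2$-dispersion.

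The main obstacle is proving this anti-concentration statement, and my plan is a conditional change-of-variables argument. Conditioning on $y, u, v$, I would view $x = x(\alpha; y, u, v) = (u^\alpha + v^\alpha - y^\alpha)^{1/\alpha}$ as an implicit smooth function of $\alpha$ on its domain of positivity. Direct differentiation gives
\[
\frac{d x}{d \alpha} \;=\; \frac{u^\alpha \ln u + v^\alpha \ln v - y^\alpha \ln y}{\alpha\, x^{\alpha - 1}} \;-\; \frac{x \ln x}{\alpha},
\]
which is uniformly bounded on any compact sub-interval of $(\alpha_{\min}, \infty)$ once $x$ is bounded away from $0$; the assumption $\alpha > \alpha_{\min} > 0$ is precisely what prevents the blow-up at $\alpha = 0$. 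A Khovanskii-type bound (\autoref{thm:pfaffian-intersections}) on the number of roots of the Pfaffian function $g(\alpha) = x^\alpha + y^\alpha - u^\alpha - v^\alpha$ controls the number of branches, so the image $x(I)$ has Lebesgue measure $O(\epsilon)$, and the $\kappa$-smoothness of $x$ conditional on $(y, u, v)$ delivers the desired $O(\kappa \epsilon)$ bound. The delicate part I expect to need the most care is handling the neighborhood of $x = 0$, where $x^{\alpha - 1}$ can be large for $\alpha < 1$; I plan to handle this by swapping the role of $x$ with whichever of $y, u, v$ yields the smallest derivative (the boundary equation is symmetric in the four distances) and separately absorbing the bad event $\min(x, y, u, v) < \epsilon$ into the $O(\kappa \epsilon)$ budget via $\kappa$-smoothness.
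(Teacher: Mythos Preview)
Your plan is essentially the paper's argument: condition on three of the four distances, write $d_1(\alpha)=(u^\alpha+v^\alpha-y^\alpha)^{1/\alpha}$, bound $|d_1'(\alpha)|$ so that the image $d_1(I)$ has measure $O(\epsilon)$, handle the near-zero regime separately, and finish with \autoref{thm:VC-bound-Pfaffian}. The only differences are cosmetic: (i) in place of a Khovanskii root count, the paper invokes the generalized Descartes' rule of signs to show there is at most \emph{one} positive root away from $\alpha=0$, so no branch bookkeeping is needed; (ii) the derivative bound is made fully explicit via a short algebraic lemma, namely
\[
\frac{a^\alpha\ln a+b^\alpha\ln b-c^\alpha\ln c}{a^\alpha+b^\alpha-c^\alpha}\;\le\;\frac{1}{\alpha}+\ln\max\{a,b,c\}\qquad(\alpha>0,\ a^\alpha+b^\alpha-c^\alpha>0),
\]
which gives $|d_1'(\alpha^*)|=O(R\ln R/\alpha_{\min}^2)$ directly and obviates your variable-swapping step; (iii) the near-zero case is dispatched exactly as you suggest, by observing that $f(\alpha)\le\epsilon^\alpha$ on $I$ forces $d_1\le\epsilon$ and then using $\kappa$-smoothness. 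One small slip in your write-up: for $\alpha<1$ and $x\to 0$ the factor $x^{\alpha-1}\to\infty$ sits in the \emph{denominator}, so that term is harmless; the genuinely delicate regime is $\alpha>1$ with $x\to 0$, where $x^{\alpha-1}\to 0$.
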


\begin{proof} {\bf Technical overview}. By using a generalization of the Descartes' rule of signs, we  first show that there is at most one positive real solution for the equation

    $$(d(a_1,b_1))^\alpha+(d(a_2,b_2))^\alpha= (d(a_1',b_1'))^\alpha+(d(a_2',b_2'))^\alpha.$$

\noindent We then use $\kappa$ boundedness of the distances $d_{ij}$ to show that the probability of having a zero $\alpha^*\in I$ in any interval of width $I$ is at most $\tilde{O}(\epsilon)$. Using that we have at most $n^8$ such boundary functions, we can conclude that $\bbE[D(T, s)]=\tilde{O}(n^8\epsilon T)$ and use Theorem \ref{thm:VC-bound-Pfaffian}.

\noindent{\bf Proof details}. 
As noted in the proof of Theorem \ref{thm:pdim-clustering-1}, the boundaries of the piecewise-constant dual utility functions are given by

    \begin{equation}\label{eqn:bdry-linkage}
        (d(a_1,b_1))^\alpha+(d(a_2,b_2))^\alpha= (d(a_1',b_1'))^\alpha+(d(a_2',b_2'))^\alpha,
    \end{equation}
    for some (not necessarily distinct) points $a_1,b_1,a_2,b_2,a_1',b_1',a_2',b_2'\in S$.
    
    We use the generalized Descartes' rule of signs \citep{jameson2006counting,haukkanen2011generalization}, which implies that  the number of real zeros in $\alpha$ (for which boundary condition is satisfied) is at most the number of sign changes when the base of the exponents are arranged in an ascending order (since, the  family $\{a^x\mid a\in\bbR_+\}$ is {\it Descartes admissible} on $\bbR$), to conclude that the boundary of the loss function occurs for at most one point $\alpha^*\in\bbR$ apart from $\alpha=0$. We consider the following distinct cases (up to symmetry):

    \begin{itemize}
        \item Case $d_\beta(a_1,b_1)\ge d_\beta(a_2,b_2)\ge d_\beta(a_1',b_1')\ge d_\beta(a_2',b_2')$. The number of sign changes is one, and the conclusion is immediate.
        \item Case $d_\beta(a_1,b_1)\ge d_\beta(a_1',b_1')\ge d_\beta(a_2,b_2)\ge d_\beta(a_2',b_2')$. The only possibility is $d_\beta(a_1,b_1)= d_\beta(a_1',b_1')$ and $d_\beta(a_2,b_2)= d_\beta(a_2',b_2')$. But then, the condition holds for all $\alpha$ and we do not get a critical point. This corresponds to the special case of tie-breaking when merging clusters, and we assume ties are broken following some arbitrary but fixed ordering of  the pair of points.
        \item Case $ d_\beta(a_1',b_1') \ge d_\beta(a_1,b_1)\ge d_\beta(a_2,b_2)\ge d_\beta(a_2',b_2')$. The number of sign changes is two. Since $\alpha=0$ is a solution, there is at most one $\alpha^*\in\bbR\in\setminus\{0\}$ corresponding to a critical point.
    \end{itemize}

\noindent Now, let $\epsilon>0$. Consider an interval $I=[\alpha_1,\alpha_2]$ with $\alpha_1>\alpha_{\min}$ 
such that the width $\alpha_2-\alpha_1\le \epsilon$. If equation (\ref{eqn:bdry-linkage}) has a solution $\alpha^*\in I$ (guaranteed to be unique if it exists), we have that

\begin{align*}
    d(a_1,b_1)=\left((d(a_1',b_1'))^{\alpha^*}+(d(a_2',b_2'))^{\alpha^*}-(d(a_2,b_2))^{\alpha^*} \right)^{1/{\alpha^*}}
\end{align*}

Let $p_1$ denote the density of $d(a_1,b_1)$ which is at most $\kappa$ by assumption, and let $f(\alpha^*):=(d(a_1',b_1'))^{\alpha^*}+(d(a_2',b_2'))^{\alpha^*}-(d(a_2,b_2))^{\alpha^*}$ and $d_1(\alpha^*):=\left( f(\alpha^*)\right)^{1/{\alpha^*}}$. We seek to upper bound $\Pr_{p_1}\{d_1(\alpha^*) \mid \alpha^*\in I, f(\alpha^*)\ge 0\}$ to get a bound on the probability that there is a critical point in $I$.

Now we consider two cases. If $f(\alpha)\le \epsilon^{{\alpha}}$ for all $\alpha\in I$, we have that $d_1(\alpha)=f(\alpha)^{1/\alpha}\le \epsilon$ for all $\alpha\in I$. Thus, the probability of having a critical point in $I$ is $\cO(\kappa\epsilon)$ in this case.

Else, we have that $f(\alpha^*)>\epsilon^{\alpha^*}>0$ for some $\alpha^*\in I$. Using Taylor series expansion for $d_1(\alpha)$ around $\alpha^*$, we have

$$d_1(\alpha^*+\epsilon)=d_1(\alpha^*)+d_1'(\alpha^*)\epsilon+\cO(\epsilon^2).$$

\noindent Thus, the set $\{d_1(\alpha^*) \mid \alpha^*\in I, f(\alpha^*)\ge 0\}$ is contained in an interval of width at most $|d_1'(\alpha^*)|\cO(\epsilon)$, implying a upper bound of $\kappa|d_1'(\alpha^*)|\cO(\epsilon)$ on the probability of having a critical point in $I$.

Thus it is sufficient to give a bound on $|d_1'(\alpha^*)|$. We have

\begin{align*}
    d_1'(\alpha^*)&=d_1(\alpha^*)\left(-\frac{1}{{\alpha^*}^2}\ln f(\alpha^*)+\frac{1}{\alpha^*}\frac{g(\alpha^*)}{f(\alpha^*)}\right)\\
    &=\frac{1}{\alpha^*}\left(\frac{g(\alpha^*)}{f(\alpha^*)}-d_1(\alpha^*)\ln d_1(\alpha^*)\right),
\end{align*}

\noindent where

$$g(\alpha):=\ln d(a_1',b_1') (d(a_1',b_1'))^{\alpha^*}+ \ln d(a_2',b_2') (d(a_2',b_2'))^{\alpha^*}-\ln d(a_2,b_2) (d(a_2,b_2))^{\alpha^*}.$$
Thus,
\begin{align*}
    |d_1'(\alpha^*)|& =\Bigg\lvert\frac{1}{\alpha^*}\left(\frac{g(\alpha^*)}{f(\alpha^*)}-d_1(\alpha^*)\ln d_1(\alpha^*)\right)\Bigg\rvert\\
    &\le \Big\lvert\frac{1}{\alpha^*}\Big\rvert\cdot \left(\Big\lvert \frac{g(\alpha^*)}{f(\alpha^*)}\Big\rvert+\lvert d_1(\alpha^*)\ln d_1(\alpha^*)\rvert \right)\\
    &\le \Big\lvert\frac{1}{\alpha_{\min}}\Big\rvert\cdot \left(\Big\lvert \frac{g(\alpha^*)}{f(\alpha^*)}\Big\rvert+R\ln R \right).
\end{align*}

\noindent Now using Lemma \ref{lem:algebraic}, we get that 

$$\Big\lvert\frac{g(\alpha^*)}{f(\alpha^*)}\Big\rvert\le \frac{1}{\alpha^*}+\ln R, $$

\noindent and thus,

$$|d_1'(\alpha^*)|\le O\left(\frac{R\ln R}{\alpha_{\min}^2}\right).$$

\noindent Using that we have at most $n^8$  boundary functions of the form (\ref{eqn:bdry-linkage}), we can conclude that $\bbE[D(T, s)]=\tilde{O}(\frac{R\ln R}{\alpha_{\min}^2} \kappa n^8\epsilon T)$ and using Theorem \ref{thm:VC-bound-Pfaffian} we can conclude that the sequence of utility functions are $f$-point-dispersed with $f(T,\epsilon)=\tilde{O}(\epsilon T+\sqrt{T})$.
\end{proof}

\noindent We also show how to use our tools above to establish the dispersion property for the $\cM_3$ linkage clustering algorithm family.




\begin{theorem}\label{thm:linkage-dispersion-geometric}
    Consider an adversary choosing a sequence of clustering instances where the $t$-th instance has a symmetric distance matrix $D^{(t)} \in [0, R]^{n\times n}$ and for all $i \le j, d^{(t)}_{ij}$ {\color{\COMMENTCOLOR} follows a distribution of which the probability density function is bounded by some constant $\kappa$}. For the family of clustering algorithms $\cM_3$, we have that the corresponding sequence of utility functions $u_3^{(t)}$ as a function of the parameter $\alpha=(\alpha_1,\dots,\alpha_L)$ are $f$-point-dispersed with $f(T,\epsilon)=\tilde{O}(\epsilon T+\sqrt{T})$.
\end{theorem}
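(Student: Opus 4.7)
The plan is to mirror the proof sketch of \autoref{thm:linkage-dispersion}, but now account for the multi-parameter structure of $\cM_3$, and then appeal to \autoref{thm:VC-bound-Pfaffian} rather than a one-dimensional dispersion lemma. First I would identify the transition conditions: as in the proof of \autoref{thm:pdim-clustering-3}, at any merge step the decision to merge $(A,B)$ over $(A',B')$ flips across the hypersurface
\begin{equation*}
\frac{1}{|A||B|}\sum_{a\in A, b\in B}\prod_{i=1}^L d_i^{(t)}(a,b)^{\alpha_i} \;=\; \frac{1}{|A'||B'|}\sum_{a'\in A', b'\in B'}\prod_{i=1}^L d_i^{(t)}(a',b')^{\alpha_i},
\end{equation*}
since $\sum_i\alpha_i > 0$ lets us drop the outer $1/\sum_i\alpha_i$ power. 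There are at most $n^8$ such hypersurfaces per round, and the excerpt already shows (via the chain built from $d_i(a,b)^{\alpha_i}$, its logarithm, and reciprocal) that these are Pfaffian functions over a common chain of length, degree, and Pfaffian degree that are constant in $T$ and polynomial in $n,L$.

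Next I would invoke \autoref{thm:VC-bound-Pfaffian}: since the boundary surfaces belong to such a Pfaffian system, it suffices to upper bound $\bbE[D(T,s)]$ uniformly over axis-aligned paths $s$ in $[\alpha_{\min},\alpha_{\max}]^L$. Any such path is a concatenation of axis-parallel segments, so I can restrict attention to segments where only a single coordinate $\alpha_j$ varies while the others are held fixed. Along such a segment, the transition equation becomes an exponential-polynomial equation in $\alpha_j$ of the form
\begin{equation*}
\sum_{(a,b)\in A\times B}\!C_{a,b}\, d_j^{(t)}(a,b)^{\alpha_j} \;=\; \sum_{(a',b')\in A'\times B'}\!C'_{a',b'}\, d_j^{(t)}(a',b')^{\alpha_j},
\end{equation*}
where each $C_{a,b}=\frac{1}{|A||B|}\prod_{i\ne j}d_i^{(t)}(a,b)^{\alpha_i}$ is a fixed positive constant (given the frozen coordinates). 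By the generalized Descartes' rule of signs for exponential sums (Laguerre's theorem), the number of positive real solutions $\alpha_j^*$ is at most one less than the number of distinct bases $d_j^{(t)}(\cdot,\cdot)$ appearing across both sides, hence at most $2n^2$.

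Then, exactly as in the proof of \autoref{thm:linkage-dispersion}, I would use the $\kappa$-smoothness of $d_{ij}^{(t)}$ to argue that for each such equation, the probability that any of its (at most $2n^2$) real roots in $\alpha_j$ lands in a fixed interval of width $\epsilon$ is $\tilde{O}(\kappa n^2 \epsilon)$. Union bounding over the at most $n^8$ candidate tuples $(A,B,A',B')$, and then over the $T$ rounds, gives $\bbE[D(T,s)]=\tilde{O}(n^{10}\epsilon T)$ along any axis-parallel segment, and hence $\tilde{O}(L n^{10}\epsilon T)$ along any axis-aligned path. Combining with \autoref{thm:VC-bound-Pfaffian} and choosing $\epsilon=T^{-1/2}$ yields $1/2$-dispersion.

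The main obstacle I anticipate is verifying the smoothness of the root distribution in the one-dimensional slice: unlike the two-term case for $\cM_1$ where the root is obtained in closed form from a ratio of $\kappa$-smooth quantities, here the roots are implicit solutions of a multi-term exponential sum whose coefficients $C_{a,b}$ mix several $\kappa$-smooth distances in a nonlinear way. I would address this by differentiating the exponential-sum equation with respect to the underlying distance $d_j(a,b)$, bounding the derivative of the implicit root below using Descartes-type sign structure on the derivative (so that a simple root stays simple under the perturbation), and then transferring the $\kappa$-smoothness from distances to the distribution of the root via the change-of-variables bound already used in \cite{balcan2020semi, balcan2021data}. The degenerate case of coincident distances $d_j(a,b)=d_j(a',b')$ happens with probability zero under any $\kappa$-smooth distribution, so it contributes negligibly.
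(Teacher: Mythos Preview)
Your high-level plan matches the paper's proof: reduce to axis-aligned segments, write the transition condition along each segment as a one-variable exponential sum, establish a $\tilde O(\epsilon)$ bound on the probability of a root in an interval of width $\epsilon$, and finish with Theorem~\ref{thm:VC-bound-Pfaffian}. The difference is in how you discharge the one-dimensional step. You treat the \emph{bases} $d_j(a,b)$ as the source of randomness and propose an implicit-function/change-of-variables argument to show the root has bounded density; you correctly flag that lower-bounding $|\partial \alpha_j^*/\partial d_j(a,b)|$ for a multi-term exponential sum is the delicate part. The paper instead exploits the randomness in the \emph{coefficients}: with $\alpha_2,\dots,\alpha_L$ frozen, each $C_{a,b}=\frac{1}{|A||B|}\prod_{i\ne j}d_i(a,b)^{\alpha_i}$ is a product of independent $\kappa$-smooth variables raised to bounded powers, and a short lemma (their Lemma~\ref{lem:power-alpha-bounded}) shows each factor has bounded density; the product then has bounded joint density. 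This places the equation exactly in the scope of an existing black-box result (Theorem~25 of \cite{balcan2021data}, restated as Theorem~\ref{thm:exp-dispersion}), which directly gives $\Pr(\text{zero in }I)=\tilde O(\epsilon)$ without any implicit-function analysis or Descartes-type root count. Your route is plausible but does genuine extra work where the paper gets the conclusion for free; in particular, your bound of $2n^2$ on the number of roots and the derivative lower bound are both unnecessary once you view the coefficients, rather than the bases, as the smooth random inputs.
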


\begin{proof} {\bf Technical overview}.
    It is sufficient to show dispersion for each $\alpha_i$ keeping the remaining parameters fixed. This is because the definition of dispersion allows us to consider intersections of discontinuities with axis-aligned paths. WLOG, assume $\alpha_2,\dots,\alpha_L$ are fixed. The boundary functions are given by solutions of exponential equations in $\alpha_1$ of the form

    $$\frac{1}{|A||B|}\sum_{a\in A,b\in B}\Pi_{i\in[L]}(d_i(a,b))^{\alpha_i}=\frac{1}{|A'||B'|}\sum_{a'\in A',b'\in B'}\Pi_{i\in[L]}(d_i(a',b'))^{\alpha_i},$$
    \noindent for $A,B,A',B'\subseteq S$. We can now use Theorem \ref{thm:VC-bound-Pfaffian} together with Theorem 25 of {\color{\COMMENTCOLOR}\citet{balcan2021data} }to conclude that the discontinuities of $u_3^{(t)}$ are $f$-point-dispersed with $f(T,\epsilon)=\tilde{O}(\epsilon T+\sqrt{T})$.

    \noindent{\bf Proof details}. 
    We will show dispersion for each $\alpha_i$ keeping the remaining parameters fixed. This is sufficient because, the definition of dispersion {\color{\COMMENTCOLOR}\citep{balcan2020semi}} allows us to consider discontinuities along axis-aligned paths between pairs of points $\alpha,\alpha'$ in the parameter space. WLOG, assume that $\alpha_2,\dots,\alpha_L$ are fixed. The boundary functions are given by solutions of exponential equations in $\alpha_1$ of the form

    $$\frac{1}{|A||B|}\sum_{a\in A,b\in B}\Pi_{i\in[L]}(d_i(a,b))^{\alpha_i}=\frac{1}{|A'||B'|}\sum_{a'\in A',b'\in B'}\Pi_{i\in[L]}(d_i(a',b'))^{\alpha_i},$$
    \noindent for $A,B,A',B'\subseteq S$. We can rewrite this equation in the form $\sum^n_{j=1} a_je^{b_jx}=0$ where, $x=\alpha_1$, $b_j=\ln(d_1(a,b))$ for $a,b\in A\times B$ or $a,b\in A'\times B'$ and $a_j=\frac{1}{|A||B|}\Pi_{i\in\{2,\dots,L}d_i(a,b)^{\alpha_i}$ for $a,b\in A\times B$ or $a_j=\frac{-1}{|A'||B'|}\Pi_{i\in\{2,\dots,L\}}d_i(a',b')^{\alpha_i}$ for $a',b'\in A'\times B'$. The coefficients $a_j$ are real with magnitude at most $R^{L-1}$. By Lemma \ref{lem:power-alpha-bounded}, we have that $d_i(a,b)^{\alpha_i}$ is $\kappa'$-bounded for $\kappa'\le\frac{\kappa}{\alpha_{\min}}\max\{1,R^{\frac{1}{\alpha_{\min}}-1}\}$, and therefore the coefficients $a_j$ have a $\kappa''$-bounded density for $\kappa''\le n^2\kappa'^{L-1}=O\left(n^2\left(\frac{\kappa R^{1/\alpha_{\min}}}{\alpha_{\min}}\right)^{L-1}\right)$ (using Lemma 8 from \cite{balcan2018dispersion}). Using Theorem \ref{thm:exp-dispersion}, the probability there is a discontinuity along any segment along the direction $\alpha_1$ of width $\epsilon$ is $p_1=\tilde{O}(\epsilon)$. Thus, for any axis-aligned path $s$ between points $\alpha,\alpha'\in \bbR^L$, the expected number of discontinuities for any utility function $u_3^{(t)}$ ($t\in[T]$) is at most $Lp_1=\tilde{O}(\epsilon)$. We can now apply Theorem \ref{thm:VC-bound-Pfaffian} to get

    \begin{align*}
        \bbE[\sup_{s\in \cL} D(T, s)] &\le \sup_{s\in \cL} \bbE[D(T, s)] +
\tilde{O}(\sqrt{T \log T})\\
&=\tilde{O}(\epsilon T) +
\tilde{O}(\sqrt{T \log T}),
    \end{align*}
    \noindent where $\cL$ denotes the set of axis-aligned paths between pairs of points $\alpha,\alpha'\in \bbR^L$. It then follows that $\bbE[\sup_{s\in \cL} D(T, s)]=\Tilde{O}(\sqrt{T})$ for $\epsilon\ge T^{-1/2}$, establishing that the sequence of utility functions $u_3^{(1)},\dots,u_3^{(T)}$ is $f$-point-dispersed with $f(T,\epsilon)=\tilde{O}(\epsilon T+\sqrt{T})$. 
\end{proof}

\subsubsection{Online learning for data-driven regularized logistic regression}

We consider an online learning setting for regularized logistic regression from Section \ref{sec:logistic-distributional}. The problem instances $P_t = (\boldsymbol{X}^{(t)}, \boldsymbol{y}^{(t)}, \boldsymbol{X}^{(t)}_{\text{val}}, \boldsymbol{y}^{(t)}_{\text{val}})$ are now presented online in a sequence of rounds $t=1,\dots,T$, and the online learner predicts the regularization coefficient $\lambda_t$ in each round. The regret of the learner is given by

$$R_T = \bbE\left[\sum_{t=1}^T \left(h_{\lambda_t}(P_t) - h_{\lambda^*}(P_t)\right)\right],$$

\noindent where $\lambda^*=\argmin_{\lambda\in[\lambda_{\min},\lambda_{\max}]}\bbE\sum_{t=1}^T h_{\lambda}(P_t)$. Our main result is to show the existence of a no-regret learner in this setting.

\begin{theorem} \label{thm:logistic-dispersion}
    Consider the online learning problem for tuning the logistic regression regularization coefficient $\lambda_t$ stated above. There exists an online learner with expected regret bounded by $\cO(\sqrt{T\log [(\lambda_{\text{max}}-\lambda_{\text{min}})T]})$.
\end{theorem}

\begin{proof} {\bf Technical overview}.
    The key idea is to use an appropriate surrogate loss function which well approximates the true loss function, but is dispersed and therefore can be learned online. We then connect the regret of the learner with respect to the surrogate loss with its regret w.r.t.\ the true loss.

    We consider an $\epsilon$-grid of $\lambda$ values. For each round $t$, to construct the surrogate loss function, we sample a uniformly random point from each interval $[\lambda_{\text{min}} + k\epsilon,  \lambda_{\text{min}} + (k + 1)\epsilon]$ and compute the surrogate loss $h^{(\epsilon)}_\lambda(P)$ at that point, and use a linear interpolation between successive points. By Theorem \ref{thm:rosset1}, this has an error of at most $\cO(\epsilon^2)$, which implies the regret gap with the true loss is at most $\cO(\epsilon^2T)$ when using the surrogate function.

    We show that the surrogate function is $f$-point-dispersed (Definition 4, \cite{balcan2020semi}) with $f(T,r)=\frac{rT}{\epsilon}$. Using Theorem 5 of \cite{balcan2020semi}, we get that there is an online learner with regret    
    $$
        \cO(\sqrt{T\log((\lambda_{\text{max}}-\lambda_{\text{min}})/r)}+f(T,r)+Tr+\epsilon^2T
        )=\cO(\sqrt{T\log((\lambda_{\text{max}}-\lambda_{\text{min}})/r)}+\frac{Tr}{\epsilon}+\epsilon^2T
        ).
    $$ 
    Setting $\epsilon=T^{-1/4}$ and $r=T^{-3/4}$, we get the claimed regret upper bound.
    
    \noindent{\bf Proof details}. 
We consider an $\epsilon$-grid of $\lambda$ values given by intervals $[\lambda_{\text{min}} + k\epsilon,  \lambda_{\text{min}} + (k + 1)\epsilon]$ for $k=0,\dots, \lfloor\frac{\lambda_{\max}-\lambda_{\min}}{\epsilon}\rfloor$. For each round $t$, to construct the surrogate loss function, we sample a uniformly random point $\lambda_{t}^{(k)}$ from each interval $[\lambda_{\text{min}} + k\epsilon,  \lambda_{\text{min}} + (k + 1)\epsilon]$ and compute the surrogate model $\hat{\beta}_{(X, y)}(\lambda)$ at that point using Algorithm \ref{alg:rosset-lasso} (\ref{alg:rosset-ridge}) for $\ell_1$ ($\ell_2$), and use a linear interpolation between successive points $\lambda_{t}^{(k)},\lambda_{t}^{(k+1)}$ which are at most $2\epsilon$ apart. By Theorem \ref{thm:rosset1}, we have that $\|\hat{\beta}_{(X_t, y_t)}(\lambda) - \beta^{(\epsilon)}_{(X_t, y_t)}(\lambda)\|_2=\cO(\epsilon^2)$ for any $\lambda\in[\lambda_{\min},\lambda_{\max}]$, where $\beta^{(\epsilon)}_{(X_t, y_t)}(\lambda)$ is the true model (that exactly optimizes the logistic loss) for $(X_t,y_t)$. By Lemma 4.2 of \cite{balcan2024new} we can conclude that the corresponding surrogate loss $\hat{h}^{(\epsilon)}_\lambda(P_t)$ also satisfies  $||\hat{h}^{(\epsilon)}_\lambda(P_t)-h_\lambda(P_t)||\le \cO(\epsilon^2)$ for any $\lambda\in[\lambda_{\min},\lambda_{\max}]$.

This implies the regret  with respect to the true loss is at most $\cO(\epsilon^2T)$ more than the regret when using the surrogate function. Suppose $\lambda^*=\argmin_{\lambda\in[\lambda_{\min},\lambda_{\max}]}\sum_{t=1}^T\bbE [h_{\lambda}(P_t)]$. We have,

\begin{align*}
    R_T &= \sum_{t=1}^T\bbE[h_{\lambda_t}(P_t)-h_{\lambda^*}(P_t)]\\
    &=\sum_{t=1}^T\bbE[h_{\lambda_t}(P_t)-\hat{h}_{\lambda^*}(P_t)]+\sum_{t=1}^T\bbE[\hat{h}_{\lambda^*}(P_t)-{h}_{\lambda^*}(P_t)]\\
    &\le \sum_{t=1}^T\bbE[h_{\lambda_t}(P_t)-\hat{h}_{\lambda^*}(P_t)]+\cO(\epsilon^2T)\\
    &\le \sum_{t=1}^T\bbE[h_{\lambda_t}(P_t)-\hat{h}_{\hat{\lambda}}(P_t)]+\cO(\epsilon^2T),
\end{align*}
\noindent where $\hat{\lambda}=\argmin_{\lambda\in[\lambda_{\min},\lambda_{\max}]}\sum_{t=1}^T\bbE [\hat{h}_{\lambda}(P_t)]$.

    We next show that the surrogate function is $f$-point-dispersed (Definition 4, \cite{balcan2020semi}) with $f(T,r)=\frac{rT}{\epsilon}$. Indeed, let $0<r<\epsilon$. Let $I=[\lambda_1,\lambda_2]\subset [\lambda_{\min},\lambda_{\max}]$ be an interval of length $r$. Then the probability $\hat{h}_{\lambda}(P_t)$ has a critical point in $I$ is at most $r/\epsilon$.
    
    Using Theorem 5 of \cite{balcan2020semi} combined with the above argument, we now get that there is an online learner with regret $\cO(\sqrt{T\log((\lambda_{\text{max}}-\lambda_{\text{min}})/r)}+f(T,r)+Tr+\epsilon^2T
    )=\cO(\sqrt{T\log((\lambda_{\text{max}}-\lambda_{\text{min}})/r)}+\frac{Tr}{\epsilon}+\epsilon^2T
    )$. Setting $\epsilon=T^{-1/4}$ and $r=T^{-3/4}$, we get the claimed upper bound.    
\end{proof} 

\section{Conclusion and future directions}
In this work, we introduce the Pfaffian GJ framework for the data-driven {\color{\CAMERAREADY} statistical} learning setting, providing learning guarantees for function classes whose computations involve the broad class of Pfaffian functions. Additionally, we propose {\color{\CAMERAREADY} the} Pfaffian piecewise structure for data-driven algorithm design problems and establish  improved learning guarantees for problems admitting such a refined piecewise structure. We apply our framework to three different previously studied data-driven algorithm design problems (including hierarchical clustering, graph-based semi-supervised learning and regularized logistic regression), where known techniques do not yield concrete learning guarantees. By carefully analyzing the underlying Pfaffian structure for these problems, we leverage our proposed framework to establish novel statistical learning guarantees. We further study the data-driven online learning setting,  where we introduce a new tool for verifying the dispersion property, applicable when the transition boundaries of the utility functions involve Pfaffian functions.

{\color{\COMMENTCOLOR}
    Our work opens many interesting directions for future research. First, in the statistical learning setting, our work only focuses on providing an upper bound for the learning-theoretic complexity of function classes for which the dual admits the Pfaffian piecewise structure, and establishing  lower-bounds remains an important question---prior work \citep{balcan2017learning,balcan2021data,balcan2024new} establishes near-tight lower bounds  for simpler algorithm families of clustering, semi-supervised learning and regression, that do not involve Pfaffian functions. It is an interesting question to apply our techniques to other scenarios such as data-driven techniques for solving mixed-integer programs for example learning to cut and learning to branch~\citep{balcan2018learning,balcan2022structural,cheng2024learning}. Another concrete question is establishing online learning results for clustering families beyond $\cM_1$ and for tuning the graph kernel in semi-supervised learning. We expect our general tools here (e.g.\ Theorem \ref{thm:VC-bound-Pfaffian}) to be helpful, but a careful analysis is needed to bound the expected number of discontinuities in any parameter interval. 
    Finally, our framework  focuses on establishing statistical sample complexity and no-regret learning guarantees for algorithm configuration for structured Pfaffian settings. Techniques have been developed in prior work for \textit{computationally efficient} configuration of algorithms for clustering and semi-supervised learning \citep{balcan2024accelerating,sharma2023efficiently}, but only for algorithm families with simpler piecewise structure than those studied here. 

}

\section{Acknowledgment}
This work was supported in part by NSF grants CCF-1910321 and IIS-1901403, the Defense Advanced Research Projects Agency under cooperative agreement HR00112020003, and Simons Investigator Award MPS-SICS-00826333. 

\bibliographystyle{plainnat}
\bibliography{references}

\appendix

\newpage
\section*{Appendix}
\section{Preliminaries}\label{appx:classical-generalization-results}
In this section, we first recall some additional classical notions and results on learning theory, which is also helpful in our analysis.
For completeness, along with the definition of pseudo-dimension (Definition \ref{def:pseudo-dimension}), we now give a formal definition of the VC-dimension, which is a counterpart of pseudo-dimension when the function class is binary-valued.
\begin{definition}[Shattering and VC-dimension, \citealp{vapnik1974theory}]
        Let $\cU$ is a class of binary-valued functions that take input from $\cX$. Let $S = \{\boldsymbol{x}_1, \dots, \boldsymbol{x}_m\} \subset \cX$ be the set of $m$ input instances, we say that $S$ is shattered by $\cU$ if 
        \begin{equation*}
            |\{(u(\boldsymbol{x}_1), \dots, u(\boldsymbol{x}_m) ) \mid u \in \cU\}| = 2^m.
        \end{equation*}
        The VC-dimension of $\cU$, denoted $\VCdim(\cU)$, is the largest size of the set $S$ that can be shattered by $\cU$.
\end{definition}

\noindent Rademacher complexity {\color{black}\citep{bartlett2002rademacher} } is another classical learning-theoretic complexity measure for obtaining data-dependent  learning guarantees.

\begin{definition}[Rademacher complexity, {\color{black}\citealp{bartlett2002rademacher}}]
Let $\cU$ be a real-valued function class which takes input from domain $\cU$. Let $S = \{\boldsymbol{x}_1, \dots, \boldsymbol{x}_m\} \sim \cD$ be a set of $m$ input instances drawn from some problem distribution $\cD$ over $\cX$. Then the empirical Rademacher complexity $\sR_S(\cU)$ of $\cU$ with respect to $\cU$ is defined as 
\begin{equation*}
    \hat{\sR}_S(\cU) = \bbE_{\boldsymbol{\sigma}}\left[\sup_{u \in \cU}\abs{\frac{1}{m}\sum_{i = 1}^m \sigma_i u(\boldsymbol{x}_i)}\right].
\end{equation*}    
The Rademacher complexity of $\cU$ for $n$ samples is then defined as 
\begin{equation*}
    \sR_m(\cU) = \bbE_{S \sim \cD^m}[\hat{\sR}_S(\cU)] = \bbE_{S \sim \cD^m}\bbE_{\boldsymbol{\sigma}}\left[\sup_{u \in \cU}\abs{\frac{1}{m}\sum_{i = 1}^m \sigma_i u(\boldsymbol{x}_i)}\right].
\end{equation*}
\end{definition}

    

\noindent We now recall the classical notion of uniform convergence. 

\begin{definition}[Uniform convergence]
Let $\cU$ be a real-valued function class which takes input from $\cX$, and $\cD$ is a distribution over $\cX$. If for any $\epsilon > 0$, and any $\delta \in [0, 1]$, there exists $N(\epsilon, \delta)$ s.t. with probability at least $1 - \delta$ over the draw of $m \geq N(\epsilon, \delta)$ samples $\boldsymbol{x}_1, \dots, \boldsymbol{x}_m \sim \cD$, we have
\begin{equation*}
    \Delta_m = \sup_{u \in \cU}\abs{\frac{1}{m}\sum_{i = 1}^m u(\boldsymbol{x}_i) - \bbE_{\boldsymbol{x} \sim \cD}[u(\boldsymbol{x})]} \leq \epsilon,
\end{equation*}
then we say that 
$\cF$ uniformly converges. 
\end{definition}


\section{Omitted proofs from Section \ref{sec:pfaffian-gj-framework}} 
\label{sec:pfaffian}
In this section, we will present basic properties of Pfaffian chain/functions and a proof for Theorem \ref{thm:pfaffian-gj-algorithm}.

\subsection{Some facts about Pfaffian functions with elementary operators}\label{sec:paffian-basic-properties}
In this section, we formalize some basic properties of Pfaffian functions. Essentially, the following results say that if adding/subtracting/multiplying/dividing two Pfaffian functions from the same Pfaffian chain, we end up with a function that is also the Pfaffian function from that Pfaffian chain. Moreover, even if the two Pfaffian functions are not from the same Pfaffian chain, we can still construct a new Pfaffian chain that contains both the functions and the newly computed function as well (by simply combining the two chains).
\begin{fact}[Addition/Subtraction]
Let $g$, $h$ be Pfaffian functions from the Pfaffian chain $\cC(\boldsymbol{a}, \eta_1, \dots, \eta_q)$ ($a \in \bbR^d$). Then we have $g(\boldsymbol{a}) \pm h(\boldsymbol{a})$ are also Pfaffian functions from the chain $\cC$.
\end{fact}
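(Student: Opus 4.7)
\begin{proofoutline}
The plan is to argue directly from the definitions, with no need to alter or extend the chain $\cC$. By Definition \ref{def:pfaffian-function}, a Pfaffian function over $\cC(\boldsymbol{a},\eta_1,\dots,\eta_q)$ is, by definition, a polynomial expression in the variables $a_1,\dots,a_d$ together with the chain functions $\eta_1,\dots,\eta_q$. So the heart of the argument is simply that the set of real polynomials in the formal variables $(a_1,\dots,a_d,\eta_1,\dots,\eta_q)$ is closed under addition and subtraction.

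Concretely, write $g(\boldsymbol{a}) = Q_g(\boldsymbol{a},\eta_1(\boldsymbol{a}),\dots,\eta_q(\boldsymbol{a}))$ and $h(\boldsymbol{a}) = Q_h(\boldsymbol{a},\eta_1(\boldsymbol{a}),\dots,\eta_q(\boldsymbol{a}))$ for real polynomials $Q_g, Q_h \in \bbR[a_1,\dots,a_d,y_1,\dots,y_q]$, which exist by hypothesis. Then the polynomial $Q_{\pm} := Q_g \pm Q_h \in \bbR[a_1,\dots,a_d,y_1,\dots,y_q]$ evaluates to $(g\pm h)(\boldsymbol{a})$, witnessing that $g\pm h$ is a Pfaffian function on the chain $\cC$ in the sense of Definition \ref{def:pfaffian-function}.

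There is essentially no obstacle here; this is a direct consequence of the ring structure of polynomial expressions. The only thing one might additionally want to track, for later use in the framework (for example when bounding the Pfaffian degree $\Delta$ appearing in Theorem \ref{thm:pfaffian-gj-algorithm}), is a complexity estimate: the degree of $Q_\pm$ is at most $\max(\deg Q_g,\deg Q_h)$. I would record this explicitly as part of the statement or as a remark following it, since the subsequent facts on multiplication and on extending chains (needed for the division and Pfaffian-operator cases) will also be stated with analogous degree/length bookkeeping, and these bounds are exactly what is consumed by the main theorems.
\end{proofoutline}
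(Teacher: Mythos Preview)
Your argument is correct and is in fact the most direct route: by Definition \ref{def:pfaffian-function}, being a Pfaffian function over $\cC$ means being a polynomial in $(\boldsymbol{a},\eta_1,\dots,\eta_q)$, and polynomials are closed under $\pm$, so $Q_g\pm Q_h$ witnesses the claim immediately.

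The paper proceeds differently: it computes $\frac{\partial}{\partial a_i}(g\pm h)=\frac{\partial g}{\partial a_i}\pm\frac{\partial h}{\partial a_i}$ and observes that each summand is a polynomial in $\boldsymbol{a}$ and the $\eta_j$. That argument really shows that $g\pm h$ satisfies the chain-extension condition of Definition \ref{def:pfaffian-chain} (its partial derivatives are polynomial in the existing chain), which is the property needed in the later facts on multiplication, division, and composition where one genuinely enlarges the chain. Your approach, by contrast, appeals only to the ring structure of $\bbR[a_1,\dots,a_d,y_1,\dots,y_q]$ and directly matches what Definition \ref{def:pfaffian-function} asks for over the \emph{same} chain $\cC$; it is more elementary and also yields the degree bookkeeping $\deg Q_\pm\le\max(\deg Q_g,\deg Q_h)$ that feeds into the $\Delta$ parameter of Theorem \ref{thm:pfaffian-gj-algorithm}. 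Both arguments are valid for this fact, but yours is the one that aligns cleanly with the stated definition and the complexity accounting used downstream.
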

\proof For any $a_i$, we have $\frac{\partial}{\partial a_i}(g \pm h)(\boldsymbol{a}) = \frac{\partial}{\partial a_i}g(\boldsymbol{a}) \pm \frac{\partial}{\partial a_i}h(\boldsymbol{a})$. Since $\frac{\partial g(\boldsymbol{a})}{\partial a_i}$ and $\frac{\partial h(\boldsymbol{a})}{\partial a_i}$ are polynomial of $\boldsymbol{a}$ and $\eta_i$, $\frac{\partial}{\partial a_i}(g \pm h)(\boldsymbol{a})$ are also polynomial of $\boldsymbol{a}$ and $\eta_i$, which concludes the proof. \qed 

\begin{fact}[Multiplication]
Let $g$, $h$ be Pfaffian functions from the Pfaffian chain $\cC(\boldsymbol{a}, \eta_1, \dots, \eta_q)$ ($\boldsymbol{a} \in \bbR^d$) of length $q$ and Pfaffain degree $M$. Then we have $g(\boldsymbol{a})h(\boldsymbol{a})$ is a Pfaffian function from the chain $\cC'(\boldsymbol{a}, \eta_1, \dots, \eta_q, g, h).$
\end{fact}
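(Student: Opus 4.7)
The plan is to verify two things: first, that $\cC'(\boldsymbol{a}, \eta_1, \dots, \eta_q, g, h)$ is indeed a valid Pfaffian chain according to Definition \ref{def:pfaffian-chain}; and second, that $gh$ is expressible as a polynomial in the variables of $\cC'$, which by Definition \ref{def:pfaffian-function} will immediately certify $gh$ as a Pfaffian function over $\cC'$.

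For the chain condition, observe that the first $q$ entries $\eta_1, \dots, \eta_q$ already satisfy the required polynomial differentiation relations since they come from the original chain $\cC$. The substantive content is to check the relations for the two newly appended entries $g$ and $h$. Since $g = Q_g(\boldsymbol{a}, \eta_1, \dots, \eta_q)$ for some real polynomial $Q_g$ by hypothesis, applying the multivariable chain rule gives
\begin{equation*}
    \frac{\partial g}{\partial a_i} \;=\; \frac{\partial Q_g}{\partial a_i}(\boldsymbol{a}, \eta_1, \dots, \eta_q) \;+\; \sum_{j=1}^{q} \frac{\partial Q_g}{\partial \eta_j}(\boldsymbol{a}, \eta_1, \dots, \eta_q)\cdot P_{i,j}(\boldsymbol{a}, \eta_1, \dots, \eta_j),
\end{equation*}
where $P_{i,j}$ are the polynomials witnessing that $\cC$ is a Pfaffian chain. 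The right-hand side is a polynomial in $(\boldsymbol{a}, \eta_1, \dots, \eta_q)$, hence a fortiori a polynomial in $(\boldsymbol{a}, \eta_1, \dots, \eta_q, g)$, as required for $g$ to appear as the $(q{+}1)$-st entry of a Pfaffian chain. The identical computation applied to $Q_h$ shows that $\partial h / \partial a_i$ is a polynomial in $(\boldsymbol{a}, \eta_1, \dots, \eta_q, g, h)$, completing the verification that $\cC'$ meets Definition \ref{def:pfaffian-chain}.

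The second step is immediate: the product $gh$ is literally the monomial $g \cdot h$ of total degree two in the last two variables of $\cC'$, and is therefore a polynomial in the chain variables. By Definition \ref{def:pfaffian-function}, this exhibits $gh$ as a Pfaffian function over $\cC'$. I do not anticipate any real obstacle; the only care needed is the chain-rule bookkeeping that legitimizes extending $\cC$ by the new functions $g$ and $h$. As a side benefit, one can read off explicit bounds on the length (which increases by two) and on the Pfaffian degree of $\cC'$ in terms of the degrees of $Q_g$, $Q_h$, and the $P_{i,j}$, which will be convenient when propagating complexity measures in the subsequent lemma on division and in the construction of the chain associated to a Pfaffian GJ algorithm.
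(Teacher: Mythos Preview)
Your proof is correct and in fact cleaner than the paper's own argument. The paper's one-line proof simply writes $\frac{\partial}{\partial a_i}(gh) = g\,\frac{\partial h}{\partial a_i} + h\,\frac{\partial g}{\partial a_i}$ and notes this is a polynomial in $\boldsymbol{a}, \eta_1, \dots, \eta_q, g, h$; but that computation is literally what one would need to append $gh$ itself to the chain, not what Definition~\ref{def:pfaffian-function} asks for (namely that $gh$ be a polynomial in the elements of an already-validated chain $\cC'$). Your decomposition---first verifying via the chain rule that $\partial g/\partial a_i$ and $\partial h/\partial a_i$ are polynomial in the preceding chain variables so that $\cC'$ is a legitimate Pfaffian chain, and then observing that $gh$ is the trivial degree-two monomial in the last two entries---matches the definitions exactly and makes explicit the step the paper leaves implicit. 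Both arguments are ultimately the same elementary derivative bookkeeping, but yours is the more precise reading of the statement.
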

\proof For any $a_i$, we have $\frac{\partial}{\partial a_i}(g(\boldsymbol{a}) h(\boldsymbol{a})) = g\frac{\partial h(\boldsymbol{a})}{\partial a_i} + h\frac{\partial g(\boldsymbol{a})}{\partial a_i}$, which is a polynomial of $\boldsymbol{a}, \eta_1, \dots, \eta_q, g, h$. \qed

\begin{fact}[Division] Let $g, h$ be Pfaffain functions from the Pfaffian chain $\cC(\boldsymbol{a}, \eta_q, \dots, \eta_q)$ ($\boldsymbol{a} \in \bbR^d$) of length $q$ and Pfaffian degree $M$. Assume that $h(\boldsymbol{a}) \neq 0$, then we have $\frac{g(\boldsymbol{a})}{h(\boldsymbol{a})}$ is a Pfaffian function from the Pfaffian chain $\cC'(\boldsymbol{a}, \eta_1, \dots, \eta_q,, \frac{1}{h}, \frac{g}{h})$.
\end{fact}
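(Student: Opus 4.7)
The plan is to verify the two requirements for the claim: first, that the extended tuple $\cC'(\boldsymbol{a}, \eta_1, \dots, \eta_q, 1/h, g/h)$ is itself a valid Pfaffian chain (per \autoref{def:pfaffian-chain}), and second, that $g/h$ can be written as a polynomial in the chain's variables and functions (per \autoref{def:pfaffian-function}). The second condition is immediate since $g/h$ appears as a variable of the chain and is therefore a polynomial of degree one in itself. So the bulk of the work will be checking the Pfaffian chain condition for the two new functions appended to the chain.

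For $1/h$, I would use the assumption $h(\boldsymbol{a}) \neq 0$ to differentiate and obtain
\begin{equation*}
\frac{\partial}{\partial a_i}\left(\frac{1}{h}\right) = -\frac{1}{h^2}\frac{\partial h}{\partial a_i} = -\left(\frac{1}{h}\right)^{2} P_{i}(\boldsymbol{a},\eta_1,\dots,\eta_q),
\end{equation*}
where $P_i$ is the polynomial guaranteed by $h$ being Pfaffian on $\cC$. The right-hand side is a polynomial in $\boldsymbol{a},\eta_1,\dots,\eta_q,1/h$, which is exactly what \autoref{def:pfaffian-chain} requires for the function $1/h$ appearing at position $q+1$ in the chain.

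For $g/h$, I would apply the quotient rule and rewrite the result using the functions earlier in the chain:
\begin{equation*}
\frac{\partial}{\partial a_i}\left(\frac{g}{h}\right) = \frac{1}{h}\frac{\partial g}{\partial a_i} - \left(\frac{g}{h}\right)\left(\frac{1}{h}\right)\frac{\partial h}{\partial a_i} = \frac{1}{h}\, Q_i(\boldsymbol{a},\eta_1,\dots,\eta_q) - \left(\frac{g}{h}\right)\left(\frac{1}{h}\right) P_i(\boldsymbol{a},\eta_1,\dots,\eta_q),
\end{equation*}
where $Q_i, P_i$ are the polynomials expressing $\partial g/\partial a_i$ and $\partial h/\partial a_i$ on $\cC$. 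This is manifestly a polynomial in $\boldsymbol{a},\eta_1,\dots,\eta_q,1/h,g/h$, so the Pfaffian chain condition also holds for $g/h$ at position $q+2$.

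There is no real obstacle here; the argument is a direct computation. The only point worth flagging is that writing the derivatives purely in the prescribed variables requires introducing the explicit factor $1/h$ (rather than $1/h^2$) into the expression for $\partial(g/h)/\partial a_i$, which is what forces $1/h$ to be included as a prior entry in $\cC'$. I would also note in passing that, if one only needs $g/h$ as a Pfaffian function, it suffices to append $1/h$ alone, since $g/h = g\cdot(1/h)$ is then already a polynomial in the chain's elements; the statement as given simply records both extensions for later convenience.
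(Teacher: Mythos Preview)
Your proposal is correct and follows the same approach as the paper's proof: compute the partial derivatives of the appended functions via the quotient rule and observe they are polynomials in $\boldsymbol{a},\eta_1,\dots,\eta_q,1/h,g/h$. Your version is in fact more complete than the paper's, which only writes out the derivative of $g/h$ and leaves the $1/h$ step implicit.
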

\proof For any $a_i$, we have $\frac{\partial}{\partial a_i}\left(\frac{g(\boldsymbol{a})}{h(\boldsymbol{a})}\right) = \frac{\partial g(\boldsymbol{a})}{\partial a_i}\frac{1}{h(\boldsymbol{a})} - \frac{g(\boldsymbol{a})}{h(\boldsymbol{a})}\frac{\partial h(\boldsymbol{a})^2}{\partial a_i} $ is a polynomial of $\boldsymbol{a}, \eta_1, \dots, \eta_q, \frac{1}{h}, \frac{g}{h}$. \qed 

\begin{fact}[Composition] \label{lm:composition-pfaffian}
Let $h$ be a Pfaffian function from the Pfaffian chain $\cC(\boldsymbol{a}, \eta_1, \dots, \eta_q)$ ($\boldsymbol{a} \in \bbR^d$), and $g$ be Pfaffian function from the Pfaffain chain $\cC'(b, \eta'_1, \dots, \eta'_{q'})$ ($b \in \bbR$). Then $g(h(\boldsymbol{a}))$  is a Pfaffian function from the Pfaffian chain $\cC''(\boldsymbol{a}, \eta_1, \dots, \eta_q, h, \eta'_1(h), \dots, \eta'_{q'}(h)).$
\end{fact}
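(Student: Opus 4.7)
The plan is a straightforward verification in two stages: first confirm that $\cC''$ is genuinely a Pfaffian chain (i.e.\ the claimed ordering of functions satisfies Definition \ref{def:pfaffian-chain}), and then confirm that $g(h(\boldsymbol{a}))$ is a polynomial in the variables $\boldsymbol{a}$ and the functions of $\cC''$.

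For the first stage, I would process the chain $\cC''$ in order. The initial segment $(\boldsymbol{a}, \eta_1, \dots, \eta_q)$ is a Pfaffian chain by assumption. Next, $h$ is a Pfaffian function over $\cC$, so each $\partial h / \partial a_i$ is already a polynomial in $\boldsymbol{a}, \eta_1, \dots, \eta_q$, which means appending $h$ preserves the Pfaffian chain property. The main work is with the composed tail $\eta'_k(h(\boldsymbol{a}))$ for $k = 1, \dots, q'$: by the chain rule,
\[
\frac{\partial \eta'_k(h(\boldsymbol{a}))}{\partial a_i} \;=\; \frac{\partial \eta'_k}{\partial b}\Big|_{b=h(\boldsymbol{a})} \cdot \frac{\partial h}{\partial a_i}(\boldsymbol{a}).
\]
The first factor is, by the defining property of $\cC'$, a polynomial $P'_{1,k}(b, \eta'_1(b), \dots, \eta'_k(b))$ evaluated at $b = h(\boldsymbol{a})$, and the second factor is a polynomial in $\boldsymbol{a}, \eta_1, \dots, \eta_q$. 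Their product is therefore a polynomial in $\boldsymbol{a}, \eta_1, \dots, \eta_q, h, \eta'_1(h), \dots, \eta'_k(h)$, matching exactly what Definition \ref{def:pfaffian-chain} requires for the position of $\eta'_k(h)$ inside $\cC''$.

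For the second stage, I would just unpack the definition of $g$ as a Pfaffian function: $g(b) = Q(b, \eta'_1(b), \dots, \eta'_{q'}(b))$ for some polynomial $Q$. Substituting $b = h(\boldsymbol{a})$ gives $g(h(\boldsymbol{a})) = Q(h(\boldsymbol{a}), \eta'_1(h(\boldsymbol{a})), \dots, \eta'_{q'}(h(\boldsymbol{a})))$, which is by inspection a polynomial in the functions sitting in the chain $\cC''$ (and in the variables $\boldsymbol{a}$), so Definition \ref{def:pfaffian-function} is satisfied.

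There is no real obstacle here; the only subtlety is bookkeeping. One has to be careful that when writing the Pfaffian relation for $\eta'_k(h)$, only earlier elements of the chain $\cC''$ appear on the right-hand side—this is ensured by (i) placing $h$ before any $\eta'_k(h)$ in $\cC''$, and (ii) preserving the original ordering $\eta'_1, \dots, \eta'_{q'}$ when composing with $h$, so that the polynomial $P'_{1,k}$ from $\cC'$ only references $\eta'_1(h), \dots, \eta'_k(h)$. Once the ordering is correct, both stages reduce to applying the chain rule and substituting, and the lemma follows.
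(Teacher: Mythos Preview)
Your proposal is correct and follows the same chain-rule approach as the paper. If anything, you are more thorough: the paper's proof only writes out the chain-rule computation $\frac{\partial g(h)}{\partial a_i} = \frac{\partial h}{\partial a_i}\cdot \frac{\partial g}{\partial b}\big|_{b=h}$ and notes the second factor is a polynomial in $h, \eta'_1(h), \dots, \eta'_{q'}(h)$, whereas you explicitly separate (and verify) the two logically distinct claims---that $\cC''$ is a genuine Pfaffian chain, and that $g\circ h$ is a polynomial over it.
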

\proof For any $a_i$, we have $\frac{\partial g(h(\boldsymbol{a}))}{\partial a_i} = \frac{\partial h(\boldsymbol{a})}{\partial a_i}\frac{\partial g(h(\boldsymbol{a}))}{\partial h(\boldsymbol{a})}$. Note that $\frac{\partial g(h(\boldsymbol{a}))}{\partial h(\boldsymbol{a})} = P(h(\boldsymbol{a}), \eta'_1(h(a)), \dots, \eta'_{q'}(h(\boldsymbol{a}))$, where $P$ is some polynomial. Therefore, $g(h(\boldsymbol{a}))$ is a Pfaffian function from the Pfaffian chain \\$\cC''(\boldsymbol{a}, \eta_1, \dots, \eta_q, h, \eta'_1(h), \dots, \eta'_{q'}(h)).$ \qed 

\subsection{Background for proof of Theorem \ref{thm:pfaffian-gj-algorithm}} \label{app:pfaffian-gj-algorithm-proof}
{In this section, we will present the proof of the Pfaffian GJ algorithm guarantee (Theorem \ref{thm:pfaffian-gj-algorithm}). To begin with, we first recall some preliminary results about a standard technique for establishing pseudo-dimension upper-bound by analyzing the solution set connected components bound.}

\subsubsection{Preliminaries on the connection between pseudo-dimension and solution set connected components bound}

We first introduce the notion of a \textit{regular value}. Roughly speaking, given a smooth map $F: \bbR^d \rightarrow \bbR^r$, where $r \leq d$, a regular value $\epsilon\in\bbR^r$ is the point in the image space such that the solution of $F(a) = \epsilon$ is well-behaved.

\begin{definition}[Regular value, {\color{black}\citealp{milnor1997topology}}] Consider  $C^{1}$ functions $f_1, \dots, f_r: \bbR^d \rightarrow \bbR$ where $r \leq d$, $\cA \subseteq \bbR^d$, and the (smooth) mapping $F: \cA \rightarrow \bbR^r$ given by $F(\boldsymbol{a}) = (f_1(\boldsymbol{a}), \dots, f_r(\boldsymbol{a}))$. Then $(\epsilon_1, \dots, \epsilon_r) \in \bbR^r$ is a regular value of $F$ if either: (1) $F^{-1}((\epsilon_1, \dots, \epsilon_r)) = \emptyset$, or (2) $F^{-1}((\epsilon_1, \dots, \epsilon_r))$ is a $(d - r)$-dimensional sub-manifold of $\bbR^d$.
\end{definition}

\noindent It is widely known that by Sard's Lemma {\color{black}\citep{milnor1997topology}}, the set of non-regular values of the smooth map $F$ has Lebesgue measure $0$. Based on this definition, we now present the definition of the \textit{solution set connected components bound}. 

\begin{definition}[Solution set connected components bound, {\color{black}\citealp{karpinski1997polynomial}}] \label{def:solution-set-connected-components}
Consider functions $\tau_1, \dots, \tau_K: \mathcal{X} \times \mathcal{A} \rightarrow \mathbb{R}$, where $\mathcal{A} \subseteq \mathbb{R}^d$. Given $\boldsymbol{x}_1, \dots, \boldsymbol{x}_N$, assume that $\tau_i(\boldsymbol{x}_j, \cdot): \mathbb{R}^d \rightarrow \mathbb{R}$ is a $C^1$ function for $i \in [K]$ and $j \in [N]$. For any $F: \mathbb{R}^d \rightarrow \mathbb{R}^r$, where $F(\boldsymbol{a}) = (\Theta_1(\boldsymbol{a}), \dots, \Theta_r(\boldsymbol{a}))$ with $\Theta_i$ chosen from the $NK$ functions $\tau_i(\boldsymbol{x}_j)$, if the number of connected components of $F^{-1}(\epsilon)$, where $\epsilon$ is a regular value, is upper bounded by $B$ independently on $x_i$, $r$, and $\epsilon$, then we say that $B$ is the solution set connected components bound.
\end{definition}

\noindent The solution set connected components bound offers an alternative way of analyzing VC-dimension (or pseudo-dimension) of related parameterized function classes, which is formalized in the following lemma {\color{black}\citealp{karpinski1997polynomial}}.  We include a high-level proof sketch for comprehensiveness.

\begin{lemma}[{\color{black}\citealp{karpinski1997polynomial}}] \label{lm:km-97}
Consider the binary-valued function $\Phi(\boldsymbol{x}, \boldsymbol{a})$, for $\boldsymbol{x} \in \cX$ and $\boldsymbol{a} \in \bbR^d$ constructed using the boolean operators AND and OR, and boolean predicates in one of the two forms $``\tau(\boldsymbol{x}, \boldsymbol{a}) > 0"$ or $``\tau(\boldsymbol{x}, \boldsymbol{a}) = 0"$. Assume that the function $\tau(\boldsymbol{x}, \boldsymbol{a})$ can be one of at most $K$ forms ($\tau_1, \dots, \tau_K$), where $\tau_i(\boldsymbol{x}, \cdot)$ ($i \in [K])$ is a $C^\infty$ function of $\boldsymbol{a}$ for any fixed $x$. Let $\mathscr{C}_{\Phi} = \{\Phi_{\boldsymbol{a}}: \cX \rightarrow \{0, 1\} \mid \boldsymbol{x} \in \bbR^d\}$ where $\Phi_{\boldsymbol{a}} = \Phi(\cdot, \boldsymbol{a})$. Then
\begin{equation*}
    \VCdim(\mathscr{C}_{\Phi}) \leq 2\log_2 B + (16 + 2\log K)d,
\end{equation*}
where $B$ is the solution set connected components bound. 
\end{lemma}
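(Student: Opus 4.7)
The plan is to upper bound $\VCdim(\mathscr{C}_\Phi)$ by counting the number of distinct labelings $(\Phi_{\boldsymbol{a}}(\boldsymbol{x}_1),\dots,\Phi_{\boldsymbol{a}}(\boldsymbol{x}_N))$ achievable as $\boldsymbol{a}$ ranges over $\bbR^d$, for an arbitrary fixed sample $\boldsymbol{x}_1,\dots,\boldsymbol{x}_N \in \cX$, and then invoking the shattering requirement $\text{count} \ge 2^N$ to solve for the largest admissible $N$. Because $\Phi$ is a boolean combination of predicates of the forms $\tau(\boldsymbol{x},\boldsymbol{a})>0$ and $\tau(\boldsymbol{x},\boldsymbol{a})=0$ with $\tau \in \{\tau_1,\dots,\tau_K\}$, the label $\Phi_{\boldsymbol{a}}(\boldsymbol{x}_i)$ is a fixed boolean function of the sign vector $(\operatorname{sgn}\tau_j(\boldsymbol{x}_i,\boldsymbol{a}))_{j\in[K]} \in \{-,0,+\}^K$. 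Hence it suffices to upper bound the number $\Pi$ of distinct sign patterns of the $m := NK$ smooth (in $\boldsymbol{a}$) functions $\{\tau_j(\boldsymbol{x}_i,\cdot)\}_{i\in[N], j\in[K]}$ over $\bbR^d$.

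I would control $\Pi$ via a classical Milnor--Warren style argument, which is precisely where the solution set connected components bound $B$ enters. After a generic arbitrarily small perturbation that avoids the measure-zero set of non-regular values (existence guaranteed by Sard's lemma), the open strict-sign cells of the arrangement of zero-hypersurfaces of the $m$ functions are in bijection with the sign patterns of interest. The number of such cells in $d$-dimensional space is controlled by summing, over all subsets $S \subseteq [m]$ of size $|S| \le d$, the number of connected components of $\bigcap_{i \in S}\{\Theta_i = 0\}$, each of which is bounded by $B$ by the hypothesis of Definition~\ref{def:solution-set-connected-components}. This yields
\begin{equation*}
  \Pi \;\le\; 2B \sum_{k=0}^{d} \binom{m}{k} \;\le\; 2B \left(\frac{e\,NK}{d}\right)^{d},
\end{equation*}
where the last step is the standard bound on partial binomial sums valid once $NK \ge d$.

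Third, shattering requires $\Pi \ge 2^N$, so taking $\log_2$ gives
\begin{equation*}
  N \;\le\; 1 + \log_2 B + d \log_2 K + d \log_2\!\tfrac{eN}{d}.
\end{equation*}
This is a transcendental inequality of the form $N \le a + d\log_2 N$ with $a = 1 + \log_2 B + d\log_2 K + d\log_2(e/d)$; the elementary fact that $x \le a + b\log_2 x$ with $b \ge 1$ implies $x \le 2a + 4b\log_2(2b)$ then unwinds to the claimed bound $\VCdim(\mathscr{C}_\Phi) \le 2\log_2 B + (16 + 2\log_2 K)d$ after absorbing lower-order constants into the $16d$ term.

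The main obstacle will be rigorously handling the equality predicates $\tau(\boldsymbol{x},\boldsymbol{a})=0$ in the second step: these carve out measure-zero manifolds rather than open cells, yet they still contribute to the labeling. I would resolve this by replacing each equality by the conjunction $(\tau > -\delta)\wedge(\tau < \delta)$ for small $\delta>0$, reducing the problem to strict inequalities at the cost of at most doubling $K$ (a harmless constant factor), and then letting $\delta \to 0$. A secondary subtlety is justifying the reduction to subsets of size at most $d$: since the ambient parameter dimension is $d$, generically any system of more than $d$ equations has empty solution set, but this genericity has to be ensured simultaneously with regularity of the values, which is precisely what the perturbation guarantees. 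Once these two technical wrinkles are flattened out, the remainder of the argument is a routine combinatorial calculation.
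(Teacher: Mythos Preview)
Your proposal is correct and follows essentially the same approach as the paper's proof outline: reduce to counting sign patterns of the $NK$ functions $\tau_j(\boldsymbol{x}_i,\cdot)$, invoke Warren's combinatorial bound together with the solution-set connected-components bound $B$ to get a count of order $B(eNK/d)^d$, and then solve the shattering inequality $2^N \le B(eNK/d)^d$ for $N$. The paper's sketch records the intermediate bound as $\sum_{j=0}^d 2^j\binom{NK}{j}B \le B(2NKe/d)^d$ (the extra $2^j$ absorbing the equality predicates directly rather than via your $\delta$-thickening trick), but this is a cosmetic difference that does not change the argument or the final constant.
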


\begin{proofoutline}
    The key idea involved in proving the lemma is a combinatorial argument due to {\color{black}\citet{warren1968lower}} which bounds the number of connected components induced by a collection of boundary functions by $\sum_{j=0}^db_j$, where $b_j$ is the number of connected components induced by intersections of any $j$ functions.
This can be combined with the solution set connected components bound $B$, to get a bound $\sum_{j=0}^d2^j{NK \choose j}B\le B\left(\frac{2NKe}{d}\right)^d$ on the total number of connected components. The result follows from noting $2^N\le B\left(\frac{2NKe}{d}\right)^d$ if the $N$ instances $\boldsymbol{x}_1, \dots, \boldsymbol{x}_N$ are to be shattered.
\end{proofoutline}

\noindent We now recall a result which is especially useful for bounding the solution set connected components bound $B$ for equations related to Pfaffian functions. 

\begin{lemma}[{\color{black}\citealp[page~91]{khovanskiui1991fewnomials}}] \label{lm:kho-91}
Let $\cC$ be a Pfaffian chain of length $q$ and Pfaffian degree $M$, consists of functions $f_1,\dots, f_q$ in $\boldsymbol{a} \in \bbR^d$. Consider a non-singular system of equations $\Theta_1(\boldsymbol{a}) = \dots = \Theta_r(\boldsymbol{a}) = 0$ where $r \leq d$, in which $\Theta_i(\boldsymbol{a})$ ($i \in [r]$) is a polynomial of degree at most $\Delta$ in the variable $\boldsymbol{a}$ and in the Pfaffian functions $f_1, \dots, f_q$. Then the manifold of dimension $k = d - r$ determined by this system has at most $2^{q(q - 1)}\Delta^{\color{\COMMENTCOLOR} d}S^{d - r}[(r - d + 1)S- (r - d)]^q$ connected components, where $S = r(\Delta - 1) + dM + 1$. 
\end{lemma}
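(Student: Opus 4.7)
The plan is to reduce the count of connected components to a count of isolated non-degenerate solutions of an enlarged square system, and then induct on the Pfaffian chain length $q$.

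First, I would convert the manifold-counting question into a Morse-theoretic critical-point count. Let $\cM = \{\boldsymbol{a} \in \cA : \Theta_1(\boldsymbol{a}) = \cdots = \Theta_r(\boldsymbol{a}) = 0\}$; because non-singularity guarantees $\cM$ is a smooth $(d-r)$-submanifold, a generic linear function $\ell \colon \bbR^d \to \bbR$ restricts to a Morse function on (a compact exhaustion of) $\cM$, and each connected component contributes at least one critical point of $\ell|_{\cM}$. By Lagrange multipliers, these critical points form the isolated solutions of the enlarged square system consisting of the $r$ original equations together with the $d-r$ conditions asserting that $\nabla\ell$ lies in the span of $\nabla\Theta_1,\ldots,\nabla\Theta_r$. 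Using $\partial f_j/\partial a_i = P_{i,j}(\boldsymbol{a}, f_1,\ldots,f_j)$ with $\deg P_{i,j}\le M$ (Definition~\ref{def:pfaffian-chain}) and $\deg \Theta_i \le \Delta$, one checks that the relevant $(d-r)\times(d-r)$ minors are polynomials in $\boldsymbol{a}, f_1,\ldots,f_q$ of degree at most $r(\Delta-1)+dM+1 = S$, matching the quantity $S$ in the statement. This step accounts for the factor $S^{d-r}$ in the bound.

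Second, I would induct on $q$. The base case $q=0$ is a square system of $d$ polynomial equations whose degrees are controlled by $\Delta$ and $S$; B\'ezout's theorem then bounds isolated solutions by a product of the degrees, contributing the polynomial-style factor. For the inductive step, I would apply the Rolle--Khovanskii elimination: given a square Pfaffian system in a chain of length $q$, one replaces a chosen equation by an auxiliary ``Wronskian-type'' polynomial constructed from the $\Theta_i$ together with the Pfaffian relation $\partial f_q / \partial a_i = P_{i,q}(\boldsymbol{a},f_1,\ldots,f_q)$, thereby eliminating $f_q$ from one equation at the cost of raising its degree from $\Delta$ to roughly $\Delta + M - 1$ and multiplying the solution count by a bounded factor of order $(r-d+1)S-(r-d)$. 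A generalized Rolle theorem then bounds the number of non-degenerate zeros of the original system by (a bounded constant times) the number for the new system plus an auxiliary system, both of which live in a chain of length $q-1$. Iterating this elimination $q$ times produces the factor $[(r-d+1)S - (r-d)]^q$, while the binary branching at each Rolle step (between the ``Wronskian vanishes'' and ``Wronskian non-vanishing'' cases) accumulates to the prefactor $2^{q(q-1)}$.

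The main obstacle, and the subtle part of the proof, is making the Rolle--Khovanskii elimination rigorous and keeping tight control of degree inflation through all $q$ layers. At every elimination step one must check that the newly formed system remains non-singular, which requires a Sard-type genericity argument applied to a suitable family of perturbations, together with a verification that every non-degenerate solution of the original system is detected by (a bounded number of) non-degenerate solutions of the reduced system. One must also handle non-compactness of $\cM$ in the Morse step, either by a compactification (intersecting with a large ball and controlling boundary behaviour) or by an explicit argument ruling out connected components ``lost to infinity''. Finally, sharpness of the exponents hinges on accounting precisely for how the degree changes under each application of the chain rule and the Pfaffian relation; any looseness is multiplied $q$ times and visibly degrades the exponent in $[(r-d+1)S-(r-d)]^q$, so the careful bookkeeping here is precisely what delivers the stated bound.
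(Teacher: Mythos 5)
The paper does not prove this lemma at all: it is imported verbatim as a black-box citation to Khovanskii's fewnomials monograph \cite{khovanskiui1991fewnomials} (see also the way the companion bound, Theorem~\ref{thm:pfaffian-intersections}, is cited in Appendix~\ref{app:online-learning}). So there is no in-paper argument to compare yours against; the relevant comparison is with Khovanskii's original proof, and your outline does follow that standard route --- reduce the count of connected components to a count of non-degenerate critical points of a generic linear functional via Lagrange multipliers, then eliminate the chain functions one at a time by the Rolle--Khovanskii argument, with B\'ezout handling the base case $q=0$.

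As a sketch it is directionally right, but two pieces of bookkeeping are looser than what the stated constants require. First, your degree count for the Lagrange minors does not obviously produce $S = r(\Delta-1)+dM+1$: each entry $\partial\Theta_i/\partial a_j$ has degree at most $(\Delta-1)+M$ in $(\boldsymbol{a},f_1,\dots,f_q)$ by the chain rule and Definition~\ref{def:pfaffian-chain}, so an $(r+1)\times(r+1)$ minor naively has degree about $r(\Delta-1)+rM$, and recovering the exact expression for $S$ (with $dM$ rather than $rM$, and the $+1$) requires tracking precisely which auxiliary equations Khovanskii adjoins. Second, the prefactor $2^{q(q-1)}$ is the square of the $2^{q(q-1)/2}$ appearing in the non-degenerate-solution bound (Theorem~\ref{thm:pfaffian-intersections}); attributing it to ``binary branching at each Rolle step'' does not explain why the exponent is $q(q-1)$ rather than $q(q-1)/2$ or $q$. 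These are exactly the places where, as you note, any slack propagates multiplicatively through the $q$ elimination layers, so a complete proof would need the precise multiplicity accounting from Khovanskii's text rather than the qualitative description given here. For the purposes of this paper, though, citing the result as the authors do is the intended usage.
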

\noindent The following corollary is the direct consequence of Lemma \ref{lm:kho-91}. 

\begin{corollary} \label{cor:kho-91}
Consider the setting as in Lemma \ref{lm:km-97}. Assume that for any fixed $\boldsymbol{x}$, $\tau_i(\boldsymbol{x}, \cdot)$ ($i \in [K]$) is a Pfaffian function {\color{\COMMENTCOLOR}of degree at most $\Delta$} from a Pfaffian chain $\cC$ with length $q$ and Pfaffian degree $M$. Then $$B \leq 2^{dq(dq - 1)/2}\Delta^{d}[(d^2(\Delta + M)]^{dq}.$$
\end{corollary}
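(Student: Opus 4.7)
The plan is to reduce the corollary directly to Lemma \ref{lm:kho-91} via a chain-concatenation argument. Recalling Definition \ref{def:solution-set-connected-components}, we need to upper bound the number of connected components of $F^{-1}(\epsilon)$ where $F = (\Theta_1, \dots, \Theta_r)$ with $r \leq d$, and each $\Theta_i$ is one of the $NK$ functions $\tau_j(\boldsymbol{x}_k, \cdot)$. Since $r \leq d$, at most $d$ distinct problem instances $\boldsymbol{x}_{k_1}, \dots, \boldsymbol{x}_{k_d}$ can contribute to $\Theta_1, \dots, \Theta_r$. By hypothesis, for each such instance $\boldsymbol{x}_{k_\ell}$ there is a Pfaffian chain $\cC_{\boldsymbol{x}_{k_\ell}}$ of length $q$ and Pfaffian degree $M$ on which all of $\tau_1(\boldsymbol{x}_{k_\ell},\cdot),\dots,\tau_K(\boldsymbol{x}_{k_\ell},\cdot)$ are Pfaffian functions of degree at most $\Delta$.

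The first step is therefore to concatenate these (at most $d$) chains $\cC_{\boldsymbol{x}_{k_1}}, \dots, \cC_{\boldsymbol{x}_{k_d}}$ into a single Pfaffian chain $\widetilde{\cC}$ of length at most $dq$ and Pfaffian degree still at most $M$. This is valid by the basic closure properties of Pfaffian chains discussed in Appendix \ref{sec:paffian-basic-properties}: the partial derivatives of any function appearing in one of the component chains can be expressed as a polynomial in the variables and the chain functions, and this relation is preserved when additional independent chain functions are appended. After concatenation, every $\Theta_i$ remains a polynomial of degree at most $\Delta$ in the $d$ variables $\boldsymbol{a}$ and the functions of $\widetilde{\cC}$.

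With this common chain in hand, the second step is to invoke Lemma \ref{lm:kho-91} directly, applied with chain length $dq$, Pfaffian degree $M$, polynomial degree $\Delta$, and a system of $r \leq d$ equations in $d$ variables. This yields a bound of $2^{dq(dq-1)/2}\Delta^{d} S^{d-r}[(r-d+1)S - (r-d)]^{dq}$ on the number of connected components, where $S = r(\Delta - 1) + dM + 1$. The final step is to bound these factors coarsely: using $r \leq d$ and $r \geq 0$, we get $S \leq d(\Delta + M)$, and $S^{d-r}[(r-d+1)S - (r-d)]^{dq} \leq [d^{2}(\Delta+M)]^{dq}$, which after substitution yields exactly the claimed bound $B \leq 2^{dq(dq-1)/2}\Delta^{d}[d^{2}(\Delta+M)]^{dq}$. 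The only delicate point is verifying that the concatenation does not inflate the Pfaffian degree or the degree of the $\Theta_i$, but this is a routine bookkeeping check based on Appendix \ref{sec:paffian-basic-properties}; no further technical obstacle is expected.
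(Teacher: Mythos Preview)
Your proposal is correct and follows essentially the same approach as the paper. The paper simply declares the corollary a ``direct consequence of Lemma \ref{lm:kho-91}'' without spelling out the details; your write-up correctly identifies the key step (since $r\le d$, at most $d$ distinct instances contribute, so one concatenates their per-instance chains into a single chain of length at most $dq$) and then applies Lemma \ref{lm:kho-91}, which is exactly what the paper intends and is also what the proof outline of Theorem \ref{thm:pfaffian-gj-algorithm} makes explicit.
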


\noindent The following lemma gives a connection between the number of sign patterns and number of connected components.
\begin{lemma}[Section 1.8, {\color{black}\citealp{warren1968lower}}] \label{lm:sign-pattern-cc}
        Given $N$ real-valued functions $h_1, \dots, h_N$, the number of distinct sign patterns $\abs{\{(\sign(h_1(\boldsymbol{a})), \dots, \sign(h_N(\boldsymbol{a}))) \mid \boldsymbol{a} \in \bbR^d \}}$ is upper-bounded by the number of connected components $\bbR^d -\cup_{i \in [N]}\{\boldsymbol{a} \in \bbR^d \mid g_i(\boldsymbol{a}) = 0\}$.
\end{lemma}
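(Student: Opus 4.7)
The plan is to establish this via a straightforward continuity/connectedness argument, which is the classical Warren-style reasoning behind the statement. The key observation is that on the complement $U := \bbR^d \setminus \bigcup_{i \in [N]} \{h_i = 0\}$, every $h_i$ is nonzero, so $\sign(h_i)$ takes values only in $\{-1, +1\}$, and the sign vector is constant on each connected component of $U$. This immediately bounds the number of realizable (strict) sign patterns by the number of components.

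More concretely, I would first note that the sets $\{h_i > 0\}$ and $\{h_i < 0\}$ are open in $\bbR^d$ (by continuity of $h_i$, which is implicit in the setting where we want to meaningfully talk about connected components cut out by the hypersurfaces $\{h_i = 0\}$). Their restrictions to $U$ give open sets $U \cap \{h_i > 0\}$ and $U \cap \{h_i < 0\}$ that partition $U$, so any connected component $C$ of $U$ is contained entirely in one of them. Hence $\sign(h_i)$ is constant on $C$ for every $i$, and so the full sign vector $\boldsymbol{a} \mapsto (\sign(h_1(\boldsymbol{a})), \ldots, \sign(h_N(\boldsymbol{a})))$ is constant on $C$.

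Next, I would define the map $\Phi$ sending each connected component of $U$ to its constant sign pattern in $\{-1, +1\}^N$. By construction, the image of $\Phi$ contains every sign pattern realized by some point in $U$, so the number of such patterns is at most the number of connected components of $U$, which is exactly the quantity appearing on the right-hand side of the lemma (interpreting the displayed $g_i$ as $h_i$, which is evidently a typo in the statement). Since any $\boldsymbol{a} \in \bbR^d$ with all $h_i(\boldsymbol{a}) \neq 0$ lies in $U$, every strict sign pattern is counted this way.

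No serious obstacle is anticipated; the only subtlety is that the lemma implicitly refers to strict sign patterns in $\{-1, +1\}^N$, i.e., those not containing a zero entry, which is the standard convention in Warren's framework (and is the convention needed for the downstream application in the proof of Theorem \ref{thm:pfaffian-gj-algorithm}, where the sign patterns being counted correspond to strict inequalities $L_{\boldsymbol{a}}(\boldsymbol{x}_i) - r_i > 0$). Sign patterns with zero coordinates arise only on the measure-zero set $\bigcup_i \{h_i = 0\}$ and can be obtained as limits of strict patterns from adjacent components, so they do not require separate accounting in the intended interpretation.
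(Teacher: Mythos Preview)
Your proposal is correct and is precisely the classical Warren-style argument. The paper does not actually prove this lemma; it simply cites it as a known result from \cite{warren1968lower}, so there is no discrepancy in approach to discuss.
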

\noindent The following result is about the relation between the connected components and the solution set connected components. 

\begin{lemma}[Lemma 7.9, {\color{black}\citealp{Anthony1999neural}}] \label{lm:cc-solution-set-cc}

Let $\{f_1, \dots, f_N\}$ be a set of differentiable functions that map from $\bbR^d \rightarrow \bbR$, with regular zero-set intersections. For each $i$, define $Z_i$ the zero-set of $f_i$: $Z_i = \{\boldsymbol{a} \in \bbR^d: f_i(\boldsymbol{a}) = 0\}$. Then
\begin{equation*}
    CC\left(\bbR^d - \bigcup_{i \in [N]}Z_i \right) \leq \sum_{S \subseteq [N]}CC\left(\bigcap_{i \in S}Z_i\right).
\end{equation*}
Combining with Definition \ref{def:solution-set-connected-components},  the RHS becomes $B \sum_{i  =0}^d{N \choose i} \leq B\left(\frac{eN}{d}\right)^d$ for $N \geq d$.
\end{lemma}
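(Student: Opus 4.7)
The plan is to prove the lemma by induction on $N$, with the essential geometric content being concentrated in a single sub-lemma about how adding one smooth hypersurface modifies the cell structure of an arrangement. The base case $N=0$ is immediate: $\bbR^d - \emptyset = \bbR^d$ has a single connected component, and the RHS sum consists only of the $S = \emptyset$ term $CC(\bigcap_{i \in \emptyset} Z_i) = CC(\bbR^d) = 1$.

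For the inductive step, I would first establish the following key topological claim: if $C$ is a connected open subset of $\bbR^d$ and $Z$ is a smooth hypersurface meeting $C$ transversally, then $C \setminus Z$ has at most $1 + CC(Z \cap C)$ connected components, so removing $Z$ increases the number of components by at most $CC(Z \cap C)$. Summing over all connected components $C$ of $\bbR^d \setminus \bigcup_{i < N} Z_i$, we obtain
\begin{equation*}
CC\!\left(\bbR^d - \bigcup_{i \le N} Z_i\right) \;\le\; CC\!\left(\bbR^d - \bigcup_{i < N} Z_i\right) + CC\!\left(Z_N - \bigcup_{i < N} Z_i\right).
\end{equation*}
The regular zero-set intersection assumption is what makes this transversality hold uniformly, and is the main obstacle: one must argue that along $Z_N$, the remaining $f_1, \dots, f_{N-1}$ still satisfy the regularity hypothesis, so the lemma applies recursively to $Z_N$ (viewed as a $(d-1)$-dimensional manifold) together with the $N-1$ sub-hypersurfaces $\{Z_i \cap Z_N\}_{i < N}$.

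Applying the inductive hypothesis to both terms then yields the additive decomposition: the first term is bounded by $\sum_{S \subseteq [N-1]} CC(\bigcap_{i \in S} Z_i)$, and the second term, after using the lemma on $Z_N$, is bounded by $\sum_{S \subseteq [N-1]} CC(\bigcap_{i \in S \cup \{N\}} Z_i)$. Partitioning subsets of $[N]$ according to whether they contain $N$ gives precisely $\sum_{S \subseteq [N]} CC(\bigcap_{i \in S} Z_i)$, completing the induction.

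For the final numerical bound, I would invoke the solution set connected components bound from Definition \ref{def:solution-set-connected-components}: each term $CC(\bigcap_{i \in S} Z_i)$ with $|S| \le d$ is bounded by $B$, while for $|S| > d$ the regularity assumption forces the intersection to be empty (codimensions exceed the ambient dimension). Hence the sum collapses to $\sum_{i=0}^d \binom{N}{i} B$, and the classical estimate $\sum_{i=0}^d \binom{N}{i} \le (eN/d)^d$ for $N \ge d$ gives the stated bound $B(eN/d)^d$. The routine bookkeeping in the inductive unwinding is easy; the only nontrivial piece is the transversality/regularity propagation when restricting to $Z_N$, which I expect to be the principal technical obstacle.
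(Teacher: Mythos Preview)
The paper does not supply its own proof of this lemma; it is simply quoted as Lemma~7.9 of \cite{Anthony1999neural} and used as a black box. Your inductive argument is the standard proof (and is essentially the one given in the cited reference): peel off $Z_N$, use the transversality/regularity hypothesis to bound the increase in components by $CC(Z_N \setminus \bigcup_{i<N} Z_i)$, and recurse on the $(d-1)$-manifold $Z_N$. The identification of the regularity-propagation step as the only nontrivial point is accurate, and the final counting via $\sum_{i=0}^d\binom{N}{i}\le (eN/d)^d$ is exactly how the paper applies the lemma downstream.
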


\subsubsection{Pfaffian formulae}
We now introduce the building block of the Pfaffian GJ framework, named \textit{Pfaffian formula}. Roughly speaking, a Pfaffian formula is a boolean formula that incorporates Pfaffian functions. 

\begin{definition}[Pfaffian formulae]\label{def:pfaffian-formula}
    A Pfaffian formulae $f: \bbR^d \rightarrow \{True, False\}$ is a disjunctive normal form (DNF) formula over boolean predicates of the form $g(\boldsymbol{a}, \eta_1, \dots, \eta_q) \geq 0$, where $g$ is a Pfaffian function of a Pfaffian chain $\cC(\boldsymbol{a}, \eta_1, \dots, \eta_q)$. 
\end{definition}

\noindent The following lemma essentially claims that for any function class, if its computation can be described by a Pfaffian formula and the corresponding Pfaffian chain exhibits bounded complexities, then the function class also possesses bounded pseudo-dimension.

\begin{lemma} \label{lm:pfaffian-algorithm}
    Suppose that each algorithm $L \in \cL$ is parameterized by $\boldsymbol{a} \in \bbR^d$. Suppose that for every $\boldsymbol{x} \in \cX$ and $r \in \bbR$, there is a Pfaffian formula $f_{\boldsymbol{x}, r}$ of a Pfaffian chain $\cC$ with length $q$ and Pfaffian degree $M$, that given $L \in \cL$, check whether $L(\boldsymbol{x}) > r$. Suppose that $f_{\boldsymbol{x}, r}$ has at most $K$ distinct Pfaffian functions in its predicates, each of degree at most $\Delta$. Then, 
    $$\Pdim(\cL) \le d^2q^2+2dq\log(\Delta + M)+4dq\log d +2d\log \Delta K + 16d.$$
\end{lemma}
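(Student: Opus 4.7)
The plan is to bound $\Pdim(\cL)$ by counting the number of distinct sign patterns induced by the Pfaffian predicates that appear in the witnessing Pfaffian formulas, and then invoking the Khovanskii connected-components bound together with a Warren-type combinatorial argument. Suppose $N$ inputs $\boldsymbol{x}_1,\dots,\boldsymbol{x}_N$ with thresholds $r_1,\dots,r_N$ are shattered by $\cL$. For each $i$, the hypothesis provides a DNF formula $f_{\boldsymbol{x}_i,r_i}$ whose truth value on parameter $\boldsymbol{a}$ decides $L_{\boldsymbol{a}}(\boldsymbol{x}_i) > r_i$, built from at most $K$ predicates of the form $\tau_{i,k}(\boldsymbol{a}) \geq 0$, where each $\tau_{i,k}$ is a Pfaffian function of degree at most $\Delta$ drawn from a chain $\cC_i$ of length at most $q$ and Pfaffian degree at most $M$. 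Because each $f_{\boldsymbol{x}_i,r_i}(\boldsymbol{a})$ depends on $\boldsymbol{a}$ only through the signs of these $NK$ functions, the number of distinct labelings it is possible to realize is bounded by the number of sign patterns of $\{\tau_{i,k}\}_{i\in[N],\,k\in[K]}$.

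Next I would apply Lemma \ref{lm:sign-pattern-cc} to bound the number of sign patterns by the number of connected components of $\bbR^d \setminus \bigcup_{i,k} Z_{i,k}$, where $Z_{i,k} = \{\boldsymbol{a} : \tau_{i,k}(\boldsymbol{a}) = 0\}$, and then use Lemma \ref{lm:cc-solution-set-cc} (Warren's inequality, in the form already recorded in the paper) to upper bound this by $B \cdot (eNK/d)^d$ for $N \geq d$, where $B$ is a solution set connected components bound on intersections of at most $d$ of the hypersurfaces $Z_{i,k}$. This isolates the task of controlling $B$ via Pfaffian geometry.

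To bound $B$, I appeal to Khovanskii's theorem (Lemma \ref{lm:kho-91}). The subtlety is that any $d$ selected predicates $\tau_{i,k}$ may a priori come from $d$ different chains $\cC_{i_1},\dots,\cC_{i_d}$. I would first show, using the elementary fact that concatenating Pfaffian chains yields a Pfaffian chain (each $\eta'_j$ has derivative polynomial in $\boldsymbol{a},\eta'_1,\dots,\eta'_j$, which is still polynomial after prepending the functions of another chain), that the concatenation $\cC_{i_1}\cup\dots\cup\cC_{i_d}$ is a single Pfaffian chain of length at most $dq$ and Pfaffian degree at most $M$ in which all $d$ chosen $\tau_{i,k}$'s are Pfaffian of degree at most $\Delta$. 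Applying Lemma \ref{lm:kho-91} with $r=d$ then yields
\[
B \;\le\; 2^{dq(dq-1)/2}\,\Delta^{d}\bigl[d(\Delta+M)\bigr]^{dq},
\]
using the crude estimate $S = d(\Delta-1)+dM+1 \leq d(\Delta+M)$.

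Finally, combining the two bounds, any shattered set satisfies $2^N \leq B\,(eNK/d)^d$. Taking logarithms and substituting the estimate for $\log B$ gives
\[
N \;\leq\; \tfrac{1}{2}d^2q^2 + d\log \Delta + dq\log d + dq\log(\Delta+M) + d + d\log(NK/d).
\]
The remaining step is a routine inversion of the transcendental inequality $N \leq A + d\log N$: using the standard bound $N \leq 2A + O(d\log d)$ and absorbing the extra $\log d$ terms into the $4dq\log d$ and $16d$ slack, one lands exactly on the claimed inequality $\Pdim(\cL) \leq d^2q^2 + 2dq\log(\Delta+M) + 4dq\log d + 2d\log(\Delta K) + 16d$. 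The only genuinely nontrivial part of this plan is the chain-combination step that permits a single application of Khovanskii across predicates drawn from heterogeneous chains; everything else is either a direct appeal to cited tools or elementary manipulation of logarithms.
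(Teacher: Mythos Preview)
Your proposal is correct and follows essentially the same approach as the paper: the paper's proof is literally the one-liner ``direct consequence of Lemma~\ref{lm:km-97} and Corollary~\ref{cor:kho-91}'', and what you have written is precisely an unpacking of those two results --- Warren's combinatorial argument to reduce to a solution-set connected-components bound $B$, the chain-concatenation step to place any $d$ selected predicates on a single chain of length $dq$, and then Khovanskii's theorem for $B$. The only cosmetic difference is that the paper invokes the Karpinski--Macintyre formula $\VCdim\le 2\log_2 B+(16+2\log K)d$ directly, whereas you redo the inversion of $2^N\le B(eNK/d)^d$ by hand; your intermediate bound $B\le 2^{dq(dq-1)/2}\Delta^d[d(\Delta+M)]^{dq}$ is actually slightly tighter than the paper's Corollary~\ref{cor:kho-91} (which has $d^2$ in place of $d$), which is why you end up with slack to absorb into the $4dq\log d$ term.
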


\proof This lemma is a direct consequence of Lemma \ref{lm:km-97}, and Corollary \ref{cor:kho-91}. 
\qed 

\section{Additional details and omitted proofs for Section \ref{sec:pfaffian-piecewise-structure}}
In this section, we will present an additional comparison between our proposed framework and the general piecewise structure framework by \citet{balcan2021much}, as well as a detailed proof for Lemma \ref{lm:approximation-peice-wise-struture}.
\subsection{A detailed comparison between the piece-wise structure  by \citet{balcan2021much} and our refined Pfaffian piece-wise structure} \label{apx:detailed-comparison-stoc21-ours}
In this section, we will do a detailed  comparison between our refined Pfaffian piece-wise structure, and the piece-wise structure proposed by \citet{balcan2021much}. 
First, we derive concrete pseudo-dimension bounds implied by \citet{balcan2021much} when the refined Pfaffian piece-wise structure is present. We note that this implication is not immediate and needs a careful argument to bound the learning-theoretic complexity of Pfaffian piece and boundary functions which appear in their bounds.

\begin{theorem} \label{thm-apx:using-stoc}
    Consider the utility function class $\cU = \{u_{\boldsymbol{a}}: \cX \rightarrow \bbR \mid \boldsymbol{a} \in \cA\}$, where $\cA \subseteq \bbR^d$. Suppose that $\cU^*$ admits $(k_\cF, k_\cG, q, M, \Delta, d)$-Pfaffian piece-wise structure. Then using Theorem \ref{thm:stoc-21-main} by {\color{black}\citet{balcan2021much},} we have
    $$\Pdim(\cU) = \cO((d^2q^2 + dq\log(\Delta + M) + dq\log d + d) \cdot \log[(d^2q^2 + dq\log(\Delta + M) + dq\log d + d)k_\cG]).$$
\end{theorem}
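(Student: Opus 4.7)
The plan is to invoke Theorem \ref{thm:stoc-21-main} of Balcan et al.\ \cite{balcan2021much}, which reduces the claim to separately bounding $\Pdim(\cF^*)$ and $\VCdim(\cG^*)$, where $\cF$ and $\cG$ are the collections of piece and boundary functions arising from the $(k_\cF, k_\cG, q, M, \Delta, d)$-Pfaffian piece-wise structure of $\cU^*$. The Pfaffian hypothesis equips each $f \in \cF$ and each $g \in \cG$ with analytic structure that the general statement of Theorem \ref{thm:stoc-21-main} does not itself exploit; I will convert that structure into concrete dual-complexity bounds via the Pfaffian GJ framework (Theorem \ref{thm:pfaffian-gj-algorithm}).

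First, I will set up the dual classes carefully. The collections $\cF = \{f_{h,\mathbf{b}}\}$ and $\cG = \{\mathbb{I}(g_h^{(i)} \geq 0)\}$ consist of functions on $\cA \subseteq \bbR^d$. By hypothesis, each such function is a Pfaffian function in $\boldsymbol{a} \in \cA$ of degree at most $\Delta$ over some Pfaffian chain of length at most $q$ and Pfaffian degree at most $M$. Different elements may live on different chains (indexed by $h$), but this is permitted by Theorem \ref{thm:pfaffian-gj-algorithm}, which allows the chain $\cC_{\boldsymbol{x},r}$ to depend on the input $\boldsymbol{x}$ and on the threshold $r$.

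Second, for each $f \in \cF$ viewed as an input, and each threshold $r \in \bbR$, the comparison $f(\boldsymbol{a}) > r$ is decided by a one-line Pfaffian GJ algorithm that issues a single conditional on the degree-$\Delta$ Pfaffian function $f(\boldsymbol{a}) - r$. Setting $K = 1$ in Theorem \ref{thm:pfaffian-gj-algorithm} yields
\begin{equation*}
\Pdim(\cF^*) = \cO\bigl(d^2 q^2 + dq \log(\Delta + M) + dq \log d + d\bigr),
\end{equation*}
and the same derivation applied to the predicate $g(\boldsymbol{a}) \geq 0$ gives the identical upper bound for $\VCdim(\cG^*)$ (using that VC-dimension is at most pseudo-dimension for $\{0,1\}$-valued classes). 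Writing $C$ for this common $\cO$ expression, substituting into Theorem \ref{thm:stoc-21-main} gives
\begin{equation*}
\Pdim(\cU) = \cO\bigl(C \log C + C \log k_\cG\bigr) = \cO\bigl(C \log (C k_\cG)\bigr),
\end{equation*}
which is exactly the claimed bound.

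The proof itself is short; its main subtlety is conceptual rather than technical. One must carefully assign the roles of ``parameter'' and ``input'' when instantiating the Pfaffian GJ framework to the dual classes: $\boldsymbol{a} \in \cA$ plays the role of the parameter (so the ambient dimension entering Theorem \ref{thm:pfaffian-gj-algorithm} is $d$), while each element of $\cF$ or $\cG$ plays the role of an input. This alignment is what makes the per-function chain complexity $(q, M, \Delta)$ enter the bound directly, without any inflation in the number of inputs. The final comparison with Theorem \ref{thm:piece-wise-pfaffian} then makes the advertised improvement quantitative: the direct route via the Pfaffian GJ framework saves a multiplicative $\log(C k_\cG)$ factor relative to the indirect route through Theorem \ref{thm:stoc-21-main}.
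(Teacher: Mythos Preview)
Your proposal is correct and follows essentially the same route as the paper: bound $\Pdim(\cF^*)$ and $\VCdim(\cG^*)$ using the Pfaffian structure, then plug into Theorem~\ref{thm:stoc-21-main}. The only cosmetic difference is that you invoke Theorem~\ref{thm:pfaffian-gj-algorithm} as a black box (with $K=1$) to obtain the dual-complexity bounds, whereas the paper re-derives the identical bounds directly from Khovanskii's connected-components estimate (Lemma~\ref{lm:kho-91}, Corollary~\ref{cor:kho-91}) and the Warren-type counting argument (Lemmas~\ref{lm:sign-pattern-cc}, \ref{lm:cc-solution-set-cc}); since Theorem~\ref{thm:pfaffian-gj-algorithm} is itself proved from exactly those ingredients, the two derivations are equivalent.
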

\proof

 Consider a utility function class $\cU = \{u_{\boldsymbol{a}}: \cX \rightarrow \bbR \mid \boldsymbol{a} \in \cA\}$, where $\cA \subseteq \bbR^d$, we assume that $\cU^*$ admits $(k_\cF, k_\cG, q, M, \Delta, d)$-Pfaffian piece-wise structure. Then, $\cU^*$ also admits $(\cF, \cG, k_\cG)$ piece-wise structure following  Definition \ref{def:piecewise-structure}. Here, $\cH$ is the set of Pfaffian functions of degree $\Delta$ from a Pfaffian chain of length $q$ and Pfaffian degree $M$, and $\cG$ is the set of threshold functions which are also Pfaffian functions of degree $\Delta$ from the same  Pfaffian chain of length $q$ and Pfaffian degree $M$. 

The first challenge in using the framework of Balcan et al. is that it only reduces the problem of bounding the learning-theoretic complexity of the piecewise-structured utility function to that of bounding the complexity of the piece and boundary functions involved, which is non-trivial in the case of Pfaffian functions. That is, we still need to bound $\Pdim(\cF^*)$ and $\VCdim(\cG^*)$, where $\cF^*$ and $\cG^*$ are dual function classes of $\cF$ and $\cG$, respectively. To bound $\VCdim(\cG^*)$, we first consider the set of $N$ input instances $S = \{g_1, \dots, g_N\} \subset \cG$, and bound the number of distinct sign patterns,
\begin{equation*}
    \Gamma_{S}(N) = |\{(g^*_{\boldsymbol{a}}(g_1), \dots, g^*_{\boldsymbol{a}}(g_N)) \mid \boldsymbol{a} \in \cA\}| = |\{(g_1(\boldsymbol{a}), \dots, g_N(\boldsymbol{a})) \mid \boldsymbol{a} \in \cA\}|.
\end{equation*}

\noindent From Lemma \ref{lm:cc-solution-set-cc} and \ref{lm:sign-pattern-cc}, and \ref{lm:kho-91}, we can bound $\Gamma_{S}(N)$ as
\begin{equation*}
     \Gamma_{S}(N) = \cO(2^{dq(dq - 1)/2}\Delta^{d}[(d^2(\Delta + M)])^{dq}\left(\frac{eN}{d}\right)^d.
\end{equation*}
Solving the inequality $2^N =  \cO(2^{dq(dq - 1)/2}\Delta^{d}[(d^2(\Delta + M)]^{dq}\left(\frac{ek}{d}\right)^d)$, we have $\VCdim(\cG^*) \leq \frac{dq(dq - 1)}{2} + d \log \Delta + dq\log (d^2(\Delta + M))$.

We now bound $\Pdim(\cF^*)$. By definition, we have $\cF^* = \{f^*_{\boldsymbol{a}}: \cF \rightarrow \bbR \mid \boldsymbol{a} \in \cA \}$. Again, to bound $\Pdim(\cF^*)$, we first consider the set $S = \{f_1, \dots, f_N\} \subset \cF$ and a set of thresholds $T = \{r_1, \dots, r_N\}$, and we want to bound the number of distinct sign patterns
\begin{equation*}
    \begin{aligned}
        \Gamma_{S, T}(N) &= |\{(\sign(f^*_{\boldsymbol{a}}(f_1) - r_1), \dots, \sign(f^*_{\boldsymbol{a}}(f_N) - r_N)) \mid \boldsymbol{a} \in \cA \}| \\
        &= |\{(\sign(f_1(\boldsymbol{a}) - r_1), \dots, \sign(f_N(\boldsymbol{a}) - r_N)) \mid \boldsymbol{a} \in \cA\}|.
    \end{aligned}
\end{equation*}
Using similar argument as for $\cG^*$, we have $\Pdim(\cF^*) = \cO(d^2q^2 + dq\log(\Delta + M) + dq\log d + d)$. Combining with Theorem 3.3 by {\color{black}\citet{balcan2021much}}, we conclude that $$\Pdim(\cU) = \cO((d^2q^2 + dq\log(\Delta + M) + dq\log d + d) \log[(d^2q^2 + dq\log(\Delta + M) + dq\log d + d)k_\cG]).$$
\qed 
\begin{remark}
    Compared to our bounds in Theorem \ref{thm:piecewise-pfaffian}, Theorem \ref{thm-apx:using-stoc} (implied by \cite{balcan2021much}) has two notable differences. First, our bound is sharper by a logarithmic factor $\cO(\log(d^2q^2 + dq\log(\Delta + M) + dq\log d + d))$, which is a consequence of using a sharper form of the Sauer-Shelah lemma. Second, we have a dependence on a logarithmic term $k_\cF$ which is asymptotically dominated by the other terms, and corresponds to better multiplicative constants than Theorem \ref{thm-apx:using-stoc}.
\end{remark}

\subsection{Omitted proofs for Section \ref{sec:analyzing-via-approximation}} \label{apx:analyzing-via-approximation}

\begin{customlemma}{\ref{lm:approximation-peice-wise-struture} (restated)}
    Consider a function class $\cV = \{v_{\boldsymbol{a}}: \cX \rightarrow \bbR \mid {\boldsymbol{a}} \in \cA\}$ where $\cA \subseteq \bbR^d$. Assume there is a partition $\cP = \{\cA_1, \dots, \cA_n\}$ of the parameter space $\cA$ such that for any problem instance $\boldsymbol{x} \in \cX$, the dual utility function $u^*_x$ is a Pfaffian function of degree at most $\Delta$ in region $\cA_i$ from a Pfaffian chain $\cC_{\cA_i}$ of length at most $q$ and Pfaffian degree $M$. Then the pseudo-dimension of $\cV$ is upper-bounded as follows
    \begin{equation*}
        \Pdim(\cV) = O(q^2d^2 + qd\log(\Delta + M) + qd\log d + \log n).
    \end{equation*}
\end{customlemma}
\proof Consider $N$ problem instances $\boldsymbol{x}_1, \dots, \boldsymbol{x}_N$ with $N$ corresponding thresholds $\tau_1, \dots, \tau_N$, we first want to bound the number of distinct sign patterns 
$\Pi_\cV(N) = |\{ (\sign(v_{\boldsymbol{a}}(\boldsymbol{x}_1) - \tau_1), \dots, \sign(v_{\boldsymbol{a}}(\boldsymbol{x}_N) - \tau_N)) \mid {\boldsymbol{a}} \in \cA\}|$.

Denote $\Pi^{\cA_i}_\cV(N) = |\{ (\sign(v_{\boldsymbol{a}}(\boldsymbol{x}_1) - \tau_1), \dots, \sign(v_{\boldsymbol{a}}(\boldsymbol{x}_N) - \tau_N)) \mid {\boldsymbol{a}} \in \cA_i\}|$, we have $\Pi_\cV(N) \leq \sum_{i = 1}^n\Pi^{\cA_i}_\cV(N)$. For any $i \in [n]$, from the assumptions, Lemma \ref{thm:piecewise-pfaffian} and using Sauer's Lemma, we have
\begin{equation*}
    \Pi_\cV(N) \leq \left(\frac{e N}{S}\right)^S,
\end{equation*}
where $S = C(q^2d^2 + qd \log(\Delta + M) + qd\log d)$ for some constant $C$. Therefore $\Pi_\cV(N) \leq n \left(\frac{e N}{S}\right)^S$. Solving the inequality $2^N \leq \Pi_\cV(N)$, we conclude that $\Pdim(\cV) = O(q^2d^2 + qd\log(\Delta + M) + qd\log d + \log n)$ as expected. 
\qed 

\section{Additional background and omitted proofs for Section \ref{sec:applications}}
In this section, we will present the deferred proofs as well as additional for various applications in Section \ref{sec:applications}. 
\subsection{Omitted proofs for Section \ref{sec:linkage}} \label{appx:linkage-proofs}

\begin{customthm}{\ref{thm:pdim-clustering-2}}
Let $\cH_2$ be a class of functions
    $$\cH_2=\{u^{\alpha,\beta}_2 \colon (S, \boldsymbol{\delta})\mapsto u(A^{\alpha,\beta}_2(S, \boldsymbol{\delta}))  \mid \alpha \in \bbR\cup \{-\infty, +\infty\}, \beta \in \Delta([L])\}$$
mapping clustering instances $(S, \boldsymbol{\delta})$ to $[0,1]$ by using merge functions from class $\cM_2$ and an arbitrary merge function. Then $\Pdim(\cH_2)=\cO(n^4L^2)$. 
\end{customthm}
\begin{proof}

    Fix the clustering instance $(S, \boldsymbol{\delta})$. Suppose $A,B\subseteq S$ and $A',B'\subseteq S$ are two candidate clusters at some merge step of the algorithm. Then $A,B$ is preferred for merging over $A',B'$ iff
    $$\left(\frac{1}{|A||B|}\sum_{a\in A,b\in B}(\delta_\beta(a,b))^\alpha\right)^{1/\alpha}\le \left(\frac{1}{|A'||B'|}\sum_{a\in A',b\in B'}(\delta_\beta(a,b))^\alpha\right)^{1/\alpha},$$
    or equivalently,
    $$\frac{1}{|A||B|}\sum_{a\in A,b\in B}(\delta_\beta(a,b))^\alpha\le \frac{1}{|A'||B'|}\sum_{a\in A',b\in B'}(\delta_\beta(a,b))^\alpha.$$
    For distinct choices of the point sets $A,B,A',B'$, we get at most ${2^n\choose 2}^2\le 2^{4n}$ distinct boundary conditions across which the merge decision at any step of the algorithm may change.

    We next show that the boundary functions constitute a Pfaffian system in $\alpha,\beta_1,\dots,\beta_L$ and bound its complexity. For each pair of points $a,b\in S$, define $f_{a,b}(\alpha,\beta):=\frac{1}{\delta_\beta(a,b)}$, $g_{a,b}(\alpha,\beta) := \ln{\delta_\beta(a,b)}$ and $h_{a,b}(\alpha,\beta) := {\delta_\beta(a,b)}^{\alpha}$. Similar to the proof of Theorem \ref{thm:pdim-clustering-1}, these functions form a Pfaffian chain $\cC(\alpha, \beta, f_{a, b}, g_{a, b}, h_{a, b})$ for $a, b \in S$. We can see that $\cC$ is of length $q = 3n^2$ and Pfaffian degree $M = 2$. The boundary conditions can all be written in terms of the functions $\{h_{a,b}\}_{a,b\in S}$, meaning $k_\cG = n^8$ and degree $\Delta = 1$. Again, note that $k_\cF = n + 1$. Therefore, $\cH_2^*$ admits $(n + 1, n^8, 3n^2, 2, 1, L + 1)$-Pfaffian piece-wise structure. Applying Theorem \ref{thm:piecewise-pfaffian} now gives that $\Pdim(\cH_2)=\cO(n^4L^2+n^2L\log L+L\log 2^{4n})=\cO(n^4L^2)$.
\end{proof}

\begin{customthm}{\ref{thm:pdim-clustering-3}}
Let $\cH_3$ be a class of functions
    $$\cH_3=\{u^{\alpha}_3:(S, \boldsymbol{\delta})\mapsto u(A^{\alpha}_3(S, \boldsymbol{\delta})) \mid  \alpha_i\in\bbR\cup\{-\infty,\infty\}\setminus\{0\}\}$$
mapping clustering instances $(S, \boldsymbol{\delta})$ to $[0,1]$ by using merge functions from class $\cM_3$. Then $\Pdim(\cH_3)=\cO(n^4L^2)$. 
\end{customthm}

\begin{proof}
    Fix the clustering instance $(S, \boldsymbol{\delta})$. Suppose $A,B\subseteq S$ and $A',B'\subseteq S$ are two candidate clusters at some merge step of the algorithm. Then $A,B$ is preferred for merging over $A',B'$ iff
    $$\left(\frac{1}{|A||B|}\sum_{a\in A,b\in B}\Pi_{i\in[L]}(\delta_i(a,b))^{\alpha_i}\right)^{1/\sum_i\alpha_i}\le \left(\frac{1}{|A'||B'|}\Pi_{i\in[L]}(\delta_i(a,b))^{\alpha_i}\right)^{1/\sum_i\alpha_i},$$
    or equivalently,
    $$\frac{1}{|A||B|}\sum_{a\in A,b\in B}\Pi_{i\in[L]}(\delta_i(a,b))^{\alpha_i}\le \frac{1}{|A'||B'|}\sum_{a\in A',b\in B'}\Pi_{i\in[L]}(\delta_i(a,b))^{\alpha_i}.$$
    For distinct choices of the point sets $A,B,A',B'$, we get at most ${2^n\choose 2}^2\le 2^{4n}$ distinct boundary conditions across which the merge decision at any step of the algorithm may change.

    We next  show that the boundary functions constitute a Pfaffian system in $\alpha_1,\dots,\alpha_L$ and bound its complexity. For each pair of points $a,b\in S$, define $h_{a,b}(\alpha_1,\dots,\alpha_L) := \Pi_{i\in[L]}(\delta_i(a,b))^{\alpha_i}$.  Note that $\frac{\partial h}{\partial \alpha_i} :=\ln \delta_i(a,b) h_{a,b}(\alpha_1,\dots,\alpha_L)$, and thus these functions form a Pfaffian chain of chain length $n^2$ and Pfaffian degree 1. The boundary conditions can  be all written in terms of the functions $\{h_{a,b}\}_{a,b\in S}$, meaning that $k_\cG = 2^{4n}$, and $\Delta = 1$. Again, note that $k_\cF = n + 1$. Therefore $\cH^*_3$ admits $(n + 1, 2^{4n}, n^2, 1, 1, L)$-Pfaffian piece-wise structure. Applying Theorem \ref{lm:pfaffian-algorithm} now gives that $\Pdim(\cH_2)=\cO(n^4L^2)$.
\end{proof}

\subsection{Additional background and omitted proofs for Section \ref{sec:logistic-distributional}}
\begin{theorem}[{\color{black}\citealp{rosset2004following}}] \label{thm:rosset1}
    Given a problem instance $P = (X, y, X_{\text{val}}, y_{\text{val}}) \in \Pi_{m, p}$, for sufficiently small  $\epsilon>0$, if we use Algorithm \ref{alg:rosset-lasso} (\ref{alg:rosset-ridge}) to approximate the solution $\hat{\beta}_{(X, y)}(\lambda)$ of RLR under $\ell_1$ ($\ell_2$) constraint by $\beta^{(\epsilon)}_{(X, y)}(\lambda)$ then there is a uniform bound $O(\epsilon^2)$ on the error $\|\hat{\beta}_{(X, y)}(\lambda) - \beta^{(\epsilon)}_{(X, y)}(\lambda)\|_2$ for any $\lambda\in[\lambda_{\min},\lambda_{\max}]$. 

For any $\lambda \in [\lambda_t, \lambda_{t + 1}]$, where $\lambda_k = \lambda_{\min} + k\epsilon$, the approximate solution $\beta^{(\epsilon)}(\lambda)$ is calculated by
\begin{equation*}
    \beta^{(\epsilon)}_{(X, y)}(\lambda) = \beta_t^{(\epsilon)}-\left[\nabla^2 l\left(\beta_t^{(\epsilon)}, (X, y)\right)_{\mathcal{A}}\right]^{-1} \cdot\left[\nabla l\left(\beta_t^{(\epsilon)}, (X, y)\right)_{\mathcal{A}}+ \lambda\operatorname{sgn}\left(\beta_t^{(\epsilon)}\right)_{\mathcal{A}}\right] = a_t \lambda + b_t,
\end{equation*}
if we use Algorithm \ref{alg:rosset-lasso} for RLR under $\ell_1$ constraint, or
\begin{equation*}
    \beta^{(\epsilon)}_{(X, y)}(\lambda)= \beta_t^{(\epsilon)}-\left[\nabla^2 l\left(\beta_t^{(\epsilon)}, (X, y)\right)+2\lambda_{t + 1}I\right]^{-1} \cdot\left[\nabla l\left(\beta_t^{(\epsilon)}, (X, y)\right)+ 2\lambda \beta_t^{(\epsilon)}\right] = a'_t \lambda + b'_t,
\end{equation*}
if we use Algorithm \ref{alg:rosset-ridge} for RLR under $\ell_2$ constraint.
\end{theorem}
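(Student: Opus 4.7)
The theorem has two independent parts: the piecewise-affine structure $\beta^{(\epsilon)}_{(X,y)}(\lambda)=a_t\lambda+b_t$ on $[\lambda_t,\lambda_{t+1}]$, and the uniform approximation bound $O(\epsilon^2)$. My plan is to verify the first by direct algebraic inspection of the update rules, and the second by an inductive argument combining the implicit-function description of the true regularization path $\hat\beta_{(X,y)}(\lambda)$ with the quadratic convergence of Newton's method; these are the two classical ingredients of a predictor--corrector path-following analysis.

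For the affine structure, observe that both Algorithms~\ref{alg:rosset-lasso} and~\ref{alg:rosset-ridge} freeze every ingredient at the current iterate $\beta^{(\epsilon)}_t$ (Hessian, sign pattern, active set, $\beta^{(\epsilon)}_t$ itself) and let only $\lambda_{t+1}$ vary. Substituting a generic $\lambda\in[\lambda_t,\lambda_{t+1}]$ in place of $\lambda_{t+1}$ in the $\ell_1$ update yields immediately $a_t = -[\nabla^2 l(\beta^{(\epsilon)}_t)_{\mathcal{A}}]^{-1}\operatorname{sgn}(\beta^{(\epsilon)}_t)_{\mathcal{A}}$ and $b_t = \beta^{(\epsilon)}_{t,\mathcal{A}} - [\nabla^2 l(\beta^{(\epsilon)}_t)_{\mathcal{A}}]^{-1}\nabla l(\beta^{(\epsilon)}_t)_{\mathcal{A}}$ on $\mathcal{A}$, with zero coordinates off $\mathcal{A}$. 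The $\ell_2$ case is slightly subtler because $\lambda$ appears inside the matrix inverse $[\nabla^2 l+2\lambda_{t+1}I]^{-1}$; I would define $a'_t,b'_t$ from the evaluation at $\lambda_{t+1}$ and absorb the resulting $O(\epsilon)$ discrepancy along the interval into the second part of the argument.

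For the error bound, I would differentiate the KKT condition implicitly to obtain an ODE for the true path, $\hat\beta'(\lambda)=-[\nabla^2 l(\hat\beta)+\lambda R''(\hat\beta)]^{-1}R'(\hat\beta)$, valid on any maximal interval of constant active set (for $\ell_1$) or throughout (for $\ell_2$). A second-order Taylor expansion gives $\hat\beta(\lambda_{t+1})=\hat\beta(\lambda_t)+\epsilon\,\hat\beta'(\lambda_t)+O(\epsilon^2)$. Crucially, each algorithmic step is exactly one Newton iterate for the nonlinear equation $\nabla l(\beta)+\lambda_{t+1}R'(\beta)=0$ started at $\beta^{(\epsilon)}_t$. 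Using the smoothness and local strong convexity that are implicit in the setting, Newton's quadratic-convergence estimate yields $\|\beta^{(\epsilon)}_{t+1}-\hat\beta(\lambda_{t+1})\|_2\le C\,\|\beta^{(\epsilon)}_t-\hat\beta(\lambda_{t+1})\|_2^{2}$. Combining with the triangle inequality $\|\beta^{(\epsilon)}_t-\hat\beta(\lambda_{t+1})\|_2\le\|\beta^{(\epsilon)}_t-\hat\beta(\lambda_t)\|_2+M\epsilon$ and the exact base case $\beta^{(\epsilon)}_0=\hat\beta_{(X,y)}(\lambda_{\min})$, an induction on $t$ preserves the invariant $\|\beta^{(\epsilon)}_t-\hat\beta(\lambda_t)\|_2\le C'\epsilon^{2}$ whenever $\epsilon$ is small enough that $C(C'\epsilon^2+M\epsilon)^2\le C'\epsilon^2$, i.e.\ $\epsilon=O(1/(CM^2))$, which is precisely the ``small enough $\epsilon$'' qualifier in the statement.

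The main obstacle is handling active-set changes in the $\ell_1$ regime, where the true path has kinks and $\hat\beta'(\lambda)$ is discontinuous. My approach is to impose a mild transversality assumption (each coordinate of the true path crosses zero with non-vanishing derivative, and the gradient of inactive coordinates crosses the threshold $\lambda$ transversally), which makes breakpoints isolated. For $\epsilon$ small, the algorithm's inclusion/exclusion bookkeeping then lags the true active set by at most one step; on that single mismatched step the offending coordinate has true magnitude $O(\epsilon)$, so it contributes only $O(\epsilon^2)$ to the squared error and the induction invariant survives. Tying the pruning threshold $\delta$ in Algorithm~\ref{alg:rosset-lasso} to $\epsilon$ at the right rate (e.g.\ $\delta=\Theta(\epsilon)$) ensures this match; once the active sets re-align, the Newton-based induction of the previous paragraph resumes and the uniform $O(\epsilon^2)$ bound propagates across the whole interval $[\lambda_{\min},\lambda_{\max}]$.
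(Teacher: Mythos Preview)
The paper does not prove this statement at all: Theorem~\ref{thm:rosset1} is explicitly attributed to \cite{rosset2004following} and \cite{balcan2024new} and is invoked as a black box in the surrogate-loss construction and in the regret analysis of Theorem~\ref{thm:logistic-dispersion}. There is therefore no ``paper's proof'' to compare your proposal against; your sketch is an attempt to reprove a cited external result.

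That said, your outline is the standard predictor--corrector argument underlying Rosset's result, and the high-level decomposition (affine structure by inspection; $O(\epsilon^2)$ via implicit differentiation of the KKT system plus Newton quadratic contraction) is correct. Two small corrections are worth noting. First, your remark that the $\ell_2$ case is ``subtler because $\lambda$ appears inside the matrix inverse'' is a misreading: the inverse in the stated formula is $[\nabla^2 l(\beta_t^{(\epsilon)})+2\lambda_{t+1}I]^{-1}$ with the \emph{fixed} grid point $\lambda_{t+1}$, not the variable $\lambda$, so the map $\lambda\mapsto\beta^{(\epsilon)}_{(X,y)}(\lambda)$ is trivially affine on $[\lambda_t,\lambda_{t+1}]$ in both the $\ell_1$ and $\ell_2$ cases, and no $O(\epsilon)$ absorption is needed. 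Second, your treatment of active-set changes relies on a transversality assumption and a particular coupling $\delta=\Theta(\epsilon)$ that are not stated in the theorem; these are genuine regularity hypotheses that a complete proof must either assume explicitly or derive from the problem data, and the one-step lag argument would need to control not just the magnitude of the mismatched coordinate but also the perturbation it induces in the restricted Hessian inverse. These are the places where a full proof requires real work beyond your sketch, but the skeleton is sound.
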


\begin{algorithm}
\caption{Approximate incremental quadratic algorithm for RLR with $\ell_1$ penalty, \citealp{rosset2004following}}\label{alg:rosset-lasso}
Set $\beta^{(\epsilon)}_0 = \hat{\beta}_{(X,y)}(\lambda_{\min})$, $t = 0$, small constant $\delta \in \bbR_{>0}$, and $\cA = \{j \mid [\hat{\beta}_{(X,y)}(\lambda_{\min})]_j \neq 0\}.$\;

\While{$\lambda_t < \lambda_{\max}$}{
  $\lambda_{t + 1} = \lambda_t + \epsilon$ \;
  
  $ \left(\beta^{(\epsilon)}_{t + 1}\right)_\cA = \left(\beta_t^{(\epsilon)}\right)_\cA-\left[\nabla^2 l\left(\beta_t^{(\epsilon)}, (X, y)\right)_{\mathcal{A}}\right]^{-1} \cdot\left[\nabla l\left(\beta_t^{(\epsilon)}, (X, y)\right)_{\mathcal{A}}+ \lambda_{t + 1}\operatorname{sgn}\left(\beta_t^{(\epsilon)}\right)_{\mathcal{A}}\right]$\;

    $\left(\beta^{(\epsilon)}_{t + 1}\right)_{-\cA} = \Vec{0}$\; 

  $\cA = \cA \cup \{j \neq \cA \mid \nabla l(\beta_{t + 1}^{(\epsilon)}, (X, y)) > \lambda_{t + 1}\}$\;

  $\cA = \cA \setminus \{j \in \cA \mid \abs{\beta_{t + 1, j}^{(\epsilon)}} < \delta\}$\; 

  $t = t + 1$\;
}
\end{algorithm}

\begin{algorithm}
\caption{Approximate incremental quadratic algorithm for RLR with $\ell_2$ penalty, \citealp{rosset2004following}} \label{alg:rosset-ridge}
Set $\beta^{(\epsilon)}_0 = \hat{\beta}_{(X,y)}(\lambda_{\min})$, $t = 0$.

\While{$\lambda_t < \lambda_{\max}$}{
  $\lambda_{t + 1} = \lambda_t + \epsilon$ \;

   $ \beta^{(\epsilon)}(\lambda)= \beta_t^{(\epsilon)}-\left[\nabla^2 l\left(\beta_t^{(\epsilon)}, (X, y)\right)+2\lambda_{t + 1}I\right]^{-1} \cdot\left[\nabla l\left(\beta_t^{(\epsilon)}, (X, y)\right)+ 2\lambda_{t + 1} \beta_t^{(\epsilon)}\right]$\;

   $t = t + 1$
}
\end{algorithm}

\section{Additional background and omitted proofs for Section \ref{sec:online-learning}} \label{app:online-learning}

We record here some fundamental results from the theory of Pfaffian functions (also known as Fewnomial theory) which will be needed in establishing our online learning results. The following result is a Pfaffian analogue of  Bezout's theorem from algebraic geometry, useful in bounding the multiplicity of intersections of Pfaffian hypersurfaces.

\begin{theorem}[{\color{black}\citealp{khovanskiui1991fewnomials,gabrielov1995multiplicities}}]\label{thm:pfaffian-intersections}
    Consider a system of equations $g_1({\boldsymbol{x}})=\dots=g_n({\boldsymbol{x}})=0$ where $g_i({\boldsymbol{x}})=P_i({\boldsymbol{x}},f_1({\boldsymbol{x}}),\dots,f_q({\boldsymbol{x}}))$ is a polynomial of degree at most $d_i$ in ${\boldsymbol{x}}\in\bbR^n$ and $f_1,\dots,f_q$ are a sequence of functions that constitute a Pfaffian chain of length $q$ and Pfaffian degree at most $M$. Then the number of non-degenerate solutions of this system does not exceed
    $$2^{q(q-1)/2}d_1\dots d_n(\min\{q,n\}M+d_1+\dots+d_n-n+1)^q.$$
\end{theorem}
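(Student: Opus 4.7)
The plan is to prove this bound by induction on the Pfaffian chain length $q$, following the classical strategy of Khovanskii. The base case $q=0$ is just the Bezout theorem: with no Pfaffian functions present, each $g_i$ is a polynomial of degree at most $d_i$ in ${\boldsymbol{x}}\in\bbR^n$, so the number of non-degenerate common zeros is at most $d_1\cdots d_n$, which agrees with the formula (the $(\cdot)^q$ factor being $1$ and $2^{q(q-1)/2}=1$).

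For the inductive step, assume the result holds for chains of length at most $q-1$. The idea is to drop one equation, say $g_n=0$, and study the curve $\Gamma\subset\bbR^n$ cut out by $g_1=\dots=g_{n-1}=0$, which (generically, after a small transverse perturbation) is a smooth one-dimensional real Pfaffian variety. The non-degenerate solutions of the original system are then precisely the transverse zeros of $g_n|_\Gamma$. To bound these, I would invoke the generalized Rolle's theorem for Pfaffian curves: between any two consecutive zeros of $g_n$ on a connected component of $\Gamma$, there must lie a critical point of $g_n|_\Gamma$. Such a critical point is a solution of an auxiliary system obtained by contracting $dg_n$ against the tangent direction to $\Gamma$, where the tangent is expressed through Jacobians of $g_1,\dots,g_{n-1}$ and the Pfaffian chain-rule relations $\partial_i f_j=P_{i,j}({\boldsymbol{x}},f_1,\dots,f_j)$ of Definition \ref{def:pfaffian-chain}.

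The crucial bookkeeping is then: (i) each use of a Pfaffian derivative relation inflates a polynomial degree by at most $M$, and summing these contributions across the $n$ equations produces the polynomial factor $(\min\{q,n\}M+d_1+\dots+d_n-n+1)$ in the bound; (ii) through one full round of the Rolle reduction the top function $f_q$ of the chain can be eliminated algebraically, so the resulting auxiliary system effectively lives in a Pfaffian chain of length $q-1$, to which the inductive hypothesis applies. Combining these two observations yields the stated bound; the factor $2^{q(q-1)/2}$ accumulates because each inductive step at most doubles the count of connected components of the intermediate Pfaffian set on which Rolle's theorem is applied (compounded geometrically as $2^{0+1+\dots+(q-1)}=2^{q(q-1)/2}$).

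The main technical obstacle is the precise combinatorial-geometric bookkeeping needed to track how the degrees and the number of connected components evolve under the Rolle reduction, together with the genericity and perturbation arguments ensuring that $\Gamma$ is smooth enough that Rolle's inequalities are not lost at degenerate intersections. This is exactly the heart of Khovanskii's fewnomial theory \cite{khovanskiui1991fewnomials, gabrielov1995multiplicities}, and the cleanest route to the sharp constants is to follow the original inductive bookkeeping in those references rather than re-derive the formula from scratch; in this paper the result is invoked as a black box to bound intersections of Pfaffian hypersurfaces with axis-parallel lines in Theorem \ref{thm:alg-hyp}.
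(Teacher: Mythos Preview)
The paper does not prove this theorem at all: it is stated as a cited result from \cite{khovanskiui1991fewnomials,gabrielov1995multiplicities} and invoked as a black box (exactly as you note in your final sentence), so there is no paper-side proof to compare against. Your sketch---induction on the chain length $q$ with Bezout as the base case, the generalized Rolle lemma on the one-dimensional Pfaffian curve $\Gamma=\{g_1=\dots=g_{n-1}=0\}$ to trade zeros of $g_n|_\Gamma$ for critical points, and elimination of $f_q$ to drop the chain length---is indeed the standard Khovanskii argument, and your accounting for the $2^{q(q-1)/2}$ and the degree factor is the right heuristic. For the purposes of this paper no proof is expected; citing the references suffices.
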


\subsection{Online learning for data-driven agglomerative hierarchical clustering}
\noindent We will now present some useful lemmas for establishing Theorem \ref{thm:linkage-dispersion}. The following lemma generalizes Lemma 25 of \cite{balcan2020semi}.

\begin{lemma}
    Let $X_1,\dots,X_n$ be a finite collection of independent random variables each having densities upper bounded by $\kappa$. The random variable $Y=\sum_{i=1}^n\beta_iX_i$ for some fixed scalars $\beta_1,\dots,\beta_n$ with $\sum_{i=1}^n\beta_i=1$ has density $f_Y$ satisfying $f_Y(y)\le\kappa$ for all $y$.
\end{lemma}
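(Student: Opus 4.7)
The plan is to exploit the convolution structure of the density of a sum of independent random variables. Since the $X_i$ are independent, the density of $Y$ factors as a convolution, $f_Y = f_{\beta_1 X_1} \ast f_{\beta_2 X_2} \ast \cdots \ast f_{\beta_n X_n}$. By the change-of-variables formula, each summand satisfies $f_{\beta_i X_i}(z) = \frac{1}{|\beta_i|} f_{X_i}(z/\beta_i)$, so $\|f_{\beta_i X_i}\|_\infty \le \kappa / |\beta_i|$.

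The core estimate is the instance of Young's inequality stating that convolution with a probability density cannot increase the sup norm: if $g$ is a probability density and $h$ is bounded measurable, then $(g \ast h)(y) = \int g(z) h(y-z)\, dz \le \|h\|_\infty \int g(z)\, dz = \|h\|_\infty$. Iterating this estimate over the $n-1$ convolution factors other than $\beta_i X_i$ (for any fixed $i$) gives
$$
\|f_Y\|_\infty \;\le\; \|f_{\beta_i X_i}\|_\infty \;\le\; \frac{\kappa}{|\beta_i|}.
$$
Since $i$ is arbitrary, $\|f_Y\|_\infty \le \kappa / \max_i |\beta_i|$. One can bypass Young's inequality entirely and arrive at the same bound by the equivalent route of conditioning on $(X_j)_{j \neq i}$, noting that $Y$ is then a deterministic shift of $\beta_i X_i$, applying the scalar density bound, and taking the expectation of the resulting conditional density.

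The remaining step is to upgrade $\kappa / \max_i |\beta_i|$ to $\kappa$ using the hypothesis $\sum_i \beta_i = 1$, and this is where I expect the main technical obstacle to lie. The sum constraint alone does not force $\max_i |\beta_i| \ge 1$: for instance, if all $\beta_i = 1/n$ then the density of the average of two i.i.d.\ $\mathrm{Unif}[0, 1/\kappa]$ random variables peaks at $2\kappa$, exceeding the target bound. To close the gap, I would either invoke additional structure on $\beta$ specific to the application (for example a normalization ensuring some $|\beta_i| \ge 1$, or a reduction to the case where $\beta$ is a standard basis vector so that $Y = X_{i^\ast}$ is trivially bounded by $\kappa$), or else replace the statement by the weaker but always-true bound $\|f_Y\|_\infty \le \kappa / \max_i |\beta_i|$, which the convolution argument delivers unconditionally and which still yields $O(\kappa)$ density bounds whenever $\max_i |\beta_i|$ is bounded below by an absolute constant in the downstream dispersion analysis.
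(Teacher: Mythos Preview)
Your diagnosis is correct, and in fact sharper than the paper's own argument. The paper proceeds by induction on $n$: it writes $Y=\beta_1 X_1+(1-\beta_1)X'$ with $X'=\frac{1}{1-\beta_1}\sum_{i\ge 2}\beta_i X_i$, invokes the inductive hypothesis to bound $f_{X'}\le\kappa$, and then asserts
\[
f_Y(y)=\int f_{X'}\!\left(\frac{y-\beta_1 x}{1-\beta_1}\right)f_{X_1}(x)\,dx\le\kappa.
\]
This line is missing the Jacobian factor $1/|1-\beta_1|$ that arises from the change of variables $z\mapsto (1-\beta_1)z$; with the factor restored the bound becomes $\kappa/|1-\beta_1|$, which is exactly the phenomenon you flagged. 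Your counterexample with $n=2$, $\beta_1=\beta_2=1/2$ and $X_i\sim\mathrm{Unif}[0,1/\kappa]$ (giving a triangular density for $Y$ peaking at $2\kappa$) shows the lemma is false as stated, so no proof strategy can close the gap without an additional hypothesis.

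Methodologically, your route via Young's inequality (or equivalently conditioning on all but one coordinate) and the paper's inductive route are essentially the same computation unwound differently; both yield the honest bound $\|f_Y\|_\infty\le \kappa/\max_i|\beta_i|$. Your suggestion to either carry this weaker bound forward or to locate an application-specific guarantee that some $|\beta_i|\ge 1$ is the right repair; the paper's downstream use of the lemma (bounding the density of $\delta_\beta(a,b)=\sum_i\beta_i\delta_i(a,b)$ with $\beta\in\Delta(L)$) would need to absorb the extra $1/\max_i\beta_i$ factor, which in the simplex can be as bad as $L$.
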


\begin{proof}
    We proceed by induction on $n$. First consider $n=1$. Clearly $Y=X_1$ and the conclusion follows from the assumption that $X_1$ has density upper bounded by $\kappa$.

    Now suppose $n>1$. We have $Y=\beta_1X_1+\beta_2X_2+\dots+\beta_nX_n=\beta_1X_1+(1-\beta_1)X'$, where $X'=\frac{1}{1-\beta_1}\sum_{i=2}^n\beta_iX_i$. By the inductive hypothesis, $X'$ has a density $f_{X'}$ which is upper bounded by $\kappa$. Let $f_{X_1}$ denote the density of $X_1$. Noting $X'=\frac{Y-\beta_1X_1}{1-\beta_1}$ and using the independence of $X'$ and $X_1$, we get
    $$f_{Y}(y)=\int_{-\infty}^\infty f_{X'}\left(\frac{y-\beta_1 x}{1-\beta_1}\right)f_{X_1}(x)dx \le \int_{-\infty}^\infty \kappa f_{X_1}(x)dx=\kappa.$$
    \noindent This completes the induction step.
\end{proof}

\noindent We will now present a useful algebraic lemma for establishing Theorem \ref{thm:linkage-dispersion}.

\begin{lemma}\label{lem:algebraic}
    If $\alpha>0$, $a,b,c>0$, and $a^\alpha+b^\alpha-c^\alpha>0$, then

    $$\frac{ a^\alpha \ln a+ b^\alpha\ln b -  c^\alpha\ln c}{a^\alpha+b^\alpha-c^\alpha}\le \frac{1}{\alpha}+\ln\max\{a,b,c\}.$$
\end{lemma}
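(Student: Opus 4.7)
The plan is to reduce the inequality to a cleaner dimensionless form by normalizing by $M := \max\{a,b,c\}$, and then to apply convexity and monotonicity properties of the function $h(x) := x\ln x - x$.

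First I would set $\tilde a = a/M$, $\tilde b = b/M$, $\tilde c = c/M \in (0,1]$. Using $a^\alpha = M^\alpha \tilde a^\alpha$ and $\ln a = \ln M + \ln \tilde a$, the left- and right-hand sides each pick up a clean factor involving $\ln M$, and these $\ln M$ contributions cancel after subtracting $\ln M$ from both sides. After this reduction, the desired inequality becomes
\begin{equation*}
\frac{\tilde a^\alpha \ln \tilde a + \tilde b^\alpha \ln \tilde b - \tilde c^\alpha \ln \tilde c}{\tilde a^\alpha + \tilde b^\alpha - \tilde c^\alpha} \le \frac{1}{\alpha},
\end{equation*}
subject to $\tilde a,\tilde b,\tilde c\in(0,1]$ and $\tilde a^\alpha + \tilde b^\alpha - \tilde c^\alpha > 0$.

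Next I would substitute $A = \tilde a^\alpha$, $B = \tilde b^\alpha$, $C = \tilde c^\alpha$, so that $\ln\tilde a = \frac{1}{\alpha}\ln A$ and similarly for the others. Multiplying out by $\alpha(A+B-C) > 0$, the inequality becomes equivalent to
\begin{equation*}
h(A) + h(B) \le h(C), \qquad h(x) := x\ln x - x,
\end{equation*}
whenever $A,B,C \in (0,1]$ satisfy $A + B > C$. This is the core analytic claim.

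To prove this claim, I would use two properties of $h$ on $(0,1]$: (i) $h$ is convex with $h(0^+) = 0$, so it is super-additive on non-negative arguments, i.e.\ $h(A) + h(B) \le h(A+B)$ (the standard argument writes $A = \tfrac{A}{A+B}(A+B) + \tfrac{B}{A+B}\cdot 0$ and uses convexity); and (ii) $h'(x) = \ln x \le 0$ on $(0,1]$, so $h$ is decreasing on $(0,1]$. For fixed $C$, the quantity $h(A)+h(B)$ decreases in each of $A,B$, so its supremum over the feasible set is approached as $(A,B)$ tends to the boundary $A+B \to C^+$ (where the constraint becomes active) or as one of $A,B$ tends to $0$ with the other near $C$. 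In the first regime super-additivity gives $h(A)+h(B)\le h(A+B) \to h(C)$; in the second regime $h(A)+h(B)\to 0 + h(C) = h(C)$. Either way the supremum is $h(C)$, which yields the desired inequality.

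The main obstacle I anticipate is verifying that the optimization above is not spoiled when $A+B > 1$, since $h$ is only monotone on $(0,1]$ and super-additivity alone does not directly compare $h(A+B)$ with $h(C)$ when $A+B > 1 \ge C$. I would resolve this by the case analysis just sketched: rather than comparing $h(A+B)$ and $h(C)$ globally, I argue that the maximum of $h(A)+h(B)$ over the feasible region is attained (in the limit) along the edge $A+B = C$, on which $A+B \le C \le 1$ and super-additivity applies cleanly. Everything else is routine bookkeeping.
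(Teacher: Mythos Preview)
Your reduction and the core inequality $h(A)+h(B)\le h(C)$ for $A,B,C\in(0,1]$ with $A+B>C$ and $h(x)=x\ln x - x$ are correct, and your optimization argument goes through: since $h$ is decreasing on $(0,1]$, the supremum of $h(A)+h(B)$ over the feasible set is approached on the closure edge $A+B=C$ (or its corners $(0,C),(C,0)$), where super-additivity (from convexity and $h(0)=0$) gives $h(A)+h(B)\le h(A+B)=h(C)$; the potentially troublesome regime $A+B>1$ never carries the supremum, so no separate comparison of $h(A+B)$ with $h(C)$ is needed there. Your write-up would benefit from making this monotone-path-to-the-boundary step explicit rather than asserting where the supremum lives, but the idea is sound.

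This is a genuinely different argument from the paper's. The paper does not normalize; it splits directly on whether $c=\max\{a,b,c\}$. When $c$ is the maximum, it bounds the numerator termwise by replacing $\ln a,\ln b$ with $\ln c$. When $c\le a$ (say), it introduces the auxiliary function $f(x)=x^{\alpha}\ln(K/x)$ with $K=\max\{a,b\}\,e^{1/\alpha}$, observes that $f$ is increasing on $(0,Ke^{-1/\alpha}]\supseteq\{a,b,c\}$ and nonnegative there, and deduces $f(c)\le f(a)+f(b)$, which rearranges to the claim with $\ln K=\tfrac{1}{\alpha}+\ln\max\{a,b\}$. Your approach isolates a single clean structural fact (super-additivity of $x\ln x - x$) and handles all cases uniformly after normalization; the paper's approach is shorter once one spots the right auxiliary function, but is more ad hoc and requires the two-case split. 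Either route is elementary and complete.
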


\begin{proof}
    We consider two cases. First suppose that $c>a$ and $c>b$. We have that 

    \begin{align*}
        \frac{ a^\alpha \ln a+ b^\alpha\ln b -  c^\alpha\ln c}{a^\alpha+b^\alpha-c^\alpha}\le \frac{ a^\alpha \ln c+ b^\alpha\ln c -  c^\alpha\ln c}{a^\alpha+b^\alpha-c^\alpha}\le \ln c
    \end{align*}

    \noindent in this case.

    Now suppose $c\le a$ (the case $c\le b$ is symmetric). Observe that for $\alpha>0$, the function $f(x)=x^{\alpha}\ln\frac{K}{x}$ is monotonically increasing for $x\le Ke^{-1/\alpha}$ when $K,\alpha,x>0$. This implies for $K=\max\{a,b\}e^{1/\alpha}$,
    $$c^{\alpha}\ln\frac{K}{c}\le a^{\alpha}\ln\frac{K}{a} \le a^{\alpha}\ln\frac{K}{a}+ b^{\alpha}\ln\frac{K}{b},$$
    or, equivalently,
    
    $$a^\alpha \ln a+ b^\alpha\ln b -  c^\alpha\ln c\le \ln K(a^\alpha+b^\alpha-c^\alpha).$$

    \noindent Since, $a^\alpha+b^\alpha-c^\alpha>0$, we further get

    $$\frac{ a^\alpha \ln a+ b^\alpha\ln b -  c^\alpha\ln c}{a^\alpha+b^\alpha-c^\alpha}\le \ln K = \frac{1}{\alpha}+\ln \max\{a,b\}.$$
\end{proof}

\noindent To establish Theorem \ref{thm:linkage-dispersion-geometric}, we first present a useful lemma, and restate a useful result from \cite{balcan2021data}.

\begin{lemma}\label{lem:power-alpha-bounded}
    Suppose $X$ is a real-valued random variable taking values in $[0,M]$ for some $M\in\bbR^+$ and suppose its probability density is upper-bounded by $\kappa$. Then, $Y=X^{\alpha}$ for $\alpha\in[\alpha_{\min},1]$ for some $\alpha_{\min}>0$ takes values in $[0,M]$ with a $\kappa'$-bounded density with $\kappa'\le \frac{\kappa}{\alpha_{\min}}\max\{1,M^{\frac{1}{\alpha_{\min}}-1}\}$.
\end{lemma}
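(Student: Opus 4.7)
} The plan is to obtain the density of $Y = X^\alpha$ by a direct change of variables, and then to bound the resulting Jacobian factor uniformly over the range of $Y$ and over $\alpha \in [\alpha_{\min}, 1]$. Since $\alpha > 0$, the map $x \mapsto x^\alpha$ is a smooth, strictly increasing bijection from $[0, M]$ onto $[0, M^\alpha]$, with inverse $y \mapsto y^{1/\alpha}$. The one-dimensional change of variables formula therefore gives
\[
f_Y(y) \;=\; f_X(y^{1/\alpha}) \cdot \frac{d}{dy}\bigl(y^{1/\alpha}\bigr) \;=\; \frac{1}{\alpha}\, y^{1/\alpha - 1}\, f_X(y^{1/\alpha}),
\]
for $y \in (0, M^\alpha]$ (the value at the endpoint $y=0$ can be handled trivially, or ignored since it is a single point).

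Next I would use the hypothesis $f_X \le \kappa$ to get $f_Y(y) \le \frac{\kappa}{\alpha}\, y^{1/\alpha - 1}$, and then take the supremum over the admissible range of $y$. Since $\alpha \le 1$ forces $1/\alpha - 1 \ge 0$, the factor $y^{1/\alpha - 1}$ is non-decreasing in $y$, so its supremum over $y \in [0, M^\alpha]$ is attained at $y = M^\alpha$. The naive bound $y \le M$ (valid whenever $M \ge 1$, since then $M^\alpha \le M$) or $y \le 1$ (valid whenever $M \le 1$, since then $M^\alpha \le 1$) gives
\[
f_Y(y) \;\le\; \frac{\kappa}{\alpha}\,\max\!\bigl\{1,\, M^{1/\alpha - 1}\bigr\}.
\]

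Finally, I would maximize over $\alpha \in [\alpha_{\min}, 1]$. The prefactor $1/\alpha$ is maximized at $\alpha = \alpha_{\min}$. For the exponential factor, when $M \ge 1$ the exponent $1/\alpha - 1$ is non-negative and largest at $\alpha = \alpha_{\min}$, so $M^{1/\alpha - 1} \le M^{1/\alpha_{\min} - 1}$; when $M < 1$ the quantity $M^{1/\alpha - 1}$ stays below $1$, so the ``$1$'' branch of the max dominates. Combining both cases yields
\[
f_Y(y) \;\le\; \frac{\kappa}{\alpha_{\min}}\,\max\!\bigl\{1,\, M^{1/\alpha_{\min} - 1}\bigr\},
\]
which is exactly $\kappa'$ as stated. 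The range statement $Y \in [0, M]$ follows from $M^\alpha \le M$ when $M \ge 1$, and is understood to be replaced by $[0, M^\alpha]$ otherwise (the density bound argument is insensitive to this distinction). There is no real obstacle here: the argument is a routine one-variable change of variables followed by elementary monotonicity in $\alpha$; the only mild subtlety is the case split on whether $M$ is above or below $1$, needed because the sign of $\log M$ determines which endpoint of $[\alpha_{\min}, 1]$ maximizes $M^{1/\alpha - 1}$.
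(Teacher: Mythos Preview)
Your proposal is correct and follows essentially the same approach as the paper: both derive $f_Y(y)\le\frac{\kappa}{\alpha}y^{1/\alpha-1}$ by a change of variables (the paper phrases it via the CDF and Leibniz's rule, which is the same computation), and then bound the Jacobian factor. The only cosmetic difference is that the paper splits cases on whether $y\le 1$ or $y>1$, whereas you split on whether $M\le 1$ or $M\ge 1$; both organizations yield the same final bound.
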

\begin{proof}
    The cumulative density function for $Y$ is given by

    \begin{align*}
        F_Y(y) &= \Pr[Y\le y]=\Pr[X\le y^{1/\alpha}]\\
        &=\int_0^{y^{1/\alpha}}f_X(x)dx,
    \end{align*}

    \noindent where $f_X(x)$ is the probability density function for $X$. Using Leibniz’s rule, we can obtain the density function for $Y$ as
    \begin{align*}
        f_Y(y) &= \frac{d}{dy}F_Y(y)\\
        &=\frac{d}{dy}\int_0^{y^{1/\alpha}}f_X(x)dx\\
        &\le \kappa \frac{d}{dy}y^{1/\alpha}\\
        &=\frac{\kappa}{\alpha}y^{\frac{1}{\alpha}-1}.
    \end{align*}
    \noindent Now for $y\le 1$, $y^{\frac{1}{\alpha}-1}\le y\le 1$ and therefore $f_Y(y)\le \frac{\kappa}{\alpha_{\min}}$. Else, $y\le M$, and $f_Y(y)\le \frac{\kappa}{\alpha_{\min}}M^{\frac{1}{\alpha_{\min}}-1}$.
\end{proof}

\noindent We will also need the following result {\color{\COMMENTBXSEVENE}due to \citet{balcan2021data}} which is useful to establish dispersion when the discontinuities of the loss function are given by roots of an exponential equation in the parameter with random coefficients. {\color{\COMMENTBXSEVENE} We refer the readers to the original paper for a  proof. }

\begin{theorem}[{\color{black}\citealp{balcan2021data}}] Let $\phi(x) = \sum^n_{i=1} a_ie^{b_ix}$ be a random function, such that coefficients $a_i$ are real and of magnitude at most $R$, and distributed with joint density at most $\kappa$. Then for any interval $I$ of width at most $\epsilon$, $\Pr(\phi \text{ has a zero in }I)\le \tilde{O} (\epsilon)$ (dependence on $b_i, n, \kappa, R$ suppressed). \label{thm:exp-dispersion}
\end{theorem}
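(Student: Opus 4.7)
The plan is to exploit the fact that the coefficients $a_i$ have a joint density bounded by $\kappa$ and reduce the problem to a one-dimensional anti-concentration bound for a single coefficient, conditional on the others. Without loss of generality I will assume the exponents $b_1,\dots,b_n$ are pairwise distinct; otherwise exponentials with matching $b_i$ collapse into a single term whose coefficient is a linear combination of the original ones with density still bounded in terms of $\kappa$, reducing to a smaller $n$. Pick any index, say $n$, and condition on $(a_1,\dots,a_{n-1})$. Then $\phi(x)=0$ at some $x$ is equivalent to $a_n = h(x) := -\sum_{i=1}^{n-1} a_i\, e^{(b_i-b_n)x}$, so the event that $\phi$ has a zero somewhere in the interval $I$ corresponds exactly to $a_n \in h(I)$.

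Next I would bound the Lebesgue measure of the image $h(I)$. Since $h$ is real-analytic,
\[
\lambda(h(I)) \le |I|\cdot \sup_{x\in I}|h'(x)| = \epsilon\cdot \sup_{x\in I}\Bigl|\sum_{i=1}^{n-1} a_i(b_i-b_n)\, e^{(b_i-b_n)x}\Bigr|.
\]
Using $|a_i|\le R$ and the fact that $I$ lies in some a priori bounded range (the hidden dependence encoded in the $\tilde O$, see the remark below), this supremum is bounded by a constant $C(n,R,b_1,\dots,b_n,I)$. Therefore, for every fixed tuple $(a_1,\dots,a_{n-1})$, the set of bad values of $a_n$ has Lebesgue measure at most $C\,\epsilon$.

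To convert this into a probability bound, I would invoke Fubini's theorem together with the joint density bound $f(a_1,\dots,a_n)\le \kappa$:
\[
\Pr(\phi \text{ has a zero in } I)
= \!\!\int_{[-R,R]^{n-1}}\!\!\int_{h(I)\cap[-R,R]} f(a_1,\dots,a_n)\, da_n\, da_1\cdots da_{n-1}
\le \kappa\,(2R)^{n-1}\,C\,\epsilon,
\]
which is $\tilde O(\epsilon)$ with the stated dependencies absorbed. The one-line inequality uses only that the Lebesgue measure of the inner integration region is at most $C\epsilon$ uniformly in the outer variables, which is precisely what the previous step established.

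The main obstacle is twofold. First, the conditioning trick requires that at least one exponent $b_n$ is genuinely present in the chain (i.e.\ not a repeat), so the degenerate collapses must be handled up front by a preprocessing step. Second, and more subtly, the bound on $|h'(x)|$ depends on the location of $I$ through factors of $e^{|b_i-b_n|\cdot |x|}$, so one must either assume $I$ lies in a bounded window (which is the setting in which the theorem is used downstream, e.g.\ with $\alpha\in[\alpha_{\min},\alpha_{\max}]$ in the clustering applications) or absorb this into the suppressed constants. An alternative route would use the generalized Descartes rule of signs to cap the number of real zeros of $\phi$ at $n-1$, then invoke the implicit function theorem to express each zero as a smooth function of $(a_1,\dots,a_n)$ and apply a pushforward argument to each zero separately; I expect this to yield the same rate but with a cleaner dependence on $n$, at the cost of a more delicate verification of non-degeneracy of the Jacobian.
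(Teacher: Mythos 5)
This statement is imported from \cite{balcan2021data} and the paper itself contains no proof of it, so there is nothing internal to compare against; judged on its own merits, your argument is correct and is essentially the standard one used in the dispersion literature for roots of random polynomials and exponential sums: condition on all but one coefficient, rewrite ``$\phi$ has a zero in $I$'' as ``$a_n \in h(I)$'' for a smooth map $h$, bound $\lambda(h(I))\le |I|\sup_{I}|h'|$, and integrate against the $\kappa$-bounded joint density via Fubini. The one genuine wrinkle is the one you already flag: your constant depends on the location of $I$ through $\sup_{x\in I}e^{(b_i-b_n)x}$, so the bound as written is uniform only over intervals in a fixed bounded window. That suffices for every use of the theorem in this paper (the tuned parameters range over compact sets such as $[\alpha_{\min},\alpha_{\max}]$), and a location-uniform version would require replacing the crude $\sup|h'|\cdot|I|$ estimate with a more careful integration that exploits the constraint $|a_i|\le R$ forcing the conditioned coefficients to be small whenever the image interval meets $[-R,R]$ far from the origin; your proposed alternative via the generalized Descartes bound and a pushforward of each root is one way to organize that refinement.
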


\end{document}